\theoremstyle{plain}
\newtheorem{theorem}{Theorem}[section]
\newtheorem{proposition}[theorem]{Proposition}
\newtheorem{lemma}[theorem]{Lemma}
\newtheorem{corollary}[theorem]{Corollary}
\theoremstyle{definition}
\newtheorem{definition}[theorem]{Definition}
\newtheorem{assumption}[theorem]{Assumption}
\newcommand{\vertiii}[1]{{\left\vert\kern-0.25ex\left\vert\kern-0.25ex\left\vert #1 
    \right\vert\kern-0.25ex\right\vert\kern-0.25ex\right\vert}}
\DeclareSymbolFont{extraitalic}      {U}{zavm}{m}{it}
\DeclareMathSymbol{\stigma}{\mathord}{extraitalic}{168}
\renewcommand{\complement}{\mathrm{c}}
\newcommand{\MyRomannum}[1]{\int_to_Roman:n{#1}}
\icmltitlerunning{Hypothesis Testing for Generalized Thurstone Models}
\def \TV {{  \mathsf{TV} }}
\def \GTM {{ \mathcal{T}_F }}
\def \Fhat {{ \hat{ \mathsf{F} } }}
\def \sF {{  \mathsf{F} }}
\def\C{{\mathcal{C}}}
\def\Z{{\mathcal{Z}}}
\def\G{{\mathcal{G}}}
\def\Gt{\tilde{\mathcal{G}}}
\def\calEt{\tilde{\mathcal{E}}}
\def\calM{{\mathcal{M}}}
\def\P{{\mathbb{P}}}
\def\E{{\mathbb{E}}}
\def\var{{\mathrm{var}}}
\def\R{{\mathbb{R}}}
\def\N{{\mathbb{N}}}
\def\1{{\mathbf{1}}}
\def\0{{\mathbf{0}}}
\def\I{{\mathbbm{1}}}
\def \dmax {d_{\max}}
\def \what {{\hat{w}}}
\def \ws {{w^{*}}}
\def \nb {{\nabla}}
\def \constraintset {{ \mathcal{W}_b }}
\def \Ld {{L^{\dagger} }}
\def \calE {{\mathcal{E}}}
\def \calZ {{\mathcal{Z}}}
\def \phat {{\hat{p}}}
\def \calV {{\mathcal{V}}}
\def\Ber{{\mathsf{Bernoulli}}}
\DeclareMathOperator\diff{{d \!}}
\DeclareMathOperator*{\argmin}{arg\,min}
\renewcommand{\complement}{\mathrm{c}}
\newcommand{\T}{\mathrm{T}}  
\newcommand{\F}{\mathrm{F}} 
\newcommand{\tr}{\mathrm{tr}}
\begin{document}
\pagenumbering{arabic}

\twocolumn[
\icmltitle{Hypothesis Testing for Generalized Thurstone Models}

\icmlsetsymbol{alph}{*}

\begin{icmlauthorlist}
\icmlauthor{Anuran Makur}{alph,CS,ECE}
\icmlauthor{Japneet Singh}{alph,ECE}
\end{icmlauthorlist}

\icmlaffiliation{CS}{Department of Computer Science, Purdue University, West Lafayette, IN, USA}
\icmlaffiliation{ECE}{Elmore Family School of Electrical and Computer Engineering, Purdue University, West Lafayette, IN, USA}
\icmlcorrespondingauthor{Anuran Makur}{amakur@purdue.edu}
\icmlcorrespondingauthor{Japneet Singh}{sing1041@purdue.edu}

\icmlkeywords{generalized Thurstone model, hypothesis testing, minimax risk, cycle decomposition}

\vskip 0.3in
]

\printAffiliationsAndNotice{$^*$The author ordering is alphabetical.} 

\begin{abstract}
In this work, we develop a hypothesis testing framework to determine whether pairwise comparison data is generated by an underlying \emph{generalized Thurstone model} $\mathcal{T}_F$ for a given choice function $F$. While prior work has predominantly focused on parameter estimation and uncertainty quantification for such models, we address the fundamental problem of minimax hypothesis testing for $\mathcal{T}_F$ models. We formulate this testing problem by introducing a notion of separation distance between general pairwise comparison models and the class of $\mathcal{T}_F$ models. We then derive upper and lower bounds on the critical threshold for testing that depend on the topology of the observation graph. For the special case of complete observation graphs, this threshold scales as $\Theta((nk)^{-1/2})$, where $n$ is the number of agents and $k$ is the number of comparisons per pair. Furthermore, we propose a hypothesis test based on our separation distance, construct confidence intervals, establish time-uniform bounds on the probabilities of type I and II errors using reverse martingale techniques, and derive minimax lower bounds using information-theoretic methods. Finally, we validate our results through experiments on synthetic and real-world datasets. 
\end{abstract}

\section{Introduction}
Learning rankings from data is a fundamental problem underlying numerous applications, including recommendation systems \cite{RecommenderSystems}, sports tournaments \cite{BradleyTerry1952,  cattelan2013dynamic}, fine-tuning large language model (LLMs) \cite{InstructGPT}, and social choice theory \cite{Luce1959,Yellott1977}. The class of generalized Thurstone models (GTMs) \cite{Thurstone1927,GLTM,GTMBook}, which fall under the broader framework of random utility models, is a widely adopted framework for ranking agents, items, or choices based on given preference data. GTMs include many other models as special cases, most notably the Bradley-Terry-Luce (BTL) model \cite{BradleyTerry1952,Luce1959,McFadden1973}, which has been widely studied. Given $n$ agents $[n] = \{1,\dots,n\}$, GTMs can be construed as likelihood models for pairwise comparisons between pairs of agents. In particular, a GTM $\GTM$ assumes that each agent $i$ is endowed with an unknown utility parameter $w_i \in \R$ and the probability that agent $i$ is preferred over agent $j$ (e.g., $i$ beats $j$ in a game) is given by $F(w_i - w_j)$, where $F$ represents a known choice function which is a cumulative distribution function (CDF). 

While GTMs have been utilized in many contexts, e.g., \cite{Thurstone1927,Elo1978}, they are parametric models where $n$ utility parameters characterize the model. Indeed, the assumption that pairwise comparison data is governed by a small number of parameters forms the basis of most results on GTMs \cite{cattelan2013dynamic,ThurstoneEstimation2016,Shahetal2016,JadbabaieMakurShah2020,JadbabaieMakurShah2020journal}. However, such parametric models can sometimes be too restrictive, failing to capture intricacies in real-world applications \cite{DM59, ML65, Tve72}. Notably, GTMs struggle to accommodate context-dependent effects, such as the home-advantage effect observed in sports tournaments \cite{Clarke1995,Morley2005}, where teams may perform differently when playing at home versus away. Furthermore, GTMs assume transitive relationships, which may not hold in real-world datasets. To accurately capture the complex and diverse behaviors observed in real-world data, non-parametric models, e.g., \cite{Chatterjee2015,Shahetal2016}, have been studied as an alternative. This conversation raises an important question: \emph{Given pairwise comparison data, can we determine whether it is comes from a specific GTM?} If it does, then we can rely on the vast GTM literature for learning and interpretation, and if it does not, then we can resort to using other parametric models, such as the Mallows model \cite{mallows1957}, or non-parametric models \cite{Learningrichrankings,Mania2018,BongLiShrotriyaRinaldo2020}.

Despite extensive research in the area, there is no systematic answer to the above question in the literature, i.e., there is no rigorously analyzed hypothesis test to determine whether given pairwise comparison data conforms to an underlying GTM model. To address this, we study the composite hypothesis testing problem of whether data obeys a GTM $\GTM$ for a given choice function $F$ (which is a special kind of CDF): 
\begin{equation}\label{main problem}
  \begin{aligned}
    H_0 &: \mathcal{Z} \sim \GTM \text{ for some choice of } w \in \mathcal{W}, \\
    H_1 &: \mathcal{Z} \sim  \text{pairwise comparison model that is not } \GTM,
  \end{aligned}
\end{equation}
where $\mathcal{Z}$ denotes the pairwise comparison data, $H_0$ and $H_1$ are the null and alternative hypotheses, respectively, and $\mathcal{W}$ denotes the parameter space for the parameters $w$.

 \subsection{Main Contributions}
  We analyze the composite hypothesis testing problem outlined in \cref{main problem}. Our main contributions include the following:
  \begin{enumerate}[itemsep=1pt,parsep=0pt]
      \item We frame the hypothesis testing problem in a minimax sense (\cref{sec: Formal Model and Setup}) by developing a rigorous notion of separation distance to the class of all GTMs that admits tractable analysis (\cref{sec: Main Results}, \cref{prop: Distance to closest GTM model}). 
      \item We derive upper and lower bounds on the critical threshold for our test (\cref{sec: Main Results}, \cref{thm: Upper Bound on Critical Threshold} and \cref{prop:Lower Bound for graphs}). These bounds exhibit a dependence on the graph induced by the pairwise comparison data (see \cref{sample-table}) and are tight for complete graphs. 
      \item We use the separation distance to propose a hypothesis test and establish various theoretical guarantees for our test. Specifically, we prove ``time-uniform'' type \MyRomannum{1} and type \MyRomannum{2} error probability upper bounds for our test  (\cref{sec: Main Results}, \cref{thm:Lower Bound,thm: Type1 upper bound}), and also provide a minimax lower bound. 
      \item Additionally, we obtain auxiliary results like $\ell^2$-error bounds on parameter estimation for general pairwise comparison models (\cref{thm: Error Bounds Under Model Mismatch}) and ``time-uniform'' confidence intervals under the null hypothesis (\cref{prop: Confidence Intervals}). 
      \item Finally, we validate our theoretical findings through synthetic and real-world experiments, proposing a data-driven approach to determine the test threshold and using the test to determine different choice functions' fit to the data (\cref{sec: Experiments}).
\end{enumerate}

\subsection{Related Literature}
The class of GTMs has a rich history in the analysis of preference data. Initially proposed by Thurstone \cite{Thurstone1927}, these models are widely used in various fields, ranging from psychology \cite{Thurstone1931}, economics \cite{marschak1960binary}, and more recent applications like aligning LLMs with human preferences \cite{InstructGPT}. Early foundational works, e.g., \cite{Thurstone1927,Luce1959,Yellott1977}, explored different cumulative distribution functions $F$ for modeling choice probabilities, including Gaussian \cite{Thurstone1927}, logistic \cite{BradleyTerry1952}, and Laplace \cite{Dawkins1969}. These models and their extensions underlie popular rating systems, such as Elo in chess \cite{Zermelo1929,Elo1978} and TrueSkill in video games \cite{Graepel2006}. Several recent works have actively explored estimation techniques for Thurstone models. For instance, \cite{ThurstoneEstimation2016} estimated parameters of Thurstone models when the preference data is derived from general subsets of agents (not specifically pairs), and \cite{Shahetal2016} focused on parameter estimation for GTMs and the effect of graph topology on the estimation accuracy.

\begin{table}[t]
  \caption{Bounds in this work on critical threshold $\varepsilon_{\mathsf{c}}$ for various induced observation graphs, where $n$ represents the number of agents and $k$ is the number of comparisons per pair of agents.}
  \label{sample-table}
  \centering
    \setlength{\tabcolsep}{10pt}  
  \renewcommand{\arraystretch}{1.2} 
  \begin{tabular}{c c c}
    \hline
    Graph Type & Upper Bound & Lower Bound \\[2pt]
    \hline \\[-6pt]
    Complete graph & $O\left(\dfrac{1}{\sqrt{nk}}\right)$ & $\Omega\left(\dfrac{1}{\sqrt{nk}}\right)$ \\[8pt]
    ${d}$-Regular graph & $O\left(\dfrac{1}{\sqrt{nk}}\right)$ & $\Omega\left(\dfrac{1}{\sqrt{n^2 k}}\right)$ \\[8pt]
    Single cycle & $O\left(\dfrac{1}{\sqrt{nk}}\right)$ & $\Omega\left(\dfrac{1}{\sqrt{n^2 k}}\right)$ \\[8pt]
    Toroidal grid & $O\left(\dfrac{1}{\sqrt{nk}}\right)$ & $\Omega\left(\dfrac{1}{\sqrt{n^{7/4} k}}\right)$ \\[10pt]
    \hline
  \end{tabular}
\end{table} 
Furthermore, a significant portion of the literature has focused on parameter (and skill distribution) estimation in the special case of the BTL model, e.g., \cite{SimonsYao1999,NegahbanOhShah2017,NegahbanOhShah2012,Chenetal2019,JadbabaieMakurShah2020,JadbabaieMakurShah2020journal}, where two prominent algorithms are spectral ranking \cite{NegahbanOhShah2017,NegahbanOhShah2012} and maximum likelihood estimation \cite{Zermelo1929,Hunter2004}. Another related line of work is on uncertainty quantification for estimated parameters \cite{UncertaintyQuantificationBTL,han2024ThurstoneNormality,UncertaintyQuantificationCovariates,LagrangianInference}. For example, \cite{UncertaintyQuantificationBTL} established the asymptotic normality of estimated parameters in the BTL model for both spectral ranking and maximum likelihood estimation, and \cite{han2024ThurstoneNormality} generalized the asymptotic normality results to a broader class of models, such as GTMs and Mallows model. In a different vein, \cite{AgarwalABTesting} provided sharp sample complexity bounds for A/B testing needed to distinguish similar alternatives. 

Despite the extensive work on parameter estimation, relatively few studies have rigorously investigated hypothesis testing for such parametric models. Notably, \cite{RastogiBalakrishnanShahAarti2022} developed two-sample tests for preference data, while  \cite{testingMallows} studied hypothesis testing for the Mallows model. \cite{Seshadri2020} studied lower bounds for testing the independence of irrelevant alternatives (IIA) assumption (i.e., BTL and Plackett-Luce models \cite{Luce1959,Plackett1975}), and \cite{MakurSinghBTL,MakurSingh2024BTLJournal} developed hypothesis tests for BTL models based on spectral methods. In contrast to these works, we develop hypothesis testing for GTMs using a maximum likelihood framework, complementing the work in \cite{MakurSinghBTL,MakurSingh2024BTLJournal}.

\section{Formal Model and Setup} \label{sec: Formal Model and Setup}
We begin by introducing a general pairwise comparison model that provides a flexible framework encompassing a broad range of established probabilistic models, including the BTL model \cite{BradleyTerry1952, Luce1959, McFadden1973}, the Thurstone model \cite{Thurstone1927}, and non-parametric models \cite{Chatterjee2015, Shahetal2016}. In this framework, we consider $n \in \mathbb{N}\backslash\! \{1\}$ agents (or items or choices) $[n]$ engaged in pairwise comparisons. For agents $i,j \in [n]$ with $i \neq j$, let $p_{ij} \in (0,1)$ denote the probability that $i$ is preferred over $j$ in an ``$i$ vs. $j$'' pairwise comparison. This model inherently captures the asymmetric nature of pairwise comparisons, as the outcome of an ``$i$ vs. $j$'' comparison may differ from that of a ``$j$ vs. $i$'' comparison. This reflects real-world phenomena like home-advantage that are commonly observed in sports \cite{Clarke1995, Morley2005}. 

To model the fact that not all pairwise comparisons may be observed, we assume that we are given an induced observation graph $\mathcal{G} = ([n], \mathcal{E})$, where an edge $(i, j) \in \mathcal{E}$ (with $i \neq j$) exists if and only if comparisons of the form ``$i$ vs. $j$'' are observed. Let $E \in \{0,1\}^{n \times n}$ be the adjacency matrix of $\mathcal{G}$, with $E_{ij} = 1$ if $(i, j) \in \mathcal{E}$ and $0$ otherwise. Furthermore, we assume that the edge set $\mathcal{E}$ is symmetric (i.e., $\mathcal{G}$ is undirected), implying that if ``$i$ vs. $j$'' comparisons are observed, then ``$j$ vs. $i$'' comparisons are observed as well. Additionally, we assume that $\mathcal{G}$ is connected and is fixed a priori (see \cref{prop: Existence and Uniqueness of MLE}), independent of the outcomes of the observed pairwise comparisons. 

We also let $D \in \R^{n \times n}$ be the diagonal degree matrix with $D_{ii} = \sum_{j=1}^n E_{ij}$ for $i \in [n]$, and $L \triangleq D - E$ be the graph Laplacian matrix. $L$ can be expressed as $L = X^\T X$, where $X \in \mathbb{R}^{(|\calE|/2) \times n}$ is the matrix formed by collecting row vectors $x_{ij} = e_i - e_j$ for $(i, j) \in \mathcal{E}$ and $j>i$, with $e_i$ being the $i$th standard basis vector in $\mathbb{R}^n$. For the Laplacian \( L \), we define the semi-norm with respect to \( L \) as 
\(\|x\|_L = \sqrt{x^\T L x}\) for all \( x \in \mathbb{R}^n \) (which is a semi-norm since $L$ has a zero eigenvalue).
\subsection{Comparison Models}
\textbf{Pairwise comparison model:} Given the above preamble, we now formally present general pairwise comparison models (which encompass BTL models \cite{BradleyTerry1952, Luce1959, McFadden1973}, Thurstone models \cite{Thurstone1927}, non-parametric models \cite{Chatterjee2015, Shahetal2016}, etc., as mentioned above).

\begin{definition}[Pairwise Comparison Model] \label{def:pairwise-comparison-model}
Given an observation graph $\G$ over the agents $[n]$, we refer to the collection of probability parameters $\{p_{ij} : (i, j) \in \mathcal{E}\}$ as a \emph{pairwise comparison model}.
\end{definition}

Furthermore, we can represent a pairwise comparison model by a \emph{pairwise comparison matrix} $P \in [0,1]^{n \times n}$ with 
\begin{equation}
    P_{ij} \triangleq  \begin{cases}  p_{ij} , & (i,j) \in \calE,\\
                            0, & \text{otherwise.}
              \end{cases}
\end{equation} 

We remark that our ensuing analysis can be easily specialized to a symmetric setting where ``$i$ vs. $j$'' and ``$j$ vs. $i$'' comparisons are equivalent. In this case, $E$ is automatically symmetric as assumed. On the other hand, the symmetry assumption on $E$ is needed in asymmetric settings because GTMs inherently treat ``$i$ vs. $j$'' and ``$j$ vs. $i$'' comparisons as equivalent, which is not true in general models. 
\noindent

 \textbf{GTM model:} Next, we describe a GTM for a choice function $F:\R \rightarrow [0,1]$, which is a special kind of CDF (to be explained in the sequel). 
\begin{definition}[Generalized Thurstone Model] Given an observation graph $\G$, a pairwise comparison model is said to be a \emph{generalized Thurstone model} (GTM) $\GTM$ with choice function $F:\R \rightarrow [0,1]$ if there exists a weight (or utility) vector $w \in \mathcal{W}$ such that: \[ \forall (i,j) \in \calE, \enspace p_{ij} = F(w_i - w_j), \]
where $\mathcal{W} \subseteq \R^n$ is a specified convex parameter space (usually $\R^n$ or a compact hypercube in $\R^n$). 
\end{definition} 

The GTM \cite{GLTM,GTMBook} posits that every agent $i$ has a latent utility $w_i$, and uncertainty in the comparison process is modeled by independent and identically distributed (i.i.d.) noise random variables $X_1, \dots, X_n$ with absolutely continuous CDF $G:\R \rightarrow [0,1]$. The discriminant variables $(w_1 + X_1, \dots, w_n + X_n)$ formed by combining utilities with the noise random variables are then compared to determine the outcomes of pairwise comparisons. Hence, the probability of preferring agent $i$ over $j$ is given by
\begin{equation}
\begin{aligned}
& \P(i \text{ preferred over } j) = \P(w_i + X_i > w_j + X_j) \\ 
& = \int_{-\infty}^{\infty} \! G(y + w_i - w_j) G^{\prime}(y) \diff{y} =  F(w_i - w_j).
\end{aligned}
\end{equation}

As noted earlier, GTMs also encompass a wide range of known parametric models as special cases, e.g., Thurstone models with $F(t) = \int_{-\infty}^t \frac{1}{\sqrt{2\pi}} e^{-x^2/2} \diff x$ (complementary Gaussian CDF) \cite{Thurstone1927}, BTL models with $F(t) = {1}/{(1 + e^{-t}) }$ (sigmoid function, which stems from Gumbel CDFs)  \cite{BradleyTerry1952,Luce1959,Yellott1977}, Dawkins models \cite{Dawkins1969}, etc. We can also define a pairwise probability matrix $\mathsf{F}(w) \in [0,1]^{n \times n}$ for a GTM $\GTM$ with weight vector $w$ via
\begin{equation} \label{sF definition}
    (\mathsf{F}(w))_{ij} \triangleq  \begin{cases}  F(w_i -w_j) , & (i,j) \in \calE,\\
                            0, & \text{otherwise. }
              \end{cases}
\end{equation} 

We next describe the data generation process for GTMs and general pairwise comparison models alike. For any pair $(i,j) \in \calE$, define the outcome of the $m$th ``$i$ vs. $j$'' pairwise comparison between them as the Bernoulli random variable
    \begin{equation}
        Z_{ij}^m \triangleq \begin{cases} 1 , & i \text{ preferred over } j \ (\text{with probability } p_{ij}) , \\
                                            0 , & j \text{ preferred over } i \ (\text{with probability } 1-p_{ij}),
                              \end{cases}
    \end{equation}
  for $m \in [k_{ij}]$, where $k_{ij}$ denotes the number of observed ``$i$ vs. $j$'' comparisons. The given pairwise comparison data is then a collection of these \emph{independent} Bernoulli variables $\Z \triangleq \{Z_{ij}^m : (i,j) \in \calE, \ m \in [k_{i,j}]\}$. For convenience, we also let $Z_{ij} \triangleq \sum_{m = 1}^{k_{ij}}{Z_{ij}^m}$ and $\hat{p}_{ij} \triangleq Z_{ij}/k_{ij} $. 

\textbf{Parameter estimation for GTM:} To present our testing formulation in the sequel, we explain how the parameters of a $\GTM$ model are estimated given pairwise comparison data $\Z$ \cite{Shahetal2016}. First, we define the  weighted negative log-likelihood function $l:\mathcal{W} \times [0,1]^{|\calE|} \rightarrow \R_+ \cup \{+\infty\}$ as
\begin{align}
l(w;\{\hat{p}_{ij} : (i,& j) \in \mathcal{E}\}) \triangleq  -\sum_{(i,j) \in \calE} \phat_{ij}\log( F(w_i- w_j)) \nonumber \\
 &  + (1-\phat_{ij}) \log (1 - F(w_i-w_j)). 
\end{align}
Note that this function represents a weighted variant of the typical log-likelihood function used in parameter estimation \cite{Shahetal2016,ThurstoneEstimation2016}. The weights of the $\GTM$ model are estimated by minimizing $l$: 
\begin{equation}
  \hat{w} \triangleq \argmin_{w \in \constraintset } l(w;\{\hat{p}_{ij} : (i, j) \in \mathcal{E}\}), \label{what definition}
\end{equation}
where the constraint set $\constraintset\! \triangleq \{w \in \mathcal{W}\!:\! \|w\|_\infty\! \leq b, \, w^\T\1 = 0 \}$ for some (universal) constant $b$, $\1\! \in\! \R^n$ denotes an all-ones vector, and the constraint $w^\T\1 \!=\! 0$ allows for identifiability of the weights. 

\subsection{Assumptions on Comparison Models} To facilitate the analysis of the hypothesis testing problem in \cref{main problem}, we introduce a simplifying assumption on the class of general pairwise comparison models. We assume that the pairwise probabilities $p_{ij}$ are bounded away from $0$ and $1$.
\begin{assumption}[Dynamic Range] \label{pijbounded} There exists a constant $\delta>0$ such that for any pairwise comparison model under consideration, $p_{ij} \in [\delta,1-\delta] $ for all $(i,j) \in \calE$.
\end{assumption}
Note that under the null hypothesis, the \cref{pijbounded} is satisfied by all $\GTM$ models with weights bounded by $F^{-1}(1-\delta)/2$. Subsequently, we assume that the constant $b$  satisfies $b \geq  F^{-1}(1-\delta)/2$. For any given pairwise comparison model $\{p_{ij} : (i, j) \in \mathcal{E}\}$, define $w^* \in \mathcal{W}_b$ be the weights of a $\GTM$ model that best approximates this pairwise comparison model in the maximum likelihood sense:
\begin{equation}\label{MLweightspairwise}
w^* \triangleq \argmin_{w \in \mathcal{W}_b} 
l(w; \{p_{ij} : (i, j) \in \mathcal{E}\}) .  
\end{equation}
Finally, we also assume in the sequel that the given choice function $F$ exhibits \emph{strong log-concavity} and has a bounded derivative on $[-2b, 2b]$, i.e., there exist constants $\alpha, \beta > 0$ such that:
\begin{equation}
\label{eq:strong_log_concavity}
\forall x \in {[-2b,2b]}, \enspace - \frac{\diff{}^2}{\diff{x}^2} \log( F(x)) \geq \alpha \  \text{ and } \  F^{\prime}(x) \leq \beta. 
\end{equation}
Several popular GTMs, including the BTL and Thurstone (Case \MyRomannum{5}) models, satisfy both the above assumptions.
The following proposition highlights that $w^*$ always exists and is unique for a strongly log-concave function $F$ on $\constraintset$.
\begin{proposition}[Existence and Uniqueness of Maximum Likelihood] \label{prop: Existence and Uniqueness of MLE}
Suppose the observation graph $\G$ is connected, the choice function $F : \R \rightarrow [0,1]$ satisfies \cref{eq:strong_log_concavity}, and \cref{pijbounded} holds. Then, there exists a unique optimal solution $w^* \in \mathcal{W}_b$ satisfying \cref{MLweightspairwise}. 
\end{proposition}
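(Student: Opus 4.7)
The plan is to establish existence and uniqueness separately: existence via the Weierstrass extreme-value theorem applied to the compact set $\constraintset$, and uniqueness via strict convexity of $w \mapsto l(w;\{p_{ij}\})$ on the affine subspace $\{w \in \R^n : w^\T \1 = 0\}$.

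For existence, I would first observe that $\constraintset$ is a closed and bounded, hence compact, convex subset of $\R^n$. For any $w \in \constraintset$ and any $(i,j) \in \calE$, the difference $w_i - w_j$ lies in $[-2b,2b]$. Since $F$ is absolutely continuous on this compact interval with bounded derivative $F'\leq \beta$ by \cref{eq:strong_log_concavity}, and since $F(x)\in(0,1)$ for every real $x$, the image $F([-2b,2b])$ is a compact sub-interval of $(0,1)$. Consequently, the maps $w\mapsto -\log F(w_i-w_j)$ and $w\mapsto -\log(1-F(w_i-w_j))$ are continuous and bounded on $\constraintset$, so $l(\cdot;\{p_{ij}\})$ is continuous and finite on $\constraintset$. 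Weierstrass's theorem then yields an optimal $w^* \in \constraintset$.

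For uniqueness, the plan is to decompose
\[
l(w;\{p_{ij}\}) = \sum_{(i,j)\in\calE} \phi_{ij}(w_i-w_j), \quad \phi_{ij}(t) \triangleq -p_{ij}\log F(t) - (1-p_{ij})\log(1-F(t)),
\]
and compute the Hessian as $\nabla^2 l(w) = X^\T \Lambda(w) X$, where $\Lambda(w)$ is diagonal with entries $\phi_{ij}''(w_i-w_j)$. Applying \cref{eq:strong_log_concavity} gives $-\frac{d^2}{dt^2}\log F(t) \geq \alpha$ on $[-2b,2b]$, and the built-in GTM symmetry $F(-x) = 1 - F(x)$ (forced by $p_{ij}+p_{ji}=1$) transfers this to $-\frac{d^2}{dt^2}\log(1-F(t)) \geq \alpha$ on the same interval. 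Together these yield $\phi_{ij}''(t) \geq \alpha$ on $[-2b,2b]$, so $\Lambda(w) \succeq \alpha I$ and hence $\nabla^2 l(w) \succeq \alpha\, X^\T X = \alpha L$ uniformly on $\constraintset$. Because $\G$ is connected, $L$ is positive semidefinite with kernel exactly $\mathrm{span}(\1)$, and thus is positive definite on the subspace $\{w:w^\T\1=0\}$ that contains $\constraintset$. Strict convexity of $l$ on this subspace, together with the convexity of $\constraintset$, rules out the existence of two distinct minimizers.

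The main technical subtlety is the uniqueness step, specifically obtaining a strict positive-definite lower bound on $\nabla^2 l(w)$ as an operator on the constraint-compatible subspace. The two delicate ingredients are (i) converting the one-sided strong log-concavity of $F$ into convexity of both $-\log F$ and $-\log(1-F)$ (the GTM symmetry $F(-x)=1-F(x)$ is what makes this automatic), and (ii) pushing the pointwise edge-level convexity bound through $X^\T \Lambda X$ to get a Laplacian lower bound, using connectedness of $\G$ to eliminate the unique gauge direction $\1$. The remaining pieces are routine appeals to continuity, compactness, and standard convex-analytic consequences of strict convexity.
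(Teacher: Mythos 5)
Your proof is correct and takes essentially the same route as the paper: compactness of $\constraintset$ gives existence via the extreme value theorem, and strict convexity of $l$ on the trace-zero subspace (from strong log-concavity of $F$ together with connectedness of $\G$) gives uniqueness; the paper phrases the uniqueness step as a Jensen-type strict inequality on the skew-symmetrized objective $-2\sum_{(i,j)\in\calE} q_{ij}\log F(w_i-w_j)$ rather than a Hessian--Laplacian lower bound, and separately offers a bonus existence proof for $b=\infty$, but these are equivalent formulations of the same core idea. One minor wording slip in your argument: the identity $F(-x)=1-F(x)$ is a built-in property of GTM choice functions (coming from the i.i.d.-noise construction of $F$), not something ``forced by $p_{ij}+p_{ji}=1$''\textemdash the raw $p_{ij}$ in a general pairwise comparison model need not satisfy that constraint, and the fact you actually need is about $F$ alone, which does hold.
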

The proof is provided in \cref{Proof of Existence and Uniqueness of MLE}. It follows from \cref{prop: Existence and Uniqueness of MLE} and Gibbs' inequality that when the pairwise comparison model is indeed a $\GTM$ model with weight vector $w$, then we have $w^* = w$. 

\subsection{Minimax Formulation} Given any fixed graph $\G$, separation level $\epsilon > 0$, choice function $F$, and constants $\delta > 0$ and $b \geq F^{-1}(1-\delta)/2$, define the sets $\mathcal{M}_0$ and $\mathcal{M}_1(\epsilon)$ of $\GTM$ and pairwise comparison models:
\begin{align} 
\mathcal{M}_0 & \triangleq \{ P : \text{\cref{pijbounded} holds and } \nonumber\\
& \ \ \ \ \ \ \ \ \ \ \ \ \ \ \ \ \ \ \  \exists\ w \in \mathcal{W}_b \text{ such that } P = \sF(w) \} , \\
\mathcal{M}_1(\epsilon) & \triangleq \bigg\{ P : \text{\cref{pijbounded} holds and } \nonumber\\ 
& \ \ \ \ \ \ \ \ \ \ \ \ \ \ \ \ \ \ \ \ \ \ \ \ \inf_{w \in \constraintset } \frac{1}{n}\|P - \sF(w)\|_\F \geq \epsilon \bigg\} , \label{H1 class}
\end{align} 
where $\|\cdot \|_\F$ denotes Frobenius norm. Now, we formalize the hypothesis testing problem in \cref{main problem} as: \vspace{-1mm}
    \begin{equation}\label{Hypothesis Test}
    \begin{aligned} 
    H_0: \ \ & \Z \sim P \in \mathcal{M}_0 , \\
    H_1: \ \ &  \Z \sim P \in \mathcal{M}_1(\epsilon).
    \end{aligned}
    \end{equation} 
We will discuss the separation distance $\inf_{w \in \constraintset } \|P - \sF(w)\|_\F$ later. For now, note that we only test on the set of observed comparisons $\calE$ as it is not possible to determine whether the comparisons on $\calE^c$ would conform to a $\GTM$ model or some other pairwise comparison model. 
Next, for any fixed graph $\G$, choice function $F$, and constants $\epsilon,\delta,b$, we define the \emph{minimax risk} as
\begin{align}
\mathcal{R}(\G, \epsilon) \triangleq \inf_{\phi} \bigg\{ & \underbrace{\sup_{P \in \mathcal{M}_0 } \P_{H_0}(\phi(\mathcal{Z}) = 1)}_{\mathcal{Z} \sim P \text{ under } H_0} + \nonumber \\ & \ \ \ \ \ \ \ \ \ \  \underbrace{\sup_{P \in \mathcal{M}_1(\epsilon) } \P_{H_1}(\phi(\mathcal{Z}) = 0)}_{\mathcal{Z} \sim P \text{ under } H_1} \bigg\},  \label{eqn: m-risk}
 \end{align} 
where the infimum is taken over all randomized decision rules $\phi(\mathcal{Z}) \in \{0,1\}$ (with $0$ corresponding to $H_0$ and $1$ to $H_1$), and $\P_{H_0}$ and $\P_{H_1}$ denote the probability measures under hypotheses $H_0$ and $H_1$, respectively. Intuitively, this risk minimizes the sum of the worst-case type \MyRomannum{1} and type \MyRomannum{2} error probabilities. Finally, we define the \emph{critical threshold} of the hypothesis testing problem in \cref{Hypothesis Test} as the smallest value of $\epsilon$ for which the minimax risk is bounded by $\frac{1}{2}$ (cf. \cite{RastogiBalakrishnanShahAarti2022}):
    \begin{equation}
    \label{eq: crit thresh}
    \varepsilon_{\mathsf{c}} \triangleq \inf\left\{\epsilon > 0: \mathcal{R}(\G, \epsilon) \leq \frac{1}{2} \right\} . \end{equation}
    Note that the constant $\frac{1}{2}$ here is arbitrary and can be replaced by any constant in $(0,1)$. 

\section{Main Results} \label{sec: Main Results} 
In this section, we present the main results of the paper. We first show that our notion of separation distance can be simplified for analysis, then proceed to bound the critical threshold and minimax risk, and finally, establish type \MyRomannum{1} and \MyRomannum{2} error probability bounds in the sequential setting.  
\subsection{Separation Distance and Test Statistic}
Recall that to formalize \cref{Hypothesis Test}, we defined the \emph{separation distance} of a pairwise comparison model $P$ to the class of $\GTM$ models as \(\inf_{w \in \constraintset} \|P-\sF(w) \|_{\F} \) (for fixed $F$). To make this separation distance more amenable to theoretical analysis, we approximate it in the next theorem with the simpler quantity $\|P-\sF(\ws)\|_\F$, where $\ws$ is given in \cref{MLweightspairwise}. 
\begin{theorem}[Separation Distance to $\GTM$ Models] \label{prop: Distance to closest GTM model} Let $P$ be a pairwise comparison matrix satisfying \cref{pijbounded}. Then, there exists a constant $c_{1} > 0$ (that does not depend on $n$) such that the separation distance between $P$ and the class of $\GTM$ models satisfies
$$
c_{1}\|P-\sF(\ws) \|_{\F} \leq \inf_{w \in \constraintset} \|P-\sF(w) \|_{\F} \leq \| P-\sF(\ws)\|_{\F},
$$
where $\ws$ is given by \cref{MLweightspairwise}.
\end{theorem}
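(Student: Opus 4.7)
The upper bound $\inf_{w \in \constraintset} \|P - \sF(w)\|_\F \leq \|P - \sF(\ws)\|_\F$ is immediate since $\ws \in \constraintset$; the substance is the matching lower bound. My plan is to introduce a Frobenius-optimal vector $\tilde w \triangleq \argmin_{w \in \constraintset} \|P - \sF(w)\|_\F$ (which exists because $\constraintset$ is compact and $w \mapsto \|P - \sF(w)\|_\F$ is continuous) and bound the distance between $\sF(\tilde w)$ and $\sF(\ws)$ via the strong convexity of the population log-likelihood, closing with a triangle inequality.

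\textbf{Step 1 (strong convexity of $l$).} First, I would show that $l(\cdot;\{p_{ij}\})$ is $\alpha$-strongly convex in the Laplacian seminorm. Writing $l(w;\{p_{ij}\}) = \sum_{(i,j)\in\calE} [-p_{ij}\log F(w_i-w_j) - (1-p_{ij})\log(1-F(w_i-w_j))]$, each summand has second derivative at least $\alpha$ in the scalar $w_i - w_j$ by \cref{eq:strong_log_concavity} applied to both $F$ and $1 - F$, which holds in the canonical symmetric-noise setting $F(-x) = 1 - F(x)$. Summing the rank-one outer products $(e_i-e_j)(e_i-e_j)^\T$ therefore yields $\nabla^2 l \succeq \alpha L$ on $\constraintset$, and the first-order KKT condition at the constrained minimizer $\ws$ gives
\begin{equation*}
l(\tilde w; \{p_{ij}\}) - l(\ws; \{p_{ij}\}) \geq \tfrac{\alpha}{2}\,\|\tilde w - \ws\|_L^2.
\end{equation*}

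\textbf{Step 2 (bounding the likelihood gap by the Frobenius gap).} Peeling off the $w$-independent entropy terms yields $l(w;\{p_{ij}\}) = \sum_{(i,j)\in\calE} D\bigl(p_{ij} \| F(w_i-w_j)\bigr) + \text{const}$. Since $\tilde w \in \constraintset$ forces $F(\tilde w_i - \tilde w_j) \in [F(-2b), F(2b)] \subset (0,1)$ and \cref{pijbounded} places $p_{ij}$ in $[\delta,1-\delta]$, the elementary bound $D(p\|q)\leq (p-q)^2/[q(1-q)]$ applies uniformly and produces a constant $C=C(\delta,b,F)$ with
\begin{equation*}
l(\tilde w;\{p_{ij}\}) - l(\ws;\{p_{ij}\}) \leq \sum_{(i,j)\in\calE}\!\! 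D\bigl(p_{ij}\,\|\,F(\tilde w_i-\tilde w_j)\bigr) \leq C\,\|P-\sF(\tilde w)\|_\F^2,
\end{equation*}
using $D(p_{ij}\|F(\ws_i-\ws_j)) \geq 0$ in the first step. Combined with Step~1, this gives $\|\tilde w - \ws\|_L \leq \sqrt{2C/\alpha}\,\|P-\sF(\tilde w)\|_\F$.

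\textbf{Step 3 (Lipschitz transfer and triangle inequality).} The bound $F'\leq \beta$ on $[-2b,2b]$ turns the $L$-seminorm bound into a Frobenius bound:
\begin{equation*}
\|\sF(\tilde w) - \sF(\ws)\|_\F^2 = \!\!\sum_{(i,j)\in\calE}\!\! \bigl(F(\tilde w_i-\tilde w_j) - F(\ws_i-\ws_j)\bigr)^2 \leq \beta^2\,\|\tilde w - \ws\|_L^2,
\end{equation*}
and the triangle inequality $\|P-\sF(\ws)\|_\F \leq \|P-\sF(\tilde w)\|_\F + \|\sF(\tilde w)-\sF(\ws)\|_\F$ then delivers the conclusion with $c_1 = 1/(1+\beta\sqrt{2C/\alpha})$, a constant independent of $n$. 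The main obstacle will be the strong-convexity step: \cref{eq:strong_log_concavity} only lower-bounds $-(\log F)''$, so I will need to invoke the symmetric-noise interpretation of the GTM (which gives $F(-x)=1-F(x)$, hence strong log-concavity of $1-F$ as well). A secondary subtlety of ensuring $F(w_i-w_j) \in [\delta',1-\delta']$ uniformly on $\constraintset$, so that $C$ does not blow up, is handled by compactness of $\constraintset$ and the continuity and strict monotonicity of $F$.
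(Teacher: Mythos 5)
Your argument is correct, but it takes a genuinely different route from the paper. The paper avoids introducing the Frobenius minimizer $\tilde w$ and never invokes the Hessian at all: it directly chains squared Frobenius distance through $f$-divergences,
\begin{equation*}
\inf_{w}\|P-\sF(w)\|_\F^2 \geq c\,\inf_w \!\!\sum_{(i,j)}\!\chi^2\bigl(\Ber(p_{ij})\,\|\,\Ber(F(w_i-w_j))\bigr) \geq c\,\inf_w \sum_{(i,j)} D_{\mathsf{KL}}\bigl(\cdot\|\cdot\bigr),
\end{equation*}
then observes that the KL sum is exactly $l^*(w)$ up to $w$-independent entropy constants, so its infimum over $\constraintset$ is attained at $\ws$ by \emph{definition} of $\ws$; finishing with Pinsker's inequality turns KL at $\ws$ back into the squared Frobenius distance to $\sF(\ws)$. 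Your proof instead passes through $\tilde w$, bounds $\|\tilde w - \ws\|_L$ via strong convexity of $l^*$ (essentially a reverse-Pinsker bound followed by a quadratic-growth bound), transfers this to $\|\sF(\tilde w)-\sF(\ws)\|_\F$ via the Lipschitz bound $F'\leq\beta$, and closes with the triangle inequality. Both are valid, and both exploit the same two ingredients (bounded dynamic range forcing divergences to be comparable, and the optimality of $\ws$ for the likelihood), but the paper's chain is shorter and notably does not require the strong log-concavity hypothesis \cref{eq:strong_log_concavity} anywhere in the proof of this particular theorem — only \cref{pijbounded} and $F(-2b)(1-F(-2b))>0$ enter. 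Your route, by contrast, needs strong log-concavity for Step 1 and the derivative bound $F'\leq\beta$ for Step 3, so it imports more structure than the statement strictly demands. On the other hand, your Step 2 bound $D(p\|q)\leq (p-q)^2/[q(1-q)]$ is exactly the $\chi^2$-domination of KL that the paper also uses, just applied at $\tilde w$ rather than under the infimum; so the two proofs are closer in spirit than they first appear. A couple of small constant-tracking issues in your write-up (the strong-convexity modulus from the Hessian $\nabla^2 l \succeq 2\alpha L$ over the symmetric edge set gives a factor $\alpha$ rather than $\alpha/2$, and $\sum_{(i,j)\in\calE}(\Delta_i-\Delta_j)^2 = 2\|\Delta\|_L^2$ introduces an extra factor of $2$ in Step 3) do not affect correctness since $c_1$ is an unspecified constant.
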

The proof is provided in \cref{Proof of Distance to closest GTM model}. The upper bound is immediate, and the lower bound utilizes the information-theoretic bounds between $f$-divergences. 

\textbf{Test statistic:} 
We now introduce our test statistic based on the approximation derived in \cref{prop: Distance to closest GTM model}. First, we partition the observed comparison data $\mathcal{Z}$ into two (roughly) equal parts $\mathcal{Z}_1 = \{Z^m_{ij}: (i,j) \in \calE, \, m \in [\lfloor k_{ij}/2 \rfloor] \}$ and $\mathcal{Z}_2 = \mathcal{Z} \setminus \mathcal{Z}_1$. The first half of the dataset $\mathcal{Z}_1$ is used to estimate the parameters $\hat{w}$ as shown in \cref{what definition}. Then, we use $\mathcal{Z}_2$ to calculate the \emph{test statistic} $T$ via
    \begin{align}     
        T \triangleq \sum_{(i,j)\in \calE } & \bigg(\frac{Z_{ij}(Z_{ij} - 1)}{k^{\prime}_{ij}(k^{\prime}_{ij} -1) } + F(\hat{w}_i -\hat{w}_j)^{2} \nonumber \\ & \ \ \ \ \ \ \ \ \ \ \ \ \  \ - 2 F(\hat{w}_i -\hat{w}_j) \frac{Z_{ij}}{k^{\prime}_{ij}} \bigg) \I_{k^{\prime}_{ij} >1}, \label{BoxedTest}
    \end{align}
where $k^{\prime}_{ij} = k_{ij} - \lfloor k_{ij}/2 \rfloor$, $Z_{ij} = \sum_{m > \lfloor k_{ij}/2 \rfloor} Z^m_{ij}$ is computed as before but using only the samples in $\mathcal{Z}_2$, and $\I_\mathcal{A}$ denotes the indicator function on $\mathcal{A}$. For an intuitive explanation of the definition of this statistic, consider the quantity
\begin{equation} T^{\prime}\!=\!\!\!\sum_{(i,j)\in \mathcal{E}}\!\!\frac{Z_{ij}(Z_{ij}-1)}{k_{ij}'(k_{ij}'-1)}+F(w_i^*-w_j^*)^{2}-2F(w_i^* -w_j^*)\frac{Z_{ij}}{k_{ij}' },\end{equation} obtained by substituting $w^*$ in place $\hat{w}$ in \cref{BoxedTest}. Then, the expected value of $T^{\prime}$ is $\|P- F(w^*)\|_\F^2$. This is because the expected value of the first term is $p_{ij}^2$, and the last term is $-2 F(w_i^* - w_j^*) p_{ij}$. Hence, $T$ is constructed by plugging in $\hat{w}$ in place of $\ws$ in an unbiased estimator of $\|P - \sF(\ws)\|^2_\F$. Our proposed \emph{hypothesis test thresholds $T$ to determine the unknown hypothesis}; $H_1$ is selected if $T$ exceeds a certain threshold. There is some resemblance between the test statistic $T$ and those in \cite{MakurSinghBTL,MakurSingh2024BTLJournal,RastogiBalakrishnanShahAarti2022} since all these statistics are ``estimators'' for squared Frobenius distances of pairwise comparison models. However, the techniques used to analyze them are very different. For example, our theoretical analysis crucially relies on sample splitting, while the other two do not. We will discuss analytical expressions for the threshold next and a data-driven approach to determining the threshold in \cref{sec: Experiments}.  

\subsection{Upper Bound on Critical Threshold} In this section, we make the simplifying assumption that $k_{ij} = 2k$ (with $k \in \N$) for all $(i, j) \in \calE$. The ensuing theorem, proved in \cref{Proof of Upper Bound on Critical Threshold}, establishes an upper bound on the critical threshold of the hypothesis testing problem defined in \cref{Hypothesis Test}.
    \begin{theorem}[Upper Bound on Critical Threshold] \label{thm: Upper Bound on Critical Threshold}
     Consider the hypothesis testing problem in \cref{Hypothesis Test}, and assume that \cref{pijbounded} holds and $k \geq 2$. Then, there exists a constant $c_2 > 0$ such that the critical threshold defined in \cref{eq: crit thresh} is upper bounded by  
        \begin{equation*}
          \varepsilon_{\mathsf{c}}^2 \leq \frac{c_2   }{n k }.
        \end{equation*}
    \end{theorem}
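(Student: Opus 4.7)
The plan is to analyze the thresholded test $\phi(\Z) = \I\{T > \tau\}$ with $\tau = n^2 \epsilon^2/8$, and show that its worst-case sum of type I and type II error probabilities is at most $1/2$ whenever $\epsilon^2 \geq c_2/(nk)$ for $c_2$ sufficiently large. The key tool is the sample splitting already built into the definition of $T$: since $\hat w$ is computed from $\Z_1$ while $T$ uses only $\Z_2$, conditioning on $\Z_1$ makes $\hat w$ deterministic and the $T_{ij}$'s independent across edges. A direct calculation, using that $Z_{ij}(Z_{ij}-1)/(k'_{ij}(k'_{ij}-1))$ is the U-statistic estimator of $p_{ij}^2$, gives $\E[T \mid \Z_1] = \|P - \sF(\hat w)\|_\F^2$. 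The crucial — and I expect technically most delicate — variance computation is to show that $\var(T_{ij} \mid \Z_1) \leq C \bigl( (p_{ij} - F(\hat w_i - \hat w_j))^2/k + 1/k^2 \bigr)$. This refinement over a naive $O(1/k)$ bound follows from expanding the quadratic polynomial of $Z_{ij}/k'_{ij}$ defining $T_{ij}$ and noting that its leading-order variance vanishes when $F(\hat w_i - \hat w_j) = p_{ij}$; summing over edges then gives $\var(T \mid \Z_1) \leq C\bigl(\|P - \sF(\hat w)\|_\F^2/k + |\calE|/k^2\bigr)$.

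Next I invoke the parameter-estimation bound \cref{thm: Error Bounds Under Model Mismatch} applied to the half-sample $\Z_1$ to define the good event $\mathcal{A} = \{\|\hat w - \ws\|_L^2 \leq c_3/k\}$, which has probability at least $7/8$ (valid under both $H_0$ and $H_1$ since the result holds for general pairwise comparison models, not only GTMs). On $\mathcal{A}$, the bound $F' \leq \beta$ from \cref{eq:strong_log_concavity} combined with the identity $\sum_{(i,j)\in\calE}(x_i - x_j)^2 = 2\|x\|_L^2$ yields $\|\sF(\hat w) - \sF(\ws)\|_\F^2 \leq 2\beta^2 \|\hat w - \ws\|_L^2 \leq 2\beta^2 c_3/k$. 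Under $H_0$, $P = \sF(\ws)$ so $\E[T \mid \Z_1] \leq 2\beta^2 c_3/k$. Under $H_1$, the triangle inequality in Frobenius norm gives $\|P - \sF(\hat w)\|_\F \geq n\epsilon - \sqrt{2\beta^2 c_3/k}$, which is at least $n\epsilon/2$ once $\epsilon^2 \geq 8\beta^2 c_3/(n^2 k)$, a condition absorbed by choosing $c_2$ large.

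Finally I apply Chebyshev's inequality conditionally on $\Z_1$ restricted to $\mathcal{A}$. Under $H_0$ we have $\tau - \E[T \mid \Z_1] \geq n^2\epsilon^2/16$ and $\var(T \mid \Z_1) \leq C(1/k^2 + |\calE|/k^2) \lesssim n^2/k^2$, so $\P(T > \tau \mid \Z_1) \leq C'/(n^2 k^2 \epsilon^4) \leq C''/c_2^2$ after using $\epsilon^2 \geq c_2/(nk)$. Symmetrically, under $H_1$ we have $\E[T \mid \Z_1] - \tau \geq n^2\epsilon^2/8$ and $\var(T \mid \Z_1) \leq C(n^2\epsilon^2/k + n^2/k^2) \lesssim n^2/k^2$ (since $\|P - \sF(\hat w)\|_\F^2 \leq |\calE| \leq n^2$), yielding a bound of the same form. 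Choosing $c_2$ large enough makes both conditional error probabilities at most $1/8$; together with $\P(\mathcal{A}^\complement) \leq 1/8$ applied on either side, the total minimax risk is at most $1/2$. The main obstacle is the sharp variance calculation described above — without the self-cancellation at $p_{ij} = F(\hat w_i - \hat w_j)$, a naive bound would only recover the weaker rate $\epsilon_{\mathsf c}^2 \lesssim 1/(n\sqrt{k})$.
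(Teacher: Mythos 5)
Your approach is structurally the same as the paper's: both rely on sample splitting so that conditioning on $\Z_1$ makes $T$ a sum of independent quadratics with $\E[T\mid\Z_1]=\|P-\sF(\hat w)\|_\F^2$, both exploit the self-cancellation of the $O(1/k)$ term in $\var(T_{ij}\mid\Z_1)$ at $p_{ij}=F(\hat w_i-\hat w_j)$, both pull in \cref{thm: Error Bounds Under Model Mismatch} to control $\|\hat w-\ws\|_L$, and both finish by Chebyshev. Your packaging differs: the paper works with unconditional moments via the law of total variance and a fixed, $\epsilon$-free threshold $\tau\asymp n/k$, whereas you condition on a high-probability good event for $\hat w$ and use an $\epsilon$-dependent threshold $\tau=n^2\epsilon^2/8$. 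That is a perfectly legitimate and arguably cleaner variant (you also avoid the unnecessary detour through $\ws$: since $\hat w\in\constraintset$, $\|P-\sF(\hat w)\|_\F\geq n\epsilon$ follows directly from the definition of $\calM_1(\epsilon)$, with no triangle inequality).

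However, there are two genuine gaps. First, \cref{thm: Error Bounds Under Model Mismatch} gives $\|\hat w-\ws\|_L^2\lesssim n/k$ with high probability, not $\lesssim 1/k$; your good event $\mathcal{A}$ and the downstream bounds ($\|\sF(\hat w)-\sF(\ws)\|_\F^2\lesssim n/k$, not $\lesssim 1/k$) are off by a factor of $n$. This happens not to break the conclusion, since everything still lands at $\epsilon^2\gtrsim 1/(nk)$ after enlarging $c_2$, but it must be fixed. Second, and more seriously, your $H_1$ Chebyshev step as written does not go through. You claim $\var(T\mid\Z_1)\lesssim n^2/k^2$ citing $\|P-\sF(\hat w)\|_\F^2\leq|\calE|\leq n^2$, but that justification only gives $\var(T\mid\Z_1)\lesssim n^2/k$. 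Plugging $n^2/k$ into Chebyshev with the fixed gap $\E[T\mid\Z_1]-\tau\geq n^2\epsilon^2/8$ yields an error bound $\asymp (n^2/k)/(n^2\epsilon^2)^2 = 1/(n^2k\epsilon^4)\asymp k/c_2^2$, which diverges as $k\to\infty$. The repair is to parametrize by $D^2\triangleq\|P-\sF(\hat w)\|_\F^2$ (which can be $\gg n^2\epsilon^2$): since $\tau\leq D^2/8$ one has $\E[T\mid\Z_1]-\tau\geq 7D^2/8$, and then $\P(T<\tau\mid\Z_1)\lesssim (D^2/k+n^2/k^2)/D^4 = 1/(kD^2)+n^2/(k^2D^4)$, both of which are $\lesssim 1/c_2$ after using $D^2\geq n^2\epsilon^2\geq c_2 n/k$. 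This is essentially what the paper's $\zeta_3$ step does, where the final inequality is verified after substituting $D=a_0\sqrt{n/k}$ and using monotonicity. Without that $D$-parametrized denominator, your proposal does not establish the claimed bound.
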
 
    In our analysis, we select $H_1$ if $T> \gamma \frac{n  }{k  }$ and $H_0$ otherwise, where $\gamma$ is an appropriate constant independent of $n,k$ (see \cref{selection rule}). The analysis relies on establishing non-trivial error bounds (in $\| \cdot \|_L$ semi-norm) for parameter estimation of $\GTM$ models when the data is generated by a general pairwise comparison model, which is not necessarily a GTM (i.e., deriving error bounds under a potential model mismatch). The error bounds allow us to prove bounds on the mean and variance of the test statistic $T$ under both hypotheses $H_0$ and $H_1$. Then, using Chebyshev's inequality, we can bound the probabilities of error of our test under each of the hypotheses, which induces an upper bound on the critical threshold.

We also note that in the special case where $\GTM$ is a BTL model, our upper bound on $\varepsilon_{\mathsf{c}}$ recovers the bound in \cite{MakurSinghBTL,MakurSingh2024BTLJournal} for complete graphs. But our likelihood-based proof is quite different to the spectral ideas in \cite{MakurSinghBTL,MakurSingh2024BTLJournal}. Finally, we present the key error bounds for parameter estimation when data is generated by a general pairwise comparison model needed to prove \cref{thm: Upper Bound on Critical Threshold}. 

  \begin{theorem}[Error Bounds for Parameter Estimation] \label{thm: Error Bounds Under Model Mismatch}
  Consider any pairwise comparison model satisfying \cref{pijbounded} with $w^*$ given by \cref{MLweightspairwise} and $\hat{w}$ constructed according to \cref{what definition} from data generated by the model. Then, for some constant $c_3 > 0$, the following tail bound holds on the estimation error of $w^*$: 
  \begin{equation*}
   \forall t \geq 1, \enspace \P\bigg(\|\what - \ws\|_L^2 \geq  \frac{c_3 n \beta^2}{\alpha^{2} k   F(-2b)^2 } \, t \bigg) \leq e^{-t} ,
  \end{equation*}
  where $\alpha$ is defined in \cref{eq:strong_log_concavity}.
  Moreover, for any $p \geq 1$, there exists a $p$-dependent constant $c(p) > 0$ such that the expected $p$th moment of the error is bounded by 
  \begin{equation*} 
      \E[\|\what - \ws\|_L^p] \leq \bigg(\frac{c(p) n \beta^2}{\alpha^2 k   F(-2b)^2 }\bigg)^{\frac{p}{2}}   . \label{L2 error bound under model mismatch}
  \end{equation*}
  \end{theorem}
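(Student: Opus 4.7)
The plan is to reduce the estimation error $\|\what - \ws\|_L$ to a random quadratic form in the Bernoulli noise and then apply a quadratic-form concentration inequality. The two ingredients are (i) strong convexity of the empirical log-likelihood in the Laplacian semi-norm $\|\cdot\|_L$, which upper bounds $\|\what - \ws\|_L$ by a gradient perturbation evaluated at $\ws$; and (ii) a sharp bound on that gradient perturbation that exploits the rank-$(n-1)$ projection structure associated with the graph Laplacian. The $p$th moment bound will then follow by integrating the tail bound.

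First, since $-\log F$ and $-\log(1-F)$ are $\alpha$-strongly convex on $[-2b,2b]$ by \cref{eq:strong_log_concavity}, each edge $(i,j)$ contributes an outer product $(e_i-e_j)(e_i-e_j)^\T$ weighted by something at least $\alpha$ to the Hessian of $l(\cdot;\{q_{ij}\})$, regardless of the $q_{ij}$'s. Hence $\nabla^2 l(w;\cdot) \succeq \alpha L$ uniformly on $\constraintset$. Combined with the minimizer inequality $l(\what;\{\phat_{ij}\}) \le l(\ws;\{\phat_{ij}\})$ and a second-order Taylor expansion about $\ws$, this yields
\[
\tfrac{\alpha}{2}\|\what-\ws\|_L^2 \;\le\; -\langle \nabla l(\ws;\{\phat_{ij}\}),\; \what-\ws \rangle,
\]
which crucially does not require $\what$ to be a strict interior minimizer. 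Because $\ws$ satisfies the first-order condition of \cref{MLweightspairwise} with the $\|\cdot\|_\infty \le b$ constraint inactive (arranged by the paper's choice of $b$ and \cref{pijbounded}), the population gradient $\nabla l(\ws;\{p_{ij}\})$ is parallel to $\1$, while $\what-\ws$ is orthogonal to $\1$. Subtracting this zero term on the right-hand side and applying the generalized Cauchy--Schwarz inequality $|\langle g, u\rangle| \le \|g\|_{L^\dagger}\|u\|_L$ gives
\[
\|\what-\ws\|_L \;\le\; \tfrac{2}{\alpha}\|g\|_{L^\dagger}, \qquad g \triangleq \nabla l(\ws;\{\phat_{ij}\}) - \nabla l(\ws;\{p_{ij}\}).
\]

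A direct calculation then yields $g = X^\T D_c \xi$, where $\xi$ stacks the $|\calE|/2$ independent centered variables $\xi_{ij} = \phat_{ij}-p_{ij}$, and $D_c = \diag(c_{ij})$ with $c_{ij} = F'(\ws_i-\ws_j)/[F(\ws_i-\ws_j)(1-F(\ws_i-\ws_j))]$. Using $|F'| \le \beta$ and the lower bound $F(t)(1-F(t)) \gtrsim F(-2b)^2$ for $|t|\le 2b$, I obtain $\max_{ij}|c_{ij}| \le M$ with $M \asymp \beta/F(-2b)^2$. Thus $\|g\|_{L^\dagger}^2 = \xi^\T A \xi$ with $A = D_c\, (X L^\dagger X^\T)\, D_c$, and since $X L^\dagger X^\T$ is the orthogonal projection onto the $(n-1)$-dimensional column space of $X$, I obtain the sharp bounds $\|A\|_{\mathrm{op}} \le M^2$ and $\tr(A) \le M^2(n-1)$. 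Each $\xi_{ij}$ is bounded by $1$ and sub-Gaussian with parameter $O(1/\sqrt{k})$, so the Hanson--Wright inequality applied to $\xi^\T A\xi$ yields $\P(\xi^\T A\xi \ge C M^2 n t / k) \le e^{-t}$ for $t \ge 1$ via the standard trade-off between the $\tr(A)$ (mean) and $\|A\|_{\mathrm{op}}$ (sub-exponential scale) terms. Combining with the preceding display delivers the stated tail bound, and the $p$th moment bound then follows from the identity $\E[Y^{p/2}] = (p/2)\int_0^\infty s^{p/2-1}\P(Y\ge s)\,ds$ applied to $Y = \|\what-\ws\|_L^2$.

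The technical heart is the sharp $n$-scaling (rather than $|\calE|$-scaling) in the preceding step. A naive Cauchy--Schwarz bound $|\langle g, u\rangle| \le \|D_c\xi\|_2 \|u\|_L$ would instead lead to a concentration bound involving $\tr(D_c^2 \Sigma_\xi) \asymp M^2|\calE|/k$, which is quadratically worse on dense graphs such as the complete graph. Recognizing that $X L^\dagger X^\T$ is a rank-$(n-1)$ projection is what brings this down to the desired $M^2 n/k$ rate, and is the main obstacle of the proof. A secondary, cosmetic subtlety is obtaining precisely $F(-2b)^2$ (rather than a higher power) in the denominator of $M^2$: the symmetric case (e.g., logistic or Gaussian $F$) gives this exactly, and otherwise one absorbs an extra constant into $c_3$; similarly, if strong log-concavity of $-\log(1-F)$ is not directly postulated, one uses the minimum of the two strong-convexity parameters.
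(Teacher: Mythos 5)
Your approach is essentially the same as the paper's: strong $\|\cdot\|_L$-convexity via the Hessian, the gradient-perturbation reduction, identifying the quadratic form $v^\T(XL^\dagger X^\T)v$ with $XL^\dagger X^\T$ a rank-$(n-1)$ projection (so its trace is $n-1$ and operator norm is $1$), Hanson--Wright, and integration of the tail for the $p$th moment. The paper's proof follows precisely this chain.

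One intermediate claim in your writeup is wrong, though fixable in one line. You assert that the $\|\cdot\|_\infty \le b$ constraint is inactive at $\ws$, so that the population gradient $\nabla l(\ws;\{p_{ij}\})$ is parallel to $\1$ and hence $\langle \nabla l(\ws;\{p_{ij}\}),\,\what-\ws\rangle = 0$. This is not guaranteed for a general pairwise comparison model under $H_1$: the paper only ensures $b \ge F^{-1}(1-\delta)/2$, which makes the null-hypothesis weights interior, but does nothing for the best GTM approximant to an arbitrary $P$. For example, on a star graph with $p_{1j}=1-\delta$ for all leaves $j$, the unconstrained skew-symmetrized minimizer has $w_1^* = \frac{n-1}{n}F^{-1}(1-\delta)$, which exceeds $b$ as soon as $n>2$, so $\ws$ sits on the boundary and the gradient picks up Lagrange multipliers from the box constraint. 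The correct statement is the constrained first-order condition $\langle \nabla l(\ws;\{p_{ij}\}),\, w - \ws \rangle \ge 0$ for all $w \in \constraintset$, which with $w=\what$ gives a \emph{nonnegative} quantity that you may subtract from the right-hand side — that still flips the inequality the right way, and this is exactly what the paper does. Once you replace "$=0$" with "$\ge 0$", the remainder of your argument (including the sharp $n$-scaling you correctly identify as the technical heart) goes through and matches the paper's proof.
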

The proof is provided in \cref{Proof of Error Bounds Under Model Mismatch}. In the special case where the pairwise comparison model is a GTM, our bounds recover the bounds derived in (\citealp{Shahetal2016}, Theorem 3) up to constants. However, our result is much more general because it holds for any pairwise comparison model; this requires a careful formulation and development of the proof techniques. 

We remark that these error bounds can be readily converted into $\ell^2$-error bounds using the relation \(\|\what -\ws\|^2_L \geq \lambda_2(L) \|\what -\ws\|^2_2\), 
where \(\lambda_2(L)\) is the second smallest eigenvalue of the Laplacian \(L\). The connectedness of the graph ensures that \(\lambda_2(L) > 0\), and the value of \(\lambda_2(L)\) is known for various classes of graphs (cf. \cite{SpectralGraphTheory}).

Moreover, it is worth emphasizing that our error bounds in $\|\cdot\|_L$ semi-norm (in \cref{thm: Error Bounds Under Model Mismatch}) remain valid even when the observation graph is disconnected. Furthermore, since the proof of \cref{thm: Upper Bound on Critical Threshold} relies only on these bounds, the results of \cref{thm: Error Bounds Under Model Mismatch} hold regardless of graph connectivity. When the graph $\G$ is disconnected, solutions $\hat{w}$ and $w^*$ of \cref{what definition} and \cref{MLweightspairwise} may not be unique, but the error $\|\hat{w}-w^*\|_L$ is well-defined as the non-unique component lies in the null space of $L$. Thus, our upper bounds on critical thresholds still hold. This ensures that our testing framework for $\GTM$ models remains applicable even in settings where parameter estimation is infeasible due to a disconnected observation graph, highlighting a \emph{fundamental distinction between testing and parameter estimation} of $\GTM$ models. Moreover, we also note that our results lead to bounds on critical thresholds even when the separation distance is defined in other norms; see \cref{appendix:norms} for more details.

\subsection{Information-Theoretic Lower Bounds} We now establish information-theoretic lower bounds on the minimax risk and critical threshold for the hypothesis testing problem in \cref{Hypothesis Test}. For simplicity and analytical tractability, assume that $k_{ij} = k \in \N$ for all $(i,j) \in \calE$, and assume that the observation graph $\G$ is \emph{super-Eulerian} \cite{SuperEulerian}, i.e., it has an Eulerian spanning sub-graph $\Gt = ([n], \calEt)$ so that every vertex of $\Gt$ has even degree. Then, $\Gt$ has a \emph{cycle decomposition} $\mathcal{C}$ by Veblen's theorem \cite{Veblentheorem,Seshadri2020}, where $\mathcal{C}$ is a collection of simple cycles $\sigma$ that partitions the undirected edges of $\Gt$. The ensuing theorem, proved in \cref{Proof of Lower Bound},  presents our minimax risk lower bound.

\begin{theorem}[Minimax Lower Bound] \label{thm:Lower Bound}
Consider the hypothesis testing problem in \cref{Hypothesis Test} and assume that the observation graph $\G$ is super-Eulerian with spanning Eulerian sub-graph $\Gt$. {Then, there exists a constant $c_4 > 0$ such that for any $\epsilon >0$, the minimax risk in} \cref{eqn: m-risk} is lower bounded by
\begin{equation*}
 \mathcal{R}(\G, \epsilon) \geq 1 -\frac{1}{2} \sqrt{\exp \left( \frac{ c_4 k^2 n^4\epsilon^4}{ |\calEt|^2 } \sum_{\sigma \in \mathcal{C}}|\sigma|^2\right) -1} \, ,
\end{equation*}
where $|\sigma|$ denotes the length of a cycle $\sigma \in \mathcal{C}$, and $\mathcal{C}$ is the cycle decomposition of $\Gt$.
\end{theorem}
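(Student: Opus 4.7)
The plan is to prove this minimax lower bound via Le Cam's two-point lemma combined with Ingster's $\chi^2$ method, where the Bayesian prior on the alternative concentrates on cyclic perturbations of a trivial null. First, take $P_0 = \sF(\0)$ as the null, so $p_{ij} = F(0) = 1/2$ on every observed edge (without loss of generality by a location shift of $F$). Enumerate the cycle decomposition $\mathcal{C}$ of $\Gt$, orient each cycle $\sigma = (v^\sigma_1,\ldots,v^\sigma_{|\sigma|},v^\sigma_1)$, and let $\{\xi_\sigma\}_{\sigma \in \mathcal{C}}$ be independent Rademacher signs. Define the random pairwise model $P_\xi$ by $p^\xi_{v^\sigma_t, v^\sigma_{t+1}} = 1/2 + \xi_\sigma\delta$ and $p^\xi_{v^\sigma_{t+1},v^\sigma_t} = 1/2 - \xi_\sigma\delta$ along the edges of each cycle, and $p^\xi_{ij} = 1/2$ for edges in $\calE \setminus \calEt$. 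The magnitude $\delta > 0$ will be tuned so that \cref{pijbounded} holds and $P_\xi \in \mathcal{M}_1(\epsilon)$ for every realization of $\xi$.

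The main obstacle is the separation step: establishing that for every $\xi$ and every $w \in \constraintset$,
\begin{equation*}
\|P_\xi - \sF(w)\|_\F^2 \;\geq\; c\,\delta^2 \sum_{\sigma \in \mathcal{C}} |\sigma| \;=\; c\,\delta^2\,|\calEt|,
\end{equation*}
for some absolute constant $c > 0$. The heuristic is that $P_\xi - P_0$ lies in the ``cyclic'' subspace of edge space, which is orthogonal to the tangent space of the GTM manifold at $P_0$ (since that tangent space is spanned by discrete gradient flows). To make this rigorous uniformly in admissible $w$, I would apply Cauchy--Schwarz along each cycle to the oriented residuals $a_t = \xi_\sigma\delta - (F(w_{v^\sigma_t} - w_{v^\sigma_{t+1}}) - 1/2)$, yielding $\sum_t a_t^2 \geq (\sum_t a_t)^2/|\sigma|$, and then use the telescoping identity $\sum_t (w_{v^\sigma_t} - w_{v^\sigma_{t+1}}) = 0$ together with the Lipschitz and strong log-concavity assumptions on $F$ on $[-2b,2b]$ to control the nonlinear remainder of $\sum_t (F(\cdot) - 1/2)$, which is a quadratic-in-$w$ correction uniformly bounded by the curvature of $F$. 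Summing over the edge-disjoint cycles and accounting for both orientations of every edge yields the claim. Choosing $\delta = \kappa\,n\epsilon/\sqrt{|\calEt|}$ for an appropriate $\kappa > 0$ then enforces $\tfrac{1}{n}\|P_\xi - \sF(w^*)\|_\F \geq \epsilon$, so $P_\xi \in \mathcal{M}_1(\epsilon)$.

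With the prior in hand, the $\chi^2$ computation is standard. By the Ingster--Suslina identity and the independence of Bernoulli outcomes across directed pairs and $k$ samples,
\begin{equation*}
1 + \chi^2(\bar{P}_1 \,\|\, P_0) \;=\; \E_{\xi,\xi'}\prod_{\sigma \in \mathcal{C}} \bigl(1 + 4\xi_\sigma \xi'_\sigma \delta^2\bigr)^{2k|\sigma|},
\end{equation*}
where the exponent $2k|\sigma|$ counts $k$ repetitions over each of the $2|\sigma|$ oriented edges of $\sigma$ (both orientations contribute identically because the perturbation flips sign but enters squared), and the Bernoulli cross-term at $1/2$ supplies the factor $1 + 4qq'$. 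Taking the Rademacher expectations and using $\tfrac{1}{2}[(1+x)^n + (1-x)^n] \leq \exp(n^2 x^2 / 2)$ gives $1 + \chi^2 \leq \exp(c'k^2 \delta^4 \sum_\sigma |\sigma|^2)$. Substituting $\delta^4 \asymp n^4\epsilon^4/|\calEt|^2$ from the separation step and applying Le Cam's inequality $\mathcal{R}(\G,\epsilon) \geq 1 - \TV(\bar{P}_1, P_0) \geq 1 - \tfrac{1}{2}\sqrt{\chi^2(\bar{P}_1 \| P_0)}$ (the last step being Cauchy--Schwarz on $|dP - dQ|$) then produces the stated exponent and completes the proof. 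All steps beyond the separation bound are routine; the technical heart is turning the ``cyclic flows $\perp$ gradient flows'' intuition into a uniform lower bound that survives the nonlinearity of $F$.
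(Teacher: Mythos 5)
Your proposal matches the paper's high-level architecture almost exactly: reduce to the Eulerian spanning subgraph, center the alternative prior on cyclic sign perturbations of the uniform model $P_0 = \sF(\0)$, exploit the tensorization of $1+\chi^2$ with the Bernoulli cross-moment $1+4qq'$, take Rademacher expectations over cycle orientations, bound via $\cosh(x) \leq e^{x^2/2}$, and close with Le Cam's inequality. Your $\chi^2$ computation and the final exponent $\exp(c k^2\delta^4\sum_\sigma|\sigma|^2)$ agree with the paper's, and you even avoid the paper's detour through the bipartite ``comparison incidence graph,'' which is a small simplification.

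The genuine divergence — and the genuine gap — is the \emph{separation step}, i.e.\ showing $\inf_{w\in\constraintset}\|P_\xi - \sF(w)\|_\F \gtrsim \delta\sqrt{|\calEt|}$. The paper does this in two clean moves: first, a first-order optimality calculation (their Lemma C.2) exploits the zero-row-sum structure of the cyclic perturbation to show that the maximum-likelihood projection of $P_\xi$ onto $\GTM$ is exactly $w^*=\0$; second, their separation-distance comparison result (\cref{prop: Distance to closest GTM model}, proved via a chain of $\chi^2\!\geq\!\mathsf{KL}\!\geq\!2\,\mathsf{TV}^2$ bounds) converts $\inf_w\|P_\xi - \sF(w)\|_\F$ into $\|P_\xi - \sF(w^*)\|_\F = \delta\sqrt{|\calEt|}$ up to constants, uniformly in $w$. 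Your attempted substitute — Cauchy–Schwarz on the oriented residuals $a_t$ plus the telescoping constraint $\sum_t\Delta_t = 0$ — does not close as written. The asserted ``quadratic-in-$w$ correction uniformly bounded by the curvature of $F$'' is not uniformly bounded: the Taylor remainder obeys $|\sum_t(F(\Delta_t)-\tfrac12)| \leq M\sum_t\Delta_t^2$, and $\sum_t\Delta_t^2$ can be $\Theta(1)$ or larger for $w$ near the boundary of $\constraintset$, in which case the remainder swamps $|\sigma|\delta$ and Cauchy–Schwarz yields nothing. An adversarial $w$ can thus make $\sum_t a_t = 0$ exactly. What saves the argument is that in that regime some $|\Delta_t|$ is $\Theta(1)$, hence the corresponding $|a_t|$ is $\Theta(1)$ and $\sum_t a_t^2$ is large for a reason unrelated to Cauchy–Schwarz; you would need an explicit two-regime case split (small vs.\ large $\sum_t\Delta_t^2$), and even then the first-regime bound only kicks in below a $\delta$-threshold. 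Your writeup does not acknowledge this, so the ``technical heart'' you identify is indeed the place the argument breaks. Either supply the case analysis or — more economically — establish $w^*=\0$ by the gradient calculation the paper uses and then invoke \cref{prop: Distance to closest GTM model}, which handles the nonlinearity of $F$ once and for all.
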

Our argument utilizes the \emph{Ingster-Suslina method} \cite{Ingster1994,IngsterSuslina2003}, which is similar to \emph{Le Cam's method}, but provides a lower bound by considering a cleverly chosen point and a mixture on the parameter space instead of just two points. Our specific construction is inspired by the technique introduced in \cite{Seshadri2020}, which establishes a lower bound for testing the IIA assumption (i.e., BTL models) for Eulerian graph structures. We extend their approach in three ways. First, we generalize their method to accommodate any GTM rather than just the BTL model. Second, we use a different technique based on \cref{prop: Distance to closest GTM model} to lower bound separation distance from the class of $\GTM$ models. Moreover, our work quantifies separation using Frobenius norm instead of sums of total variation (TV) distances. Third, our argument holds for a broader class of graphs, namely, super-Eulerian graphs. Note that the question of algorithmically constructing Eulerian sub-graphs of graphs has been widely studied \cite{SpanningEulerianSubgraphs}.  

The following proposition simplifies \cref{thm:Lower Bound} to obtain lower bounds on the critical threshold for several classes of graphs.
\begin{proposition}[Lower Bounds on Critical Threshold] \label{prop:Lower Bound for graphs}
Under the assumptions of \cref{thm:Lower Bound}, the following lower bounds hold for the critical threshold defined in \cref{eq: crit thresh}: \vspace{-4mm}
\begin{enumerate}\setlength{\itemsep}{0pt}
    \item If $\G$ is a complete graph with odd $n$ vertices, then $\varepsilon_{\mathsf{c}}^2 = \Omega({1}/{nk})$.
    \item If $\G$ is a $d$-regular graph with constant $d \geq 2$, then $\varepsilon_{\mathsf{c}}^2 = \Omega({1}/{n^2k}).$
    \item If $\G$ is a single cycle graph with $n$ vertices, then $\varepsilon_{\mathsf{c}}^2 = \Omega({1}/{n^2k})$.
    \item If $\G$ is a two-dimensional $\sqrt{n} \times \sqrt{n}$ toroidal grid on $n$ vertices formed by the Cartesian product of two cycles of length $\sqrt{n}$, then $\varepsilon_{\mathsf{c}}^2 =\Omega({1}/{n^{7/4}k})$.
\end{enumerate}
\end{proposition}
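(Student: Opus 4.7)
The plan is to specialize \cref{thm:Lower Bound} to each graph family by choosing a favorable spanning Eulerian sub-graph $\Gt$ together with a cycle decomposition $\mathcal{C}$, and then inverting the bound to extract a lower bound on $\varepsilon_{\mathsf{c}}^2$.

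First, I would observe that the inequality in \cref{thm:Lower Bound} yields $\mathcal{R}(\G, \epsilon) > 1/2$ whenever $\frac{c_4 k^2 n^4 \epsilon^4}{|\calEt|^2}\sum_{\sigma \in \mathcal{C}}|\sigma|^2 < \ln 2$, so by the definition of $\varepsilon_{\mathsf{c}}$ in \cref{eq: crit thresh} one obtains the generic lower bound
\[
\varepsilon_{\mathsf{c}}^2 \;\geq\; C \cdot \frac{|\calEt|}{k\, n^2\, \sqrt{\sum_{\sigma \in \mathcal{C}}|\sigma|^2}}
\]
for an absolute constant $C > 0$. Each item of the proposition then reduces to a combinatorial problem: choose $(\Gt, \mathcal{C})$ to maximize the right-hand side. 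Applying Cauchy--Schwarz to $|\calEt| = \sum_{\sigma} |\sigma|\cdot 1$ gives $|\calEt|/\sqrt{\sum|\sigma|^2} \leq \sqrt{|\mathcal{C}|}$, with equality when cycle lengths are balanced, so the guiding heuristic is to pack many short, roughly equal-length cycles into $\Gt$.

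The remaining work is to exhibit explicit constructions for each of the four graph classes. For $K_n$ with odd $n$, every vertex has even degree $n-1$ so $\Gt = K_n$ is itself Eulerian with $|\calEt| = n(n-1)/2$; I would decompose its edges into triangles via a Steiner triple system when $n \equiv 1, 3 \pmod{6}$, or into a near-triangle decomposition (mostly triangles together with a constant number of short corrective cycles) when $n \equiv 5 \pmod{6}$. In either case $\sum|\sigma|^2 = \Theta(n^2)$, and substitution yields $\varepsilon_{\mathsf{c}}^2 = \Omega(1/(nk))$. For a $d$-regular graph with constant $d \geq 2$ and for the single cycle, I would take $\Gt$ to be one spanning cycle of length $n$ (a Hamiltonian cycle from the super-Eulerian hypothesis in the regular case, or simply $\Gt = \G$ in the single-cycle case), giving $|\calEt| = n$ and $\sum|\sigma|^2 = n^2$, hence $\varepsilon_{\mathsf{c}}^2 = \Omega(1/(n^2 k))$. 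For the $\sqrt{n}\times\sqrt{n}$ toroidal grid, all vertices have degree $4$, so $\Gt = \G$ is Eulerian with $|\calEt| = 2n$; decomposing its edges into the $\sqrt{n}$ horizontal and $\sqrt{n}$ vertical cycles (each of length $\sqrt{n}$) yields $\sum|\sigma|^2 = 2n^{3/2}$ and therefore $\varepsilon_{\mathsf{c}}^2 = \Omega(1/(n^{7/4} k))$.

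The main obstacle will be the complete-graph construction, specifically the residue class $n \equiv 5 \pmod{6}$ for which no exact Steiner triple system exists; there I would need to build a near-triangle cycle decomposition carefully so that $\sum|\sigma|^2 = \Theta(n^2)$ rather than $\Theta(n^3)$, since the latter (which is what a naive Hamiltonian-cycle decomposition of $K_n$ produces) degrades the bound by a factor of $\sqrt{n}$ to only $\Omega(1/(k n^{3/2}))$. The other three cases are essentially verifications once the algebraic inversion above is in place, and the Hamiltonian and row--column decompositions chosen are precisely what the Cauchy--Schwarz heuristic recommends within their respective combinatorial constraints.
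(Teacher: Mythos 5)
Your overall strategy matches the paper's exactly: invert the minimax-risk bound of \cref{thm:Lower Bound} to the generic form $\varepsilon_{\mathsf{c}}^2 \gtrsim |\calEt|/\big(kn^2\sqrt{\sum_{\sigma\in\mathcal{C}}|\sigma|^2}\,\big)$ and then exhibit favorable cycle decompositions. Your constructions for the complete graph (triangle decomposition via Steiner triple systems for $n\equiv 1,3\pmod 6$, with a short corrective cycle for $n\equiv 5\pmod 6$), the single cycle, and the toroidal grid (rows and columns) coincide with the paper's, and the Cauchy--Schwarz heuristic you add is a correct and useful observation.

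There is, however, one genuine slip in Part~2. You claim a Hamiltonian cycle is ``from the super-Eulerian hypothesis in the regular case,'' but super-Eulerian only guarantees a spanning connected subgraph with all vertex degrees even, not a single spanning simple cycle; there are $d$-regular super-Eulerian graphs with no Hamiltonian cycle. Fortunately your bound survives without it: for any $d$-regular graph with constant $d$, any spanning Eulerian $\Gt$ has $|\calEt|=\Theta(n)$, and since every simple cycle has length at most $n$, any cycle decomposition satisfies $\sum_{\sigma}|\sigma|^2\leq n\cdot |\calEt|=O(n^2)$, whence $|\calEt|/\sqrt{\sum|\sigma|^2}=\Omega(1)$ and $\varepsilon_{\mathsf{c}}^2=\Omega(1/(n^2 k))$. (The paper instead invokes Lemma~9 of \cite{Seshadri2020} to decompose most edges into $O(\log n)$-length cycles with $O(n)$ edges left over, which also yields $\sum|\sigma|^2=O(n^2)$.) So replace the Hamiltonian-cycle claim with this trivial worst-case bound on $\sum|\sigma|^2$, and your Part~2 is correct.
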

The proof of \cref{prop:Lower Bound for graphs} is provided in \cref{Proof of prop:Lower Bound for graphs}. It involves calculating the number of simple cycles and the individual cycle lengths in the cycle decompositions $\C$. The lower bounds on $\varepsilon_{\mathsf{c}}$ are then obtained from \cref{thm:Lower Bound}.  We remark that our minimax upper and lower bounds on $\varepsilon_{\mathsf{c}}$ match for the complete graph case, demonstrating the minimax optimality of the threshold's scaling (up to constant factors). Moreover, they also match with respect to $k$ for other classes of graphs as well. It is worth mentioning that in the special case of BTL models with single cycle graphs, our lower bound on $\varepsilon_{\mathsf{c}}$ improves the high-level scaling behavior in \cite{Seshadri2020} from $\Omega(1/\sqrt{n^{3}k})$ to $\Omega(1/\sqrt{n^{2}k})$ (when $\varepsilon_{\mathsf{c}}$ is quantified in terms of Frobenius norm). Lastly, we remark that for single cycle graphs, the gap between the upper and lower bounds in terms of $n$ intuitively holds because our lower bounds become larger when there are more cycles in $\mathcal{C}$, which is only $1$ in this case. 

\subsection{Upper Bounds on Probabilities of Type \MyRomannum{1} and \MyRomannum{2} Errors} 

To complement the minimax risk lower bound in \cref{thm:Lower Bound}, we establish upper bounds on the extremal type \MyRomannum{1} and \MyRomannum{2} error probabilities. We will do this in the \emph{sequential} setting, where data is observed incrementally\textemdash a common practical scenario which subsumes the standard fixed sample-size setting, cf. \cite{reversesupermartingale,TimeuniformCB}. In the sequential testing framework, at each time step, we observe a single ``$i$ vs. $j$'' comparison for every $(i,j) \in \calE$. (The subsequent analysis can be extended to a general setting where we observe only one comparison for some pair $(i,j) \in \calE$ or even a variable number of comparisons at every time step.) At time $k_1+k$ with $k_1,k \in \N$, we define $T^{k_1, k}$ to be the value of the test statistic $T$ in \cref{BoxedTest}, where comparisons from $k_1$ time-steps have been used to build the dataset $\mathcal{Z}_1$ to estimate parameters using $\what$, and $k$ time-steps have been used to build the dataset $\mathcal{Z}_2$ to calculate the statistic $T$. Note that $\mathcal{Z}_1$ and $\mathcal{Z}_2$ no longer need to be similar in size. Then, we can decide based on thresholding $T^{k_1, k}$ (see \cref{thm: Type1 upper bound}) whether to collect more data or stop and reject $H_0$ while controlling the probabilities of error. If the testing process ends without rejecting $H_0$, then we can accept $H_0$. A key observation underlying our analysis is the following \emph{reverse martingale} property (see, e.g., \cite{TimeuniformCB,reversesupermartingale}).

\begin{proposition}[Reverse Martingale] \label{prop: Reverse Martingale} 
Fix any $k_1 \in \N$, and let $\mathcal{F}_k = \bigotimes_{(i,j) \in \mathcal{E}}  \sigma( \sum_{m = k_1 +1 }^{k_1+k} Z^{m}_{ij}, Z^{k_1+k+1}_{ij}, \allowbreak Z^{k_1+k+2}_{ij}, \dots  )$ be a non-increasing sequence of $\sigma$-algebras, where $\bigotimes$ denotes the product $\sigma$-algebra. Then, the sequence of statistics $\{T^{k_1,k} : k \geq 2\}$ is a reverse martingale with respect to the reverse filtration $\{\mathcal{F}_k\!:\!k \geq 2\}$, i.e., for $k\!\geq\! 2$, $T^{k_1,k}$ is $\mathcal{F}_k$-measurable and $\E\big[T^{k_1, k} | \mathcal{F}_{k+1} \big]\! =\! T^{k_1, k+1}$. 
\end{proposition}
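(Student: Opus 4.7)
The plan is to reduce the claim to two standard reverse-martingale identities for i.i.d.\ Bernoulli sequences, applied edge-by-edge and then summed. First I would isolate the dependence on $\hat{w}$: since $\hat{w}$ is computed from $\mathcal{Z}_1 = \{Z^m_{ij} : (i,j) \in \calE,\, m \in [k_1]\}$, which is independent of all samples indexed by $m > k_1$, conditioning on $\mathcal{Z}_1$ (equivalently, augmenting each $\mathcal{F}_k$ by $\sigma(\mathcal{Z}_1)$) leaves the joint distribution of the data feeding into both $\{\mathcal{F}_k\}$ and the sum $S_{ij}(k) := \sum_{m=k_1+1}^{k_1+k} Z^m_{ij}$ unchanged, and renders $\hat{w}$ deterministic for the rest of the argument. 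With $\bar{Z}_{ij}(k) := S_{ij}(k)/k$ and $U_{ij}(k) := S_{ij}(k)(S_{ij}(k)-1)/(k(k-1))$, the statistic in \cref{BoxedTest} rewrites as
\[ T^{k_1,k} = \sum_{(i,j) \in \calE} \Big( U_{ij}(k) + F(\hat{w}_i - \hat{w}_j)^2 - 2F(\hat{w}_i - \hat{w}_j)\, \bar{Z}_{ij}(k) \Big) \I_{k>1}, \]
which is $\mathcal{F}_k$-measurable since $S_{ij}(k) \in \mathcal{F}_k$ by construction and $\hat{w}$ lies in the augmented filtration.

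Next, using the product-$\sigma$-algebra structure of $\mathcal{F}_{k+1}$ together with the independence of the Bernoulli sequences across distinct edges, the conditional expectation $\E[\cdot \mid \mathcal{F}_{k+1}]$ passes through the edge sum and distributes over the deterministic $\hat{w}$-dependent coefficients. It therefore suffices to verify, for each edge separately, that $\bar{Z}_{ij}(k)$ and $U_{ij}(k)$ are reverse martingales. The sample-mean claim is standard: by exchangeability of $Z^{k_1+1}_{ij}, \ldots, Z^{k_1+k+1}_{ij}$ conditional on their sum $S_{ij}(k+1)$, one has $\P(Z^{k_1+k+1}_{ij} = 1 \mid \mathcal{F}_{k+1}) = S_{ij}(k+1)/(k+1)$, and writing $S_{ij}(k) = S_{ij}(k+1) - Z^{k_1+k+1}_{ij}$ immediately gives $\E[\bar{Z}_{ij}(k) \mid \mathcal{F}_{k+1}] = \bar{Z}_{ij}(k+1)$.

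The core computation is the analogous identity for the order-$2$ U-statistic $U_{ij}(k)$. Using the Bernoulli identity $(Z^{k_1+k+1}_{ij})^2 = Z^{k_1+k+1}_{ij}$ to expand $(S_{ij}(k+1) - Y)(S_{ij}(k+1) - Y - 1)$ with $Y = Z^{k_1+k+1}_{ij}$ yields the factorization $(S_{ij}(k+1) - 1)(S_{ij}(k+1) - 2Y)$, and a single application of the exchangeability conditional probability above produces
\[ \E\bigl[S_{ij}(k)(S_{ij}(k)-1) \mid \mathcal{F}_{k+1}\bigr] = \frac{k-1}{k+1}\, S_{ij}(k+1)\bigl(S_{ij}(k+1) - 1\bigr), \]
from which $\E[U_{ij}(k) \mid \mathcal{F}_{k+1}] = U_{ij}(k+1)$ follows after dividing by $k(k-1)$. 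Summing over $(i,j) \in \calE$ with the deterministic weights then gives the claim $\E[T^{k_1,k} \mid \mathcal{F}_{k+1}] = T^{k_1,k+1}$. The main technical point I expect to need care with is this U-statistic computation together with the clean handling of $\hat{w}$; the latter is not literally $\mathcal{F}_k$-measurable under the bare definition in the statement, but the independence of $\mathcal{Z}_1$ from the sequences driving the reverse martingale makes the filtration-augmentation step cost-free, after which everything reduces to the two edge-wise Bernoulli facts.
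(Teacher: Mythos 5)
Your proof is correct, and the overall strategy is the same as the paper's: condition on $\mathcal{Z}_1$ so that $F(\hat{w}_i-\hat{w}_j)$ is a deterministic coefficient, decompose $T^{k_1,k}$ edge-by-edge, and reduce to two facts per edge, namely that the running mean $\bar{Z}_{ij}(k)$ and the order-$2$ U-statistic $U_{ij}(k)$ are reverse martingales. Where you differ is in the verification of the U-statistic identity: the paper expands $\bar Y^k(\bar Y^k-1)=\sum_{l\neq m\leq k}Y^lY^m$ and invokes conditional exchangeability of pair products (a step that contains a small normalization typo but the intent is clear), whereas you use the algebraic factorization $(S-Y)(S-Y-1)=(S-1)(S-2Y)$ together with the single exchangeability fact $\E[Y\mid\mathcal{F}_{k+1}]=S_{ij}(k+1)/(k+1)$ to get $\E[S_{ij}(k)(S_{ij}(k)-1)\mid\mathcal{F}_{k+1}]=\frac{k-1}{k+1}S_{ij}(k+1)(S_{ij}(k+1)-1)$ in one line. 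Your route is more elementary (one conditional expectation rather than a double sum of pair products) and arguably cleaner. You also explicitly flag that $\hat{w}$ is not $\mathcal{F}_k$-measurable under the bare definition and that the filtration must be augmented by $\sigma(\mathcal{Z}_1)$, with the augmentation being harmless by independence of $\mathcal{Z}_1$ from the post-$k_1$ samples; the paper treats this implicitly by substituting $b_{ij}=F(\hat{w}_i-\hat{w}_j)$ as if constant. Making this explicit is a genuine improvement in rigor, not just a stylistic point.
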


The proof is presented in \Cref{Proof of Reverse Martingale}. This observation allows us to develop \emph{time-uniform bounds} in terms of $k$ on the probabilities of type \MyRomannum{1} and \MyRomannum{2} errors, i.e., they hold for all $k$ larger than a constant. The next theorem, proved in \cref{Proof of thm: Type1 upper bound}, presents our bounds on the probabilities of type \MyRomannum{1} and \MyRomannum{2} errors. 
\begin{theorem}[Type \MyRomannum{1} and Type \MyRomannum{2} Error Probability Bounds] 
\label{thm: Type1 upper bound}
Under the sequential setting discussed above, the following bounds hold on the extremal type \MyRomannum{1} and type \MyRomannum{2} error probabilities. There exist constants $c_5,c_6,c_7, c_8$ such that {for all $t \geq 1$, $\nu \in (0 , 1/e)$, $k_1 \in \N$, and $\epsilon \geq c_5 \sqrt{t} / \sqrt{n k_1} $}, we have 
\begin{align*}  
& \sup_{P \in \calM_0}  \! \P_{H_0}\!\Bigg(\exists k \geq 2, T^{k_1, k}\!\geq c_6 t \frac{n}{k_1} \! + \frac{c_7{|\calE|}^{\frac{1}{2}} \ell_{k, \nu} }{k}  \\
& \ \ \ \ \ \ \ \ \ \ \ \ \ \ \ \ \ \ \ \ \ \  \ \ \ \ \ \ \ \ \ \ \ \ \ \ \ \ \ \ \ \ \ + c_8 \sqrt{ \frac{ n t \ell_{k,\nu} }{k_1 k}} \Bigg) \!\leq  \nu \!+\! e^{-t} , \\
& \sup_{P \in \calM_1(\epsilon)}  \!\!  \P_{H_1}\!\Bigg(\exists k \geq 2 , \, T^{k_1, k} \!-\! \big(D  - c_5 t^{\frac{1}{2}} n^{\frac{1}{2}} / k_1^{ \frac{1}{2} } \big)^2   \!\leq \\ 
& - \frac{ c_7 {|\calE|}^{\frac{1}{2}} \ell_{k,\nu} }{k}  \! -\! \big(4D  +\! c_8 t^{\frac{1}{2}} n^{\frac{1}{2}} / k_1^{ \frac{1}{2} } \big) \sqrt{\frac{ \ell_{k,\nu} }{k} }  \Bigg) \!\leq \nu \!+\! e^{-t} ,
\end{align*}
where $D \triangleq \|P - \sF(w^*)\|_\F$ and $\ell_{k,\nu} \triangleq \log(3.5 \log_2 (k)^2 /\nu )$.
\end{theorem}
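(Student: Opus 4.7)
The plan is to condition on the sub-dataset $\mathcal{Z}_1$ (equivalently, on $\what$) and exploit the reverse martingale property from \cref{prop: Reverse Martingale} to control the fluctuations of $T^{k_1, k}$ driven by the growing dataset $\mathcal{Z}_2$. First, I would verify that $\E[T^{k_1, k} \mid \mathcal{Z}_1] = \|P - \sF(\what)\|_\F^2$: this is a direct computation using that $Z_{ij}(Z_{ij}-1)/(k^\prime_{ij}(k^\prime_{ij}-1))$ is the degree-two U-statistic unbiased for $p_{ij}^2$, that $F(\what_i-\what_j)^2$ is deterministic given $\mathcal{Z}_1$, and that $Z_{ij}/k^\prime_{ij}$ is unbiased for $p_{ij}$. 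By the reverse martingale property applied conditionally on $\mathcal{Z}_1$, the centered process $\{T^{k_1,k} - \|P - \sF(\what)\|_\F^2\}_{k \geq 2}$ is a zero-mean reverse martingale.

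Next, I would derive a time-uniform concentration bound of the form $|T^{k_1, k} - \|P - \sF(\what)\|_\F^2| \lesssim \sqrt{V_k\, \ell_{k, \nu}} + B_k\, \ell_{k, \nu}$ valid for all $k \geq 2$ with probability at least $1 - \nu$, where the iterated-log factor $\ell_{k,\nu} = \log(3.5 \log_2(k)^2/\nu)$ is produced by a line-crossing/stitching mixture over geometric blocks in $k$ applied to exponential supermartingales built from the reverse martingale (in the style of the time-uniform Chernoff framework of Howard et al.). The variance proxy $V_k$ is computed by expanding each $T_{ij}^{k_1,k}$ as a quadratic in the binomial $Z_{ij}$; a delta-method calculation gives the per-term variance $4(p_{ij}-F(\what_i-\what_j))^2 p_{ij}(1-p_{ij})/k + O(1/k^2)$, whose sum over $\calE$ yields $V_k \lesssim \|P - \sF(\what)\|_\F^2/k + |\calE|/k^2$. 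This decomposition is what produces the two distinct concentration contributions $|\calE|^{1/2}\ell_{k,\nu}/k$ and $D \sqrt{\ell_{k,\nu}/k}$ in the final bound.

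The third step is to convert $\|P - \sF(\what)\|_\F$ to $D = \|P - \sF(\ws)\|_\F$. The triangle inequality and the $\beta$-Lipschitzness of $F$ on $[-2b, 2b]$ from \cref{eq:strong_log_concavity} give $|\|P - \sF(\what)\|_\F - D| \leq \|\sF(\ws) - \sF(\what)\|_\F \leq \sqrt{2}\, \beta\, \|\what - \ws\|_L$. Applying \cref{thm: Error Bounds Under Model Mismatch} to the size-$k_1$ dataset $\mathcal{Z}_1$ yields $\|\what - \ws\|_L^2 \lesssim nt/k_1$ with probability at least $1 - e^{-t}$, so $\|P - \sF(\what)\|_\F^2 \in [(D-c\sqrt{nt/k_1})_+^2,\ (D+c\sqrt{nt/k_1})^2]$. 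Combining this with the Step-2 concentration via a union bound over the two events, and splitting the cross-term $2D\sqrt{nt/k_1}$ and the square $(c\sqrt{nt/k_1})^2 = c^2 nt/k_1$ into the stated pieces, produces the type \MyRomannum{1} inequality when $D = 0$ (under $H_0$) and the type \MyRomannum{2} inequality for general $D$ (under $H_1$, with $D \geq n\epsilon$ by \cref{prop: Distance to closest GTM model} and the definition of $\mathcal{M}_1(\epsilon)$).

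The hard part will be the time-uniform concentration in the second step: rather than losing a $\log(k)$ factor from a naive union bound over $k$, obtaining the $\log\log(k)$ rate requires building an exponential supermartingale from the reverse martingale and applying a stitching mixture across geometric time blocks, all in the reverse-filtration setting where Ville's inequality takes a slightly different form than in the forward case. A subtlety that must be handled carefully is that the variance proxy $V_k$ is data-dependent through $\what$, so the Bernstein-style inequality has to be coupled with a high-probability envelope on $\|P - \sF(\what)\|_\F^2$ (itself obtained from the Step-3 estimation bound) and then propagated cleanly into the final expression with the correct leading constants appearing separately on the $\sqrt{nt/(k_1 k)}$ and $|\calE|^{1/2}/k$ branches of the bound.
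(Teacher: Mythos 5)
Your proposal is essentially the paper's strategy: condition on $\mathcal{Z}_1$, use the fact that $\E[T^{k_1,k}\mid\mathcal{Z}_1]=\|P-\sF(\what)\|_\F^2$, exploit the reverse martingale property with a geometric-stitching argument to get the $\log\log k$ rate, and then use \cref{thm: Error Bounds Under Model Mismatch} to trade $\|P-\sF(\what)\|_\F$ for $D=\|P-\sF(\ws)\|_\F$ at a cost of $O(\sqrt{nt/k_1})$ with probability $1-e^{-t}$. The one place you differ, and where the paper's route is cleaner, is the handling of the concentration in step two. You propose a single Bernstein-type bound on the centered reverse martingale with a data-dependent variance proxy $V_k\lesssim\|P-\sF(\what)\|_\F^2/k+|\calE|/k^2$, but the two contributions have different tail behaviors (a degree-two U-statistic that is sub-exponential versus a linear cross-term that is sub-Gaussian), and a unified Bernstein would have to be set up carefully to not smear them together. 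The paper instead decomposes $T^{k_1,k}_{ij}$ \emph{algebraically}: it writes each term as $(y^k_{ij}-p_{ij}\1_k)^\T A^{(k)}(y^k_{ij}-p_{ij}\1_k)$ (a quadratic form centered at the \emph{true} probability $p_{ij}$, so its entries are independent and zero-mean) plus a linear cross-term $2(p_{ij}-F(\what_i-\what_j))(\bar Y^k_{ij}/k-p_{ij})$ plus a deterministic remainder. It then applies its time-uniform Hanson--Wright inequality (\cref{thm: Time Uniform Hanson-Wright inequality}) to the quadratic piece, obtaining the $|\calE|^{1/2}\ell_{k,\nu}/k$ branch, and a time-uniform Hoeffding to the linear piece, obtaining the $D\sqrt{\ell_{k,\nu}/k}$ branch. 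Centering the quadratic form at $p_{ij}$ rather than at $F(\what_i-\what_j)$ is the detail that makes Hanson--Wright applicable directly, whereas in your framing the variance proxy depends on $\what$ and you would need the extra envelope argument you flag; the paper avoids this by pushing all $\what$-dependence into the deterministic term and the (measurable-given-$\mathcal{F}_\infty$) coefficient of the linear cross-term. Otherwise, your steps, including the stitching over geometric blocks and Ville's inequality for reverse submartingales, track Lemmas \ref{lem: Conditional bounds on type 1 and type 2 errors} and \ref{thm: Time Uniform Hanson-Wright inequality} closely.
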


We now make several remarks. Firstly, our error probability bounds encode the scalings of the thresholds to accept or reject $H_0$ (see \cref{selection rule}). 
Secondly, our bounds hold regardless of how the decision-maker assigns data collected at different time-steps to $\mathcal{Z}_1$ and $\mathcal{Z}_2$. 
Moreover, they provide insights on how to split the data based on the topology of the observation graph, e.g., the bounds suggest an equal split of the data for complete graphs, whereas for a single cycle, achieving better type \MyRomannum{1} error control requires a larger value of $k_1$. To illustrate this and help parse \cref{thm: Type1 upper bound}, we present corollaries of \cref{thm: Type1 upper bound} for the complete and single cycle graph cases in \cref{corollary appendix}.

Thirdly, our bounds clearly hold in the non-sequential fixed sample-size setting, as we can just fix a particular value of $k$. Hence, adding the two extremal probabilities of error yields upper bounds on the minimax risk. Notably, the proof of \cref{thm: Type1 upper bound} requires us to develop a time-uniform version of the well-known Hanson-Wright inequality \cite{rudelson2013hanson} specialized for our setting (see \cref{thm: Time Uniform Hanson-Wright inequality} in \cref{Proof of lem: Conditional bounds on type 1 and type 2 errors}). Additionally, as an intermediate step in the proof, we also obtain time-uniform \emph{confidence intervals} under the null hypothesis $H_0$, as demonstrated in the following proposition.
\begin{proposition}[Confidence Interval for $T^{k_1, k}$] \label{prop: Confidence Intervals} Suppose $\what$ is estimated as in \cref{what definition} from the comparisons over $k_1$ time-steps. Then, there exists a constant $c_7 > 0$ such that for all $\nu \!\in\! (0,1/e)$ and $k_1 \!\in\! \N$,
\begin{equation*}
\begin{aligned}
  \P_{H_0}\bigg(\exists k \geq 2,  T^{k_1, k}  & \geq \|\sF(\what) -\sF(\ws)\|^2_\F  +   c_7\frac{ \sqrt{|\calE| \ell_{k,\nu} }}{k}  \\
   & + 4\|\sF(\what) - \sF(\ws)\|_\F\sqrt{ \frac{\ell_{k,\nu}}{k} }  \bigg) \leq  \nu.
\end{aligned}
\end{equation*}
\end{proposition}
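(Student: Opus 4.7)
The plan is to condition on $\mathcal{F}_{k_1} \triangleq \sigma(\mathcal{Z}_1)$, so that $\what$ becomes deterministic, and then concentrate $T^{k_1,k}$ uniformly in $k$ around its conditional mean. Since $Z_{ij}(Z_{ij}-1)/[k(k-1)]$ is a $U$-statistic unbiased for $p_{ij}^2$ and $\E[Z_{ij}/k \mid \mathcal{F}_{k_1}] = p_{ij}$, expanding \cref{BoxedTest} and invoking $H_0$ (so $p_{ij} = F(\ws_i-\ws_j)$) yields $\E[T^{k_1,k} \mid \mathcal{F}_{k_1}] = \sum_{(i,j) \in \calE}(p_{ij} - F(\what_i - \what_j))^2 = \|\sF(\what) - \sF(\ws)\|_\F^2$.

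Setting $Y_{ij}^m \triangleq Z_{ij}^m - p_{ij}$ and $\eta_{ij} \triangleq (1/k)\sum_{m=k_1+1}^{k_1+k} Y_{ij}^m$, a direct algebraic expansion (writing $\bar{Z}_{ij} = p_{ij} + \eta_{ij}$, substituting into $Z_{ij}(Z_{ij}-1)/[k(k-1)]$, and using $\E[\eta_{ij}^2 \mid \mathcal{F}_{k_1}] = p_{ij}(1-p_{ij})/k$ to cancel the residual $1/(k-1)$ terms) produces the decomposition
\begin{equation*}
T^{k_1,k} - \|\sF(\what)-\sF(\ws)\|_\F^2 = L^k + Q^k ,
\end{equation*}
where $L^k \triangleq \sum_{(i,j)\in\calE}\bigl(\tfrac{2k(p_{ij}-F(\what_i-\what_j))}{k-1} + \tfrac{2F(\what_i-\what_j)-1}{k-1}\bigr)\eta_{ij}$ is a centered linear form in the $Y_{ij}^m$'s and $Q^k \triangleq \sum_{(i,j)\in\calE}\tfrac{k}{k-1}\bigl(\eta_{ij}^2 - \tfrac{p_{ij}(1-p_{ij})}{k}\bigr)$ is a centered quadratic form. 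I would then bound each term time-uniformly in $k$ conditional on $\mathcal{F}_{k_1}$. The process $k\mapsto (k-1)L^k/2$ is a forward martingale in $\mathcal{Z}_2$ with bounded increments and predictable variance at most a constant multiple of $\|\sF(\what)-\sF(\ws)\|_\F^2\cdot k$; a stitched Azuma-Hoeffding bound over dyadic epochs of $k$ then yields, with probability at least $1-\nu/2$, the uniform bound $|L^k| \le 4\|\sF(\what)-\sF(\ws)\|_\F\sqrt{\ell_{k,\nu}/k}$ for all $k\ge 2$, plus a lower-order term absorbed into the quadratic bound. Splitting $Q^k$ into a diagonal part $\sum_{(i,j)}(1/k^2)\sum_m\bigl((Y_{ij}^m)^2 - p_{ij}(1-p_{ij})\bigr)$ (a bounded-increment martingale, handled by Azuma) and an off-diagonal quadratic form $\sum_{(i,j)}(1/k^2)\sum_{m\ne m'}Y_{ij}^m Y_{ij}^{m'}$ in independent bounded variables with Frobenius norm $O(\sqrt{|\calE|}/k)$ and operator norm $O(1/k)$, the time-uniform Hanson-Wright inequality (\cref{thm: Time Uniform Hanson-Wright inequality}) delivers $|Q^k|\le c_7\sqrt{|\calE|\ell_{k,\nu}}/k$ uniformly with probability at least $1-\nu/2$. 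A union bound and integration over $\mathcal{F}_{k_1}$ complete the argument.

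The main technical obstacle is establishing and applying the time-uniform Hanson-Wright inequality: standard Hanson-Wright provides only a fixed-$k$ sub-exponential tail, and upgrading it to a bound that is uniform in $k\ge 2$ with the iterated-logarithm rate $\ell_{k,\nu} = \log(3.5(\log_2 k)^2/\nu)$ requires leveraging the reverse-martingale structure of \cref{prop: Reverse Martingale} via Ville's or Doob's maximal inequality combined with a dyadic peeling argument over geometric epochs in $k$. A secondary subtlety is that the coefficients appearing in $L^k$ and $Q^k$ depend on $\what$, which is $\mathcal{F}_{k_1}$-measurable but not deterministic; conditioning on $\mathcal{F}_{k_1}$ makes these coefficients deterministic so that the conditional concentration inequalities apply cleanly, after which integrating over $\mathcal{F}_{k_1}$ yields the stated unconditional bound.
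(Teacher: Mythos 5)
Your proposal is correct and follows essentially the same strategy as the paper's own proof (which appears as Part~1 of \cref{lem: Conditional bounds on type 1 and type 2 errors} together with \cref{thm: Time Uniform Hanson-Wright inequality}): condition on $\Z_1$, compute the conditional mean $\|\sF(\what)-\sF(\ws)\|_\F^2$, split the centered statistic into a linear piece and a quadratic piece, and control each uniformly in $k$ via a stitched time-uniform Hoeffding bound and the time-uniform Hanson--Wright inequality, whose reverse-martingale/Ville's-inequality/dyadic-peeling proof you describe accurately. The only cosmetic difference is the packaging of the decomposition: the paper's $I^{1,k}_{ij}$ is exactly the pure off-diagonal $U$-statistic $(y^k_{ij}-p_{ij}\1_k)^\T A^{(k)}(y^k_{ij}-p_{ij}\1_k)$, while your $Q^k$ additionally carries the Bernoulli ``diagonal'' term $\tfrac{1}{k^2}\sum_m((Y^m_{ij})^2-p_{ij}(1-p_{ij}))=\tfrac{(1-2p_{ij})\eta_{ij}}{k}$, which is secretly linear and whose placement differs but leads to the same bound; also, $(k-1)L^k/2$ is only a forward martingale up to the lower-order $\tfrac{2F(\what_i-\what_j)-1}{2}\eta_{ij}$ pieces, but this does not affect the argument.
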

\cref{prop: Confidence Intervals} is established in  \cref{Proof of lem: Conditional bounds on type 1 and type 2 errors}.
We remark that the distribution of $\|\sF(\what) -\sF(\ws)\|_\F$ above can be approximated either by leveraging the asymptotic normality of $\what -\ws$ \cite{UncertaintyQuantificationBTL,han2024ThurstoneNormality}, or by utilizing bootstrapping techniques; this gives $(1-2\nu)$ time-uniform confidence intervals. Additionally, the constant $c_7$ here is also the constant in our specialized version of Hanson-Wright inequality (noted above) and can be approximated via simulations for our setting. An empirical investigation into estimating the constant $c_7$ and the subsequent confidence intervals can be found in \cref{sec:Confidence Intervals Under Null Hypothesis}.

\section{Experiments}\label{sec: Experiments} 

In this section, we develop a data-driven approach to select the threshold for our test $T$ and conduct simulations to validate our theoretical results on synthetic and real-world datasets. We include several additional experiments in \cref{Exp Appendix} to validate our threshold estimation procedure, compare our performance with the spectral method of \cite{MakurSinghBTL,MakurSingh2024BTLJournal} for the BTL model, and provide additional experiments on real-world datasets.

\textbf{Estimating the threshold:} Given a pairwise comparison dataset $\Z \triangleq \{Z_{ij}^m : (i,j) \in \calE, \ m \in [k_{ij}]\}$, we employ an empirical-quantile-based approach to determine the critical threshold for our hypothesis testing problem. We generate multiple $\GTM$ models with random skill scores $w \in \R^n$ drawn independently and uniformly in $[-b,b]$ and translated to satisfy $w^\T \1=0$, and simulate $k_{ij}$ ``$i$ vs. $j$'' comparisons by sampling binomial random variables $\{\tilde{Z}_{ij} \sim \text{Bin}(k_{ij}, F(w_i-w_j)\}_{(i,j) \in \calE}$. We then compute the test statistic $T$ for each simulated dataset, repeating the process a sufficient number of times to build a distribution of test statistics. Finally, we extract the $95$th percentile value from this distribution as our empirical threshold.

In our first experiment, we investigate the behavior of the thresholds for $T$ based on this empirical-quantile-based approach for various values of $n$ and $k$ and for different graph topologies and $\GTM$ models. We considered values of $n$ ranging from $15$ to $55$ with intervals of $10$, and set $k_{ij} = k$ for all $(i,j) \in \calE$ with $k\in  \{12, 20\}$, graph topologies including complete graphs, $\lceil \sqrt{n}\ \rceil \times \lceil \sqrt{n}\ \rceil$ toroidal grids, and sparse graphs generated from Erd\H{o}s-R\'enyi $\G(n,p)$ models with parameter $p = 2\log^2 (n) /n$, and $\GTM$ models such as standard Thurstone (Case V) and BTL models. For each choice of parameters, we generated $400$ models by randomly sampling weights with $b = F^{-1}(0.98)/2$ and generated synthetic comparison data. The scaled test statistic $k \cdot T/n$  was computed for every parameter choice, and the 95th percentile value of this scaled $T$ was identified as the threshold $\gamma$. \cref{fig:download1} plots these 95th percentile values with respect to $n$ for various parameter choices. Notably, the value $\gamma$ remains roughly constant with $n, k$ and model $\GTM$ for complete graphs. For all these cases, the values stabilize approximately to a certain constant, illustrating that the thresholds obtained via the empirical-quantile-based approach follow the same high-level scaling as the theoretical threshold in \cref{selection rule}.
 
\begin{figure}[ht]
  \centering
  \begin{minipage}{0.437\textwidth}
    \centering
    \includegraphics[width=\textwidth]{./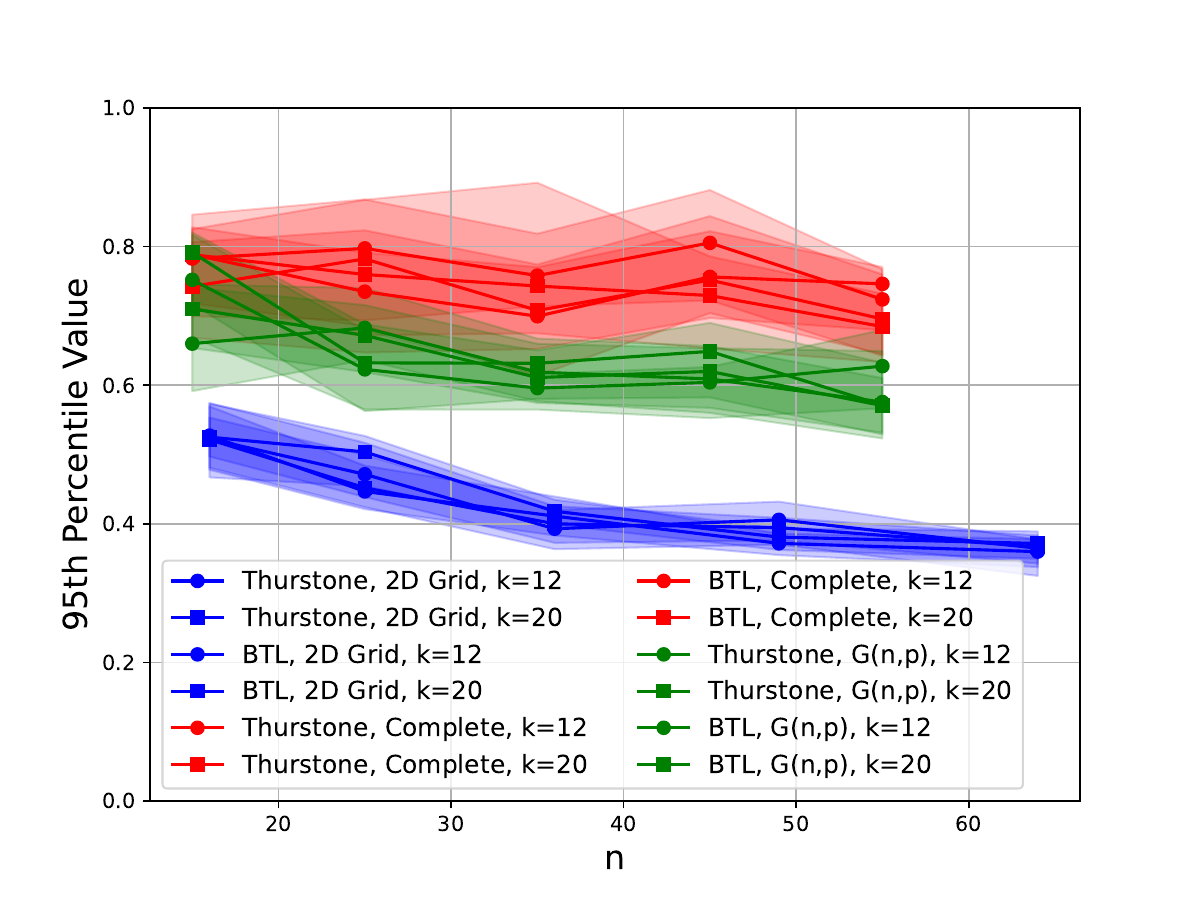}
\vspace{-8.2mm}
    \caption{Estimated scaled threshold for various values of $n, k$, graph topologies, and $\GTM$ models.}
    \vspace{3mm}
     \label{fig:download1}
  \end{minipage}
  \begin{minipage}{0.437\textwidth}
    \centering
    \includegraphics[width=\textwidth]{./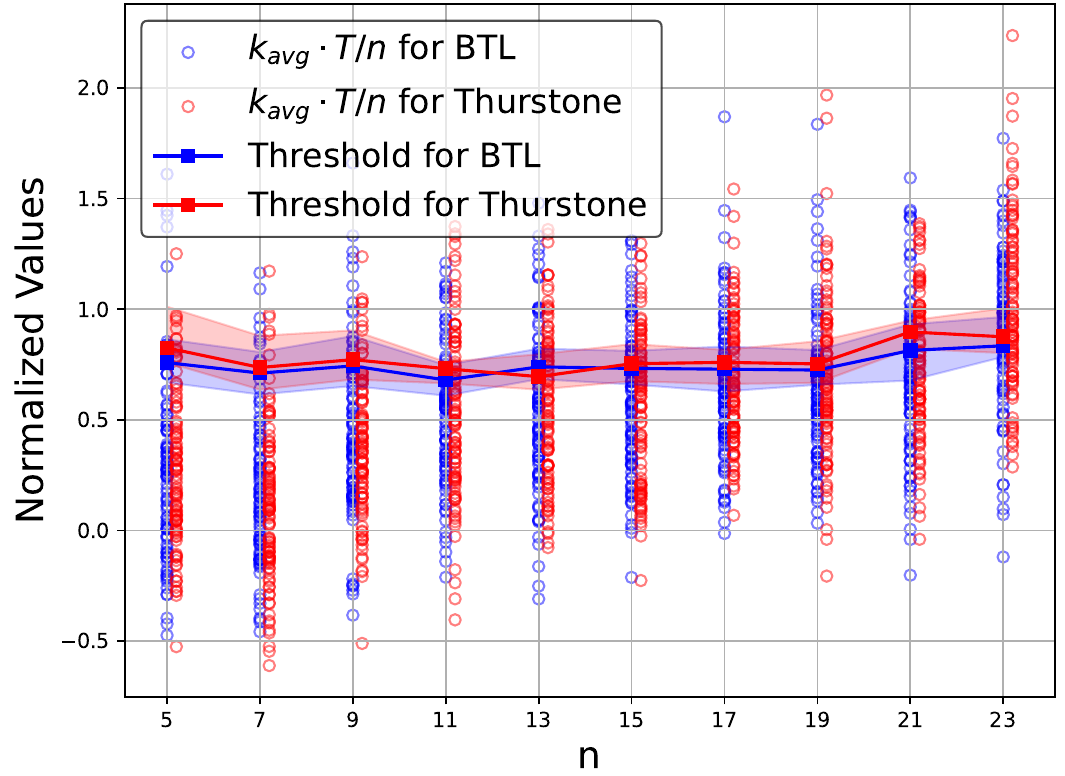}
\vspace{-4mm}
    \caption{Scaled test statistics and the estimated thresholds evaluated on the LMSYS dataset.}
\label{fig:download}  
  \end{minipage}
\end{figure}

In our next experiment, we apply our test to the LMSYS chatbot leaderboard \cite{LYMSYS}, a widely used benchmark for evaluating the performance of LLMs. The dataset contains a collection of pairwise comparisons between various LLMs based on their response to prompts, which are then used to obtain Elo ratings. We retain the directional nature of comparisons, where an ``$i$ vs. $j$'' comparison indicates model $i$ as the first response and $j$ as the second during the evaluation. We rank the LLMs based on their frequency of appearance in the dataset and perform the test repeatedly on the top-$n$ LLMs in this ordering, with $n$ ranging from $5$ to $21$ with gaps of $2$, for both Thurstone and BTL models. For each $n$, we plot the values in \Cref{fig:download} of the (scaled) test statistic $k_{\text{avg}}\cdot T/n$ and the obtained (scaled) thresholds using the quantile approach (with the same parameters as above), where $k_{\text{avg}}$ is the average of $k_{ij}$ over all $(i,j) \in \calE$. By randomizing over the partitioning of the dataset $\Z$ into $\Z_1$ and $\Z_2$ and computing $T$ each time, we essentially obtain a distribution of $T$ and plot these values in \Cref{fig:download} as a scatter plot. The figure highlights that both BTL and Thurstone models perform well in modeling for smaller values of $n$ with only $10\%$ of samples above the threshold, but exhibit significant deviations for larger values of $n$ as around 60\% of samples are above the threshold for $n=21$. The 60\% of samples above the threshold can be interpreted as a bootstrapped estimate of the test's power indicating ~40\% chance that the model lies within 95\% statistical deviations from BTL. The figure also shows that deviation from Thurstone increases as $n$ increases, a top-$9$ batch size provides a statistically accurate fit to the Thurstone model, and the deviations are significant for $n\geq21$. The results suggest that when using a multi-tier Elo rating system \cite{MultiTierElo}, a group size of approximately $9$ leads to accurate modeling for the top-$n$ models, ranked based on their total available data.

\section{Conclusion}

In this work, we developed a rigorous testing framework to determine whether pairwise comparison data is generated by an underlying generalized Thurstone model with a given choice function $F$. We derived both upper and lower bounds on the critical threshold of our testing problem, which depended on the topology of the observation graph. These bounds were shown to be tight for certain graph classes, such as complete graphs. In addition, we proposed a hypothesis test based on our notion of separation distance and established theoretical guarantees for this test, including time-uniform bounds on type \MyRomannum{1} and \MyRomannum{2} error probabilities as well as a minimax lower bound on the risk of the testing problem. Alongside this, auxiliary results such as error bounds for parameter estimation and confidence intervals under the null hypothesis were derived. To validate our findings, we conducted experiments on both synthetic and real-world datasets and introduced a data-driven approach for determining the test threshold.

This study opens up several avenues for future research. For instance, extending the hypothesis testing framework to handle general multi-way comparisons rather than pairwise comparisons is one such direction. Another direction is developing active testing techniques within the framework of generalized Thurstone models that optimize test performance. Finally, extending the testing and estimation frameworks for such models to dependent data is also an important avenue as real-world data often exhibits correlations that could affect inference results.

\section*{Acknowledgements}
This work was supported by the National Science Foundation (NSF) CAREER Award under Grant CCF-2337808. 

\section*{Impact Statement} 
This work is primarily theoretical and does not have immediate societal consequences. However, if practitioners apply our approaches to make decisions in real-world settings, there could be both positive and negative implications. On the one hand, our methods can help determine accurate models for comparison data, such as whether data conforms to a $\GTM$ model, or help detect certain kinds of biases in data. On the other hand, our methods concern ranking models and using any ranking mechanism without due care in the real-world may unintentionally exacerbate existing inequalities, such as unequal access to resources and opportunities.

\bibliography{CameraReady}
\bibliographystyle{icml2025}

\newpage
\appendix
\onecolumn

\section{Proofs of \cref{prop: Existence and Uniqueness of MLE,prop: Distance to closest GTM model,thm: Error Bounds Under Model Mismatch}}
\subsection{Additional Notation and Preliminaries} \label{Additional notation and preliminaries}
We begin by introducing some additional notations and discussing some necessary preliminaries that will be used throughout our proofs.  To simplify our notation, we use $\hat{l}(w)$ to denote $l(w; \{\hat{p}_{ij} : (i, j) \in \mathcal{E}\}) $ where $\hat{p}_{ij}$ are computed based on the partitioned dataset $\mathcal{Z}_1$. Similarly, we use $l^*(w)$ to denote $l(w; \{p_{ij} : (i, j) \in \mathcal{E}\}) $ where $p_{ij}$ are the actual underlying pairwise comparison probabilities. When $w=w^*$ or $w=\hat{w}$, we simplify the notation by using $\sF$ and $\Fhat$ to denote the matrices $\sF(w^*)$ and $\sF(\hat{w})$ (cf. \cref{sF definition}) respectively, for brevity. We say that a random variable $X$ is $\mu^2$-sub-Gaussian, if it satisfies the condition, $\log (\E[\exp(sX)]) \leq \mu^2 s^2/2$ for all $s \in \R$.

Notably, $\hat{w}$ is computed as in \cref{what definition} even though the data may not conform to an underlying $\GTM$ model. Throughout the appendices, we denote various constants using overlapping labels, such as $c, c_1, c_2, \dots $ to simplify our notation and facilitate readability. Moreover, we define $\calE_+$ to denote the set $\{(i,j) \in \calE: j>i \}.$ 

\subsubsection{Preliminaries} Recall that, as defined in \cref{MLweightspairwise}, $w^*$ represents the weights of a $\GTM$ model that best approximates the pairwise comparison model \(\left\{p_{i j},(i, j) \in \calE\right\}\). Now, we show that any pairwise comparison model can be converted into its skew-symmetric counterpart \(\left\{ \frac{p_{ij} + 1- p_{ji}}{2}, (i, j) \in \calE\right\}\) such that both of them share the same optimal weights.
\begin{lemma}[Skew-symmetrized Model]
For a symmetric edge set $\calE$, the pairwise model \(\left\{p_{i j}:(i, j) \in \calE\right\}\) and and its skew-symmetric counterpart \(\left\{\frac{p_{i j}+1-p_{j i}}{2}:(i, j)\in \calE\right\}\) has the have the same optimal $\GTM$ weights $w^*$ as defined in \cref{MLweightspairwise}.
\end{lemma}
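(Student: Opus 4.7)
The plan is to exploit two symmetries simultaneously: the structural symmetry of the edge set $\calE$, and the symmetry $F(-x) = 1 - F(x)$ satisfied by any GTM choice function. Together, these let me rewrite the negative log-likelihood $l^*(w)$ of the original model as the negative log-likelihood $\tilde{l}^*(w)$ of its skew-symmetrized counterpart term by term on $\mathcal{W}_b$, so the two objectives must have the same minimizer.

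First I would record that $F(-x) = 1 - F(x)$ for any GTM choice function. This follows directly from the noise-based construction described in the paper: $F(t) = \P(w_i + X_i > w_j + X_j \mid w_i - w_j = t)$ for i.i.d.\ noise variables $X_i, X_j$, and since $X_i - X_j$ is symmetric about zero we have $F(t) + F(-t) = 1$ for every $t \in \R$.

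Next, using the assumption that $\calE$ is symmetric, I would pair every summand of $l^*(w)$ indexed by $(i,j) \in \calE$ with the one indexed by its reverse $(j,i) \in \calE$. Applying $F(w_j - w_i) = 1 - F(w_i - w_j)$ to the reverse-indexed term and collecting coefficients, the two summands combine into
\begin{equation*}
-\bigl(p_{ij} + 1 - p_{ji}\bigr) \log F(w_i - w_j) - \bigl(1 - p_{ij} + p_{ji}\bigr) \log\bigl(1 - F(w_i - w_j)\bigr),
\end{equation*}
which equals $-2 \tilde{p}_{ij} \log F(w_i - w_j) - 2 (1 - \tilde{p}_{ij}) \log\bigl(1 - F(w_i - w_j)\bigr)$ with $\tilde{p}_{ij} \triangleq (p_{ij} + 1 - p_{ji})/2$. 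Since $\tilde{p}_{ji} = 1 - \tilde{p}_{ij}$, I can split this pooled contribution symmetrically back between $(i,j)$ and $(j,i)$, giving each of them exactly the summand that appears in $\tilde{l}^*(w)$. Summing over all ordered edges in $\calE$ yields $l^*(w) = \tilde{l}^*(w)$ pointwise on $\mathcal{W}_b$, so the two objectives share the same optimizer $w^*$, which exists and is unique by \cref{prop: Existence and Uniqueness of MLE}.

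The proof is essentially bookkeeping, so I do not anticipate a substantive obstacle. The one subtlety worth flagging is the justification of $F(-x) = 1 - F(x)$: it is used crucially in the pairing step but is not singled out as an assumption in the excerpt, so it must be invoked explicitly from the noise-model definition of a GTM to make the argument rigorous.
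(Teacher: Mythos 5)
Your proposal is correct and follows essentially the same route as the paper: both use the identity $F(-x) = 1 - F(x)$ together with the symmetry of $\calE$ to rewrite the negative log-likelihood so that the original $p_{ij}$ are replaced by the skew-symmetrized $\frac{p_{ij}+1-p_{ji}}{2}$, yielding the same optimizer. The paper does this by re-indexing a second copy of the sum over the reversed edges and averaging, while you pair $(i,j)$ with $(j,i)$ directly and combine coefficients; these are the same bookkeeping.
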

\begin{proof}
Note that the weighted negative log-likelihood objective can be written as
\begin{align*}
\begin{aligned}
& l^*(w)=\argmin_{w \in \mathcal{W}: w^{\T} \1=0} - \sum_{(i, j) \in \calE} p_{i j} \log (F\left(w_{i}-w_{j}\right))+\left(1-p_{i j}\right) \log \left(1-F\left(w_{i} - w_{j}\right)\right) \\
& \stackrel{\zeta_1}{=}  \argmin_{w \in \mathcal{W}: w^{\T} \1=0} - \sum_{(i, j) \in \calE} p_{i j} \log \left(1-F\left(w_{j}-w_{i}\right)\right)+\left(1-p_{i j}\right) \log (F(w_j - w_i)) \\
& \stackrel{\zeta_2}{=} \argmin_{w \in \mathcal{W}: w^{\T} \1=0} - \sum_{(i, j) \in \calE}\left(\frac{p_{i j}+1-p_{j i}}{2}\right) \log( F\left(w_{i}-w_{j}\right)) +\left(\frac{1-p_{i j}+p_{j i}}{2}\right) \log \left(F\left(w_{j}-w_{i}\right)\right)
\end{aligned}
\end{align*}
where $\zeta_1$ follows since $F(-x) = 1 - F(x)$ and $\zeta_2$ follows by adding the first two equations and dividing by two. 
\end{proof}

Therefore, for any pairwise comparison model \(\left\{p_{ij}:(i,j) \in \calE\right\}\), we can define its skew-symmetrized counterpart \(\left\{q_{i j}:(i, j) \in \calE\right\}\) where 
\begin{align}
\forall \ (i, j) \in \calE,  q_{i j} \triangleq \frac{p_{i j}+1-p_{j i}}{2} . \label{qij}
\end{align}
We call these transformed probabilities $q_{ij}$ as skew-symmetrized probabilities because we have $q_{ij} +q_{ji} = 1$, and thereby this transformation effectively removes any distinctions between ``$i$ vs. $j$'' and ``$j$ vs. $i$'' comparisons. Also, note that for any pairwise comparison model satisfying \cref{pijbounded}, its skew-symmetrized model also satisfies it. In a similar manner, we can define \(\hat{q}_{i j}=\frac{\hat{p}_{i j}+1-\hat{p}_{j i}}{2}\) as the skew-symmetrized version of the empirical probabilities. With this notation in place, we are ready to state the proof of \cref{prop: Existence and Uniqueness of MLE} below.

\subsection{Proof of \cref{prop: Existence and Uniqueness of MLE}} \label{Proof of Existence and Uniqueness of MLE}
\textbf{Uniqueness:} The uniqueness of $w$ follows directly from the strong log-concavity of \(F(\cdot)\). This is because if \(v^*, w^* \in \mathcal{W}_b\) are any two non-unique solutions of \cref{MLweightspairwise} such that \(l^*(v^*) = l^*(w^*)\), then by strong log-concavity of \(F\) and the fact that $q_{ij} > 0 $ for $(i,j) \in \calE$ along with connectedness of graph, for any $\theta \in (0,1)$, we have
\begin{align*}
\theta l^{*}(v^*)+ (1-\theta) l^{*}(w^*) & = -2 \theta \sum_{(i, j) \in \calE} q_{i j} \log( F\left(v^*_{i}-v^*_{j}\right)) -2(1-\theta) \sum_{(i, j) \in \calE} q_{i j} \log (F(w^*_i - w^*_j))\\
& > -2 \sum_{\left(i, j \right) \in \calE} q_{i j} \log \big( F\left(\theta\left(v^*_{i}-v^*_{j}\right)+(1-\theta)\left(w^*_{i}-w^*_{j}\right)\right)  \big)\\
& =\quad l^{*}(\theta v^*+(1-\theta) w^*).
\end{align*}
This gives a contradiction since $\theta v^* +(1-\theta) w^*$ achieves a higher likelihood (or a lower objective value). The existence of $w^*$ under \cref{pijbounded} and finite $b$ follows from the extreme value theorem since a continuous function is being optimized over a compact set. Notably, the existence of $w^*$ also holds for disconnected graphs. As a bonus, we provide a proof of existence even when the parameter $b = \infty$, but for a connected graph $\G$. 

\textbf{Existence:} Now, we will utilize the connectedness of graph \(\G\) and \Cref{pijbounded} to show the existence of \(w^{*}\). Define a sequence \(\{ w^{(m)} \in \mathbb{R}^{n}: m \in \N \cup \{0\} \} \) as
\begin{align*}
w^{(m)}= \operatorname{argmin}_{\substack{w_{1}=0:\\ \|w\|_\infty \leq m } } l^{*}(w).
\end{align*}
Clearly, \(w^{(m)}\) exists as the optimization of a convex function $l^*(\cdot)$ is being performed over a compact set. Define the following sets as the components of $w$ that potentially diverge to \(\infty\): 
\begin{align*}
S_{+} = \left\{i \in[n]: \limsup_m \ \big(w^{(m)}\big)_{i}=+\infty\right\}, \quad S_{-} = \left\{i \in[n]: \liminf_{m}\ \big(w^{(m)}\big)_{i}=-\infty\right\}.
\end{align*}
We will show that \(S_{+}=S_{-}=\varnothing\). Notably, if \(S_{+} \neq \varnothing\), then we consider the partition of \([n]\) as \(S_{+} \cup S_{+}^{\complement}\). Clearly,  \(1 \in S_{+}^{\complement} \neq \varnothing\). Since the observation graph $\G$ is connected, for some \(i \in S_{+}\) there exists \(j \in S_{+}^{\complement}\) such that \(q_{ji} > 0\) (by \Cref{pijbounded}). This implies that \( - q_{ji} \log( F(w^{(m)}_j - w^{(m)}_i) )\rightarrow  +\infty\) as \(m \rightarrow +\infty\). Hence, we can find a constant \(A > 0\) such that on the set $\{w_i - w_j \geq A\},$ we have
\begin{align*}
 - q_{ji}\log (F\left( w_i - w_j\right)) > l^*({w}^{(0)}).
\end{align*}
Equivalently, for any $w$ with \(w_{i}-w_{j}>A\), we have
\begin{align*}
l^*(w ) \geq  - q_{ji} \log (F\left(A\right) )  > l^*({w}^{(0)}) \geq l^*({w}^{(m)}),
\end{align*}
where the first inequality follows since each term in $l^*(\cdot)$ is non-negative. Therefore, we must have $w^{(m)}_i \leq w^{(m)}_j + A$ for all $k \in \mathbb{N}$. Since $i \in S_+$, it follows that $j \in S_+$ by definition, which contradicts the assumption that $j \in S_+^\complement$. Hence, we conclude that $S_+ = \emptyset$. A similar argument shows that $S_- = \varnothing$. The fact that $S_+ = S_- = \varnothing$ implies that the sequence $\{w^{(m)}: m \in \mathbb{N} \cup \{0\}\}$ admits a convergent subsequence, which proves the existence of $w^*$.
\hfill \qedsymbol

\subsection{Proof of \cref{prop: Distance to closest GTM model}} \label{Proof of Distance to closest GTM model}
The upper bound is trivial to prove
$$
  \inf _{w \in \constraintset }\|P - \sF(w) \|_{\F} \leq \|P-\sF(\ws) \|_{\F} = \|P-\sF \|_{\F},
$$
where $\sF$ is the pairwise probability matrix associated with the optimal weights. Now to prove the lower bound, observe that  
\begin{align*}
\inf_{w \in \constraintset }  \sum_{ (i,j) \in \calE }  & (p_{i j}- F(w_i - w_j) )^{2} \\
 &  \geq \inf_{w \in \constraintset}  F(-2b) (1 - F(-2b)) \sum_{(i,j)\in \calE}  \frac{ (p_{i j}- F(w_i - w_j))^2  }{ F(w_i - w_j) (1 - F(w_i - w_j)) }  \\
& \stackrel{\zeta_1}{=} c \inf _{w \in \constraintset} \sum_{(i,j)\in \calE} \chi^{2} (\Ber (p_{i j} ) \| \Ber (F(w_i - w_j) ) ) \\
&  \stackrel{\zeta_2}{\geq} c \inf _{w \in \constraintset} \sum_{(i,j)\in \calE} D_{\mathsf{KL}} (\Ber(p_{i j} ) \| \Ber (F(w_i - w_j) ) )\\
&  = c \inf_{w \in \constraintset } \sum_{(i,j)\in \calE} p_{i j} \log  \bigg(\frac{p_{i j}}{F(w_{i}-w_{j}) } \bigg)   + (1-p_{ij} ) \log  \bigg(\frac{1-p_{i j}}{1 - F (w_{i}-w_{j}) } \bigg) \\
& \stackrel{\zeta_3}{=} c \sum_{(i,j)\in \calE} p_{i j} \log  \bigg(\frac{p_{i j}}{F ( w_{i}^{*}- w_{j}^{*} )} \bigg) + (1-p_{ij} ) \log  \bigg(\frac{1-p_{i j}}{1 - F (w_{i}^{*}-w_{j}^{* } ) } \bigg) \\
&  \stackrel{\zeta_4}{\geq} 2c \sum_{(i,j)\in \calE} \|\Ber (p_{i j} )-\Ber (F (w_{i}^{*}-w_{j}^{*} ) ) \|_{\mathsf{TV}}^{2} \\
& =2c \sum_{(i,j)\in \calE} (p_{i j}-F (w_{i}^{*}-w_{j}^{*} ) )^{2},
\end{align*}
where, in $\zeta_1$ we set $c = F(-2b) (1 - F(-2b))$ and $\chi^2(\cdot || \cdot)$ denotes the $\chi^2$-divergence between two Bernoulli random variables and in $\zeta_2$ we utilize the fact that $\chi^2(R || Q) \geq D_{\mathsf{KL}}(R|| Q) $ for two distributions $R$ and $Q$ and where $D_{\mathsf{KL}}(\cdot|| \cdot)$ denotes the Kullback-Leibler  (KL) divergence between two distributions, $\zeta_3$ follows since $w^*$ are the optimal weights maximizing \cref{MLweightspairwise} for the $\GTM$ model. Finally, $\zeta_4$ follows by Pinsker's inequality $D_{\mathsf{KL}}(R|| P) \geq 2\|R-P\|_{\mathsf{TV}}^2 $ and thereby completing the proof. \hfill \qedsymbol

\subsection{Proof of \cref{thm: Error Bounds Under Model Mismatch}}\label{Proof of Error Bounds Under Model Mismatch}

We begin by recalling the definition of $\what \in \argmin_{w \in \mathcal{W}_b} \hat{l}(w)$ in terms of the symmetrized probabilities $q_{ij}$ defined in \cref{qij} as 
$$
\begin{aligned}
\hat{l}(w) & = - 2 \sum_{(i, j) \in \calE_+} \hat{q}_{i j} \log F(w_i -w_j) + (1-\hat{q}_{i j} ) \log  ( 1-F ( w_i - w_j ) ).
\end{aligned}
$$
Observe that since $\what$ is an optimal solution and $w^{*}$ is a feasible point for the problem in \cref{what definition}, therefore we have $\hat{l}(\what) \leq \hat{l}(\ws)$. Moreover, since $\ws$ is the optimal solution of a convex function $l^*(w)$, therefore we have the optimality condition $\nb l^* (\ws)^\T (w - \ws) \geq 0$ for all $w \in \mathcal{W}_b.$  Now, by subtracting the quantity  $ \nb \hat{l} (w^{*} ) ^\T  (\hat{w} - \ws)$ from both sides of $\hat{l}(\what) \leq \hat{l}(\ws)$ gives
\begin{align}
\hat{l} (\what) - \hat{l}(\ws ) - \nb \hat{l}(\ws)^\T (\what -\ws) & \leq - \nb \hat{l}(\ws)^\T (\what -\ws) \label{firststepa} \\ 
& \stackrel{\zeta_1}{\leq} - (\nb \hat{l}(\ws) - \nb l^*(\ws) )^\T (\what -\ws)  \nonumber\\
& \stackrel{\zeta_2}{\leq} \|\nb \hat{l}(\ws) - \nb l^*(\ws)\|_{\Ld} \|\what -\ws \|_L, \label{firststep}
\end{align}
where $\zeta_1$ follows since $\nb l^* (\ws)^\T (w - \ws) \geq 0$ for all $w \in \mathcal{W}$ and $\zeta_2$ follows from (\citealp{Shahetal2016}, Lemma 16) where $\|\cdot\|_L$ is the semi-norm induced by the Laplacian matrix $L$ of graph $\G$ and $\Ld$ is the Moore-Penrose pseudoinverse of $L$. Now observe that by the chain rule, the Hessian of $\hat{l}$ is given by
$$
\nabla^{2} \hat{l}(w)  = -2 \sum_{(i,j) \in \calE_+ } \bigg( \hat{q}_{ij} \frac{\diff ^2}{\diff t^2} \log (F(t))|_{t = w^\T x_{ij}} + (1-\hat{q}_{i j} ) \frac{\diff ^2}{\diff t^2} \log (1 - F(t) ) |_{t = w^\T x_{ij}} \bigg) x_{ij} x_{ij}^{\T},
$$
Since by our assumption that $F(t)$ is $\alpha$-strongly log-concave on the set $[-2b,2b]$, this implies $-\frac{\diff^2}{\diff t^2} \log (F(t)) \geq \alpha $. Moreover, since $F(-t) = 1 - F(t),$ we also have $- \frac{\diff^2}{\diff t^2} \log (1 - F(t)) \geq \alpha $ for all $t \in [-2b, 2b]$. Therefore, for any $v \in \R^n$ with $v^\T \1 = 0$, we have
$$v^\T \nabla^{2} \hat{\ell}(w^*) v \geq 2\alpha \|X v\|_2^2  =  2 \alpha \|v\|_L^2. $$
Thus, by definition of strong-convexity, the left side of \cref{firststepa} can be lower bounded by $\alpha \|\what -\ws\|^2_L$. Therefore, utilizing the bound in \cref{firststep}, we obtain the following inequality
$$ \alpha \|\what -\ws\|^2_L \leq  \|\nb \hat{l}(\ws) - \nb l^*(\ws)\|_{\Ld} \|\what -\ws \|_L. $$
Canceling $ \|\what -\ws\|_L $ and squaring both sides leads to the following error bound on $ \|\what -\ws\|_L $ as
\begin{equation}  \label{importantupperbound}
\|\what -\ws\|^2_L \leq \frac{1}{\alpha^2 } \|\nb \hat{l}(\ws) - \nb l^*(\ws)\|^2_{\Ld}.
\end{equation}
Now, it remains to bound the term $\|\nb \hat{l}(\ws) - \nb l^*(\ws)\|_{\Ld}.$ Note that we can express the respective quantities as:  
\begin{align}
    \nb \hat{l}(\ws) & = -2 \sum_{(i,j) \in \calE_+} \bigg(\hat{q}_{ij} \frac{F^{\prime} (\ws^\T x_{ij})}{F ( \ws^\T x_{ij} )}- (1- \hat{q}_{ij}) \frac{F^{\prime} ( \ws^\T x_{ij} )}{1-F ( \ws^\T x_{ij})} \bigg) x_{ij}  \nonumber\\
    & = -2 \sum_{(i,j) \in \calE_+} \frac{ (\hat{q}_{ij} - F ( w^*_i - w^*_j ) )F^{\prime} (w^*_i - w^*_j)}{F ( w^*_i - w^*_j ) (1-F ( w^*_i - w^*_j) ) }  x_{ij}, \\
    \nb l^*(\ws) & = -2 \sum_{(i,j) \in \calE_+} \frac{ (q_{ij} - F ( w^*_i - w^*_j ) )F^{\prime} (w^*_i - w^*_j)}{F ( w^*_i - w^*_j ) (1-F ( w^*_i - w^*_j) ) }  x_{ij}. 
\end{align}
Therefore, subtracting the two equations gives
\begin{align}
     \nb \hat{l}(\ws) - \nb l^*(\ws)  & =  -2\sum_{(i,j) \in \calE_+} \frac{ (\hat{q}_{ij} - q_{ij} )F^{\prime} ( w^{*}_i - w^{*}_j)}{F (  w^{*}_i - w^{*}_j ) (1-F (  w^{*}_i - w^{*}_j) ) }  x_{ij} 
       = -2X^\T v,
 \end{align} 
where $v \in \R^{|\calE|/2}$ is a vector formed by entries $v_{ij}$ for $(i,j) \in \calE_+.$ and quantities $v_{ij}$ are defined as 
$$
  v_{ij} \triangleq (\hat{q}_{ij} - q_{ij}) \times \frac{F^{\prime} (w^{*}_i - w^{*}_j )}{F ( w^{*}_i -w^{*}_j ) (1-F ( w^{*}_i -w^{*}_j ) ) }.
$$
Note that the entries of the vector $v$ are independent and have a mean of zero. Furthermore, we also have: 
$$
\sup_{x \in [-2b,2b] } \frac{F^{\prime} (x )}{F ( x ) (1-F ( x ) ) } \leq  \frac{\beta }{F ( -2b ) (1-F ( -2b ) ) }  \triangleq \tilde{\beta}.
$$
Additionally, for any $(i,j) \in \calE_+$, an application of the Hoeffding's inequality on $\hat{q}_{ij} - q_{ij}$ yields the following tail bound
$$ 
\begin{aligned}
\forall \ t>0, \ 
\P( |\hat{q}_{ij} - q_{ij}| > t ) & = \P\bigg(\frac{1}{2k} \bigg|\sum_{m = 1}^k (Z^m_{ij} - p_{ij}) + \sum_{m = 1}^k (Z^m_{ji} - p_{ji})  \bigg| > t \bigg) \\ 
&  \leq  2 \exp (-2kt^2). 
\end{aligned}
$$
Consequently, $v$ is a vector whose each entry is independent with zero mean and $\frac{\tilde{\beta}^2}{4k}$-sub-gaussian. Now, observe that we can express $\|\nb \hat{l}(\ws) - \nb l^*(\ws)\|_{\Ld}^2$ in quadratic form as 
\begin{equation}
     \|\nb \hat{l}(\ws) - \nb l^*(\ws)\|_{\Ld}^2 = 4v^\T X\Ld X^\T v. \label{simplificationtwo}
\end{equation}
Now, combining \cref{importantupperbound} and \cref{simplificationtwo} we can upper-bound $\E[\|\hat{w} - \ws\|^2_L]$ as
\begin{align}
\E[\|\hat{w} - \ws\|^2_L] & \leq \frac{1}{\alpha^2  } \E[ \|\nb \hat{l}(\ws) - \nb l^*(\ws)\|_{\Ld}^2] \nonumber \\ 
& = \frac{4}{\alpha^2  } \E[ v^\T X\Ld X^\T v] \nonumber\\ 
& \leq \frac{\tilde{\beta}^2}{k\alpha^2  } \tr(X\Ld X^\T ) = \frac{(n-1)\tilde{\beta}^2}{k\alpha^2  } ,  
\end{align}
where $\tr$ denotes the trace operator and we have $\tr(X\Ld X^\T ) = \tr(\Ld X^\T X) = \tr(\Ld L ) = n-1$.
Hence, by an application of Hanson-Wright inequality \cite{rudelson2013hanson} combined with usage of \cref{importantupperbound} and \cref{simplificationtwo} as above, we have the following concentration bounds on $\|\hat{w} - \ws\|^2_L$:
\begin{align*}
\forall  t>0, 
\mathbb{P} \bigg(\! \|\hat{w} - \ws\|^2_L - \frac{n \tilde{\beta}^2}{k\alpha^2 } > t \bigg)\! &\leq\! 2 \exp\left(-c \min \left\{\frac{t^{2} k^2 \alpha^4   }{\tilde{\beta}^{4} \|X \Ld X^\T\|^2_\F }, \frac{t k\alpha^2  }{ \tilde{\beta}^{2} \|X \Ld X^\T\|_2 } \right\} \right) \\
& = 2 \exp\left(-c \min \left\{\frac{t^{2} k^2 \alpha^4   }{\tilde{\beta}^{4} (n-1) }, \frac{t k\alpha^2   }{ \tilde{\beta}^{2} } \right\} \right) .
\end{align*}

Hence, by a simple calculation, we can conclude that for some constant $c$, we have
\begin{equation} \label{final l2 error result}
  \quad \text { for all } t \geq 1, \ \mathbb{P} \bigg( \|\hat{w} - \ws\|_L^2 > t \frac{c n \tilde{\beta}^2}{k\alpha^2 } \bigg) \leq e^{-t} .
\end{equation}

\textbf{Bounding the $p$th moment:} Let $A$ denote the quantity: \(A = \sqrt{c n \tilde{\beta}^{2} /  k\alpha^{2}  }\). Now the bound on the pth moment is obtained by integration and utilizing the tail bound in \Cref{final l2 error result} as 
\begin{align*}
\begin{aligned}
 \mathbb{E}\left[\left\|\hat{w}-w^{*}\right\|_{L}^{p}\right] & =p \int_{0}^{\infty} t^{p-1} \mathbb{P}\left(\left\|\hat{w}-w^{*}\right\|_{L}>t\right) \diff t \\
& =p \int_{0}^{A} t^{p-1} \mathbb{P}\left(\left\|\hat{w}-w^{*}\right\|_{L}>t\right) \diff t + p \int_{A}^{\infty} t^{p-1} \mathbb{P}\left(\left\|\hat{w}-w^{*}\right\|_{L}>t\right) \diff t\\
& \leq \int_{0}^{A} t^{p-1} \diff t +  p \int_{1}^{\infty} (At)^{p-1} \mathbb{P}\left(\left\|\hat{w}-w^{*}\right\|_{L}>tA\right) A \diff t\\
& \leq A^p + p A^p \int_{0}^{\infty} t^{p-1} e^{-\sqrt{t}} \leq c(p) A^p. 
\end{aligned}
\end{align*}
Substituting the value of $A$ in the above expression completes the proof.
 \hfill \qedsymbol

\section{Proof of \cref{thm: Upper Bound on Critical Threshold}} \label{Proof of Upper Bound on Critical Threshold}
We begin by recalling the test statistic $T$ from \Cref{BoxedTest} as
$$
T = \sum_{(i,j) \in \calE} \frac{Z_{i j} (Z_{i j}-1)}{k^{\prime}_{i j} (k^{\prime}_{i j}-1 )}+F(\hat{ w}_{i}-\hat{ w}_{j} )^{2} -\frac{2 Z_{i j}}{k^{\prime}_{i j}} F (\hat{ w}_{i}-\hat{w}_{j} ).
$$
where $\hat{w}$ is calculated based on the data in $\mathcal{Z}_1$ and $Z_{ij}$ are calculated based on the data in $\Z_2$. The expected value of $T$ conditioned on the weights $\hat{w}$ or equivalently conditioned on the data $\Z_1$ is given by
\begin{align}
 \E[T \mid \Z_1] & = \sum_{(i,j) \in \calE} \E\left[ \frac{Z_{i j} (Z_{i j}-1)}{k^{\prime}_{i j} (k^{\prime}_{i j}-1 )}|\Z_1 \right]+F(\hat{ w}_{i}-\hat{ w}_{j} )^{2} -2 \E\left[\frac{ Z_{i j}}{k^{\prime}_{i j}}|\ \nonumber
 Z_1\right] F (\hat{ w}_{i}-\hat{w}_{j} ) \\ 
 & \stackrel{\zeta}{=} \sum_{(i,j) \in \calE} p_{ij}^2 + F(\hat{ w}_{i}-\hat{w}_{j} )^2 - 2p_{ij} F(\hat{ w}_{i}-\hat{w}_{j} )  \nonumber\\
& = \sum_{(i,j) \in \calE} (p_{i j} - F(\hat{w}_{i}-\hat{w}_{j} ) )^{2}, \label{expectation conditioned on w}
\end{align}
where in $\zeta$ we have utilized the fact that $\E\left[ \frac{Z_{i j} (Z_{i j}-1)}{k^{\prime}_{i j} (k^{\prime}_{i j}-1 )}|\Z_1 \right] = p_{ij}^2$. Hence, the expected value of $T$ is given by
\begin{align}
 \E[T] &= \E[ \E[T|\Z_1]] = \sum_{(i,j) \in \calE} \E[(p_{i j} - F(\hat{w}_{i}-\hat{w}_{j} ) )^{2}] \nonumber\\
& = \sum_{(i,j) \in \calE} (p_{i j}-F (w^{*}_{i}-w^{*}_{j} ) )^{2} + \E[(F(w^{*}_{i}-w^{*}_{j}) - F(\hat{w}_{i}-\hat{w}_{j}) )^{2}] \nonumber\\
& \ \ \ \ \ \ \ \ \ \ \ \ \ \  +2\E[ (p_{i j}-F (w^{*}_{i}-w^{*}_{j} ) ) (F (w^{*}_{i}-w^{*}_{j} )-F (\hat{w}_{i}-\hat{w}_{j} ) )] \nonumber\\
& \leq\|P-\sF\|_{\F}^{2}+ \E[\|\sF-\Fhat \|_{\F}^{2}] + 2\|P-\sF\|_{\F} \E[\|\sF-\Fhat \|_{\F}], \label{bound on expectedT}
\end{align}
where $\sF, \Fhat \in \R^{n \times n}$ are matrices defined in \cref{Additional notation and preliminaries}. In order to find bounds on estimation error (such as terms like $ \E[\|\sF-\Fhat \|_{\F}^{p}] $), we will utilize our simplifying assumption that $k_{ij} = 2k$ for all $(i,j) \in \calE$. Now the ensuing lemma provides the bounds on the $p$th moments $ \E[\|\sF-\Fhat \|_{\F}^{p}] $.
\begin{lemma}[$p$th Moment Bound] \label{lem: pth moment bound} For matrices $\sF$ and $\Fhat$ defined as in \cref{Additional notation and preliminaries} and $p \geq 1$, there exists a constant $c_p$, possibly dependent on $p, \alpha, \beta, b$, such that the following bound holds on the $p$th moment of the Frobenius norm \(\mathbb{E}\big[\|\sF-\Fhat\|_{\F}^{p}\big]\):
\[\mathbb{E}\left[\|\sF-\Fhat\|_{\F}^{p}\right] \leq c_{p} \left(\frac{n  }{k }\right)^{p / 2}.
\]
{Moreover, there exists a constant $c$ such that we have the following tail bound: }
\[ 
\forall t \geq 1, \enspace \P\left( \|\sF-\Fhat\|_{\F}^{2} \geq t \frac{c n  }{k } \right) \leq e^{-t}.
\]
\end{lemma}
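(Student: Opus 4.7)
The plan is to reduce the bound on $\|\sF - \Fhat\|_\F$ to the parameter estimation error bound in \cref{thm: Error Bounds Under Model Mismatch} via a Lipschitz argument on the choice function $F$. By the definitions of $\sF = \sF(\ws)$ and $\Fhat = \sF(\what)$, which both vanish off $\calE$, we have
\[
\|\sF - \Fhat\|_\F^2 = \sum_{(i,j) \in \calE} \bigl(F(w_i^* - w_j^*) - F(\hat{w}_i - \hat{w}_j)\bigr)^2.
\]

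Since $\ws, \what \in \constraintset$ implies $|w_i^* - w_j^*|, |\hat{w}_i - \hat{w}_j| \leq 2b$, I would apply the mean value theorem together with the assumption $F'(x) \leq \beta$ on $[-2b, 2b]$ from \cref{eq:strong_log_concavity} to obtain the pointwise estimate $|F(w_i^* - w_j^*) - F(\hat{w}_i - \hat{w}_j)| \leq \beta |(\hat{w}_i - w_i^*) - (\hat{w}_j - w_j^*)|$. Summing over $\calE$ and noting that the symmetric edge set counts each undirected edge twice, so that $\sum_{(i,j) \in \calE} ((\hat{w}_i - w_i^*) - (\hat{w}_j - w_j^*))^2 = 2\|\what - \ws\|_L^2$, yields the deterministic inequality
\[
\|\sF - \Fhat\|_\F^2 \;\leq\; 2\beta^2\, \|\what - \ws\|_L^2.
\]

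Both claims of the lemma then follow directly from \cref{thm: Error Bounds Under Model Mismatch}. For the $p$th moment bound, I would raise the display above to the $p/2$ power, take expectations, and invoke the bound $\E[\|\what - \ws\|_L^p] \leq (c(p) n \beta^2/(\alpha^2 k F(-2b)^2))^{p/2}$ to absorb all dependence on $\alpha, \beta, b$ into a single constant $c_p$. For the tail bound, I would observe that the event $\{\|\sF - \Fhat\|_\F^2 \geq c t n/k\}$ is contained in $\{\|\what - \ws\|_L^2 \geq (c/(2\beta^2)) t n/k\}$, and then choose $c$ large enough that $c/(2\beta^2)$ matches the constant $c_3 \beta^2/(\alpha^2 F(-2b)^2)$ appearing in the tail bound of \cref{thm: Error Bounds Under Model Mismatch}, giving the claimed $e^{-t}$ probability for all $t \geq 1$. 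I do not anticipate a substantive obstacle; the only real content is the Lipschitz reduction from edge-wise discrepancy to the Laplacian semi-norm, and the rest amounts to careful bookkeeping of constants.
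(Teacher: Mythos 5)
Your proposal is correct and follows essentially the same route as the paper's proof: a Lipschitz reduction via $F'(x) \leq \beta$ on $[-2b,2b]$ from the edge-wise Frobenius discrepancy to the Laplacian semi-norm $\|\what - \ws\|_L$, followed by invoking the $p$th moment and tail bounds of \cref{thm: Error Bounds Under Model Mismatch}. (The paper writes the slightly looser constant $4\beta^2$ where you obtain $2\beta^2$; both are valid, so the constant tracking is immaterial.)
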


The proof is provided in \cref{Proof of lem: pth moment bound}.
Thus, utilizing \Cref{lem: pth moment bound} and \cref{bound on expectedT}, we have obtain the following bound for some constant $c_1$ and $c_2$:
$$
\E[T] \leq \|P - \sF\|_{\F}^{2} + c_2 {\frac{n  }{k  }} + 2 c_1 \sqrt{\frac{n  }{k  }}\|P - \sF\|_{\F}.
$$
Let $\E_{H_0}[\cdot]$ and $\E_{H_1}[\cdot]$ denote the expectation operators under hypotheses $H_0$ and $H_1$, respectively. In essence, we have established the following bounds on $\E_{H_0}[T]$:
\begin{equation} \label{expected value of T under H0 and H1}
  \begin{aligned}
&  |\E_{H_{0}}[T] | \leq c_{2}\frac{n  }{k  }, \\
\end{aligned}
\end{equation}
In a similar manner, we can obtain complementary lower bounds on $\E_{H_1}[T]$ (cf. \Cref{bound on expectedT}). Consequently, we have the following lower bound on $\E_{H_1}[T]$:
\begin{equation}
  \label{lower bound on T}
  \E_{H_{1}}[T] \geq \|P - \sF\|_\F^2  -   2 c_1 \sqrt{\frac{n  }{k  }}\|P - \sF\|_{\F}.
\end{equation}

\textbf{Bounding variance:} Now we will find bounds on $\var(T)$ under the two hypotheses. For this, we will make use of the law of total variance by conditioning $T$ with respect to $\Z_1$ as
\begin{equation} \label{law of total variance}
\var(T)= \E[\var(T \mid \Z_{1} ) ]+\var (\E[T \mid \Z_{1} ] ).
\end{equation}
First, we will examine the term $\E[\var(T | \Z_{1} ) ]$. Note that conditioned on $\Z_1$, the term $F(\hat{w}_i - \hat{w}_j)^2$ is constant and does not contribute to $\var(T | \Z_{1} )$. Moreover, $Z_{ij}$ for $(i,j) \in \calE$ are mutually independent, and hence, we can analytically find the expression for $\var(T | \Z_{1} )$ as
$$
\begin{aligned}
 \var(T \mid \mathcal{Z}_{1}) 
 & \stackrel{\zeta_1}{=} \sum_{(i, j) \in \calE} \var \left( \frac{Z_{ij}(Z_{ij} -1 ) }{k^{\prime}_{ij} (k^{\prime}_{ij} -1)  } \right)+ 4 F(\hat{w}_i - \hat{w}_j)^2 \var \left(\frac{Z_{ij}}{k^{\prime}_{ij}} \right) \\ 
 & \ \  \ \ \ \ \ \ \ \ \ \ \ \ \ \ \  - 4F(\hat{w}_i - \hat{w}_j)\left(\frac{ \mathbb{E}\left[Z_{i j}^2 (Z_{ij} -1)\right]}{ (k^{\prime}_{ij})^2 (k^{\prime}_{ij} -1) }- \frac{\mathbb{E}\left[Z_{ij}(Z_{ij} -1)\right]}{k^{\prime}_{ij} (k^{\prime}_{ij} -1)  } \frac{\mathbb{E}\left[Z_{i j}\right]}{k^{\prime}_{ij} }\right) \\
& \stackrel{\zeta_2}{=} \sum_{(i,j) \in \calE} \frac{2 p_{i j}^{2}+4(k^{\prime}_{ij}-2) p_{i j}^{3}+(6-4 k^{\prime}_{ij}) p_{i j}^{4}}{k^{\prime}_{ij}(k^{\prime}_{ij}-1)} +\frac{4 F (\hat{ w}_{i}-\hat{ w}_{j} )^{2} p_{i j} (1-p_{i j} )}{k^{\prime}_{ij}}\\ 
& \ \ \ \ \ \ \ \ \ \ \ \ \ \ \ \ \  -4 F (\hat{ w}_{i}-\hat{ w}_{j} ) \frac{2 (p_{i j}^{2}-p_{i j}^{3} )}{k^{\prime}_{ij}},
\end{aligned}
$$
where $\zeta_1$ follows from the variance of sum technique and $\zeta_2$ follows from the expressions for the first four moments of Binomial random variables and some basic algebra. Now, in order to bound $\E [\var(T \mid \calZ_{1} ) ]$, we will substitute all $k^{\prime}_{ij} = k$ for all $(i,j) \in \calE$ and simplify the above expression as:
\begin{align}
\nonumber \E [\operatorname{var} (T \mid \Z_{1} ) ] & =\sum_{(i,j) \in \calE} \frac{2 p_{i j}^{2}-4 p_{i j}^{3}+2 p_{i j}^{4}}{{k(k-1)}}\\ 
& \ \ \ \ \ \ \ \ \ \ \ \ \ \ \ \nonumber   + p_{i j} (1-p_{i j} )  \bigg(\frac{4 p_{i j}^{2}}{k}+\frac{4 \E [F^{2} (\hat{w}_{i}-\hat{w}_{j} ) ]}{k}  - \frac{8 \E [F (\hat{w}_{i}-\hat{w}_{j} ) ]p_{i j}}{k}  \bigg) \\
\nonumber  &  = \sum_{(i,j) \in \calE}\frac{2 p_{i j}^{2} (1-p_{i j} )^{2}}{k(k-1)}  + \frac{4 p_{i j} (1-p_{ij} )}{k} (p_{i j}-\E[F (\hat{ w}_{i}-\hat{ w}_{j} ) ] )^{2}\\
\nonumber  &  \leq \frac{n\dmax}{8 k(k-1)}+\frac{1}{k}\|P- \E[\Fhat]\|_{\F}^{2} \\
\nonumber  &  \leq \frac{n\dmax}{8 k(k-1)}+\frac{1}{k} (\|P - \sF\|_{\F}+\|\sF - \E[\Fhat]\|_{\F} )^{2} \\
\nonumber  &  \leq \frac{n\dmax}{8 k(k-1)}+\frac{1}{k} (\|P - \sF\|_{\F}+ \E [\|\sF - \Fhat\|_{\F} ] )^{2} \\
& \leq \frac{n\dmax}{8 k(k-1)}+\frac{1}{k} \bigg(\|P - \sF\|_{\F}+c_{1} \sqrt{\frac{n  }{k  }} \bigg)^{2}\label{variance bound 1},
\end{align}
where $\dmax = \max_{i \in [n]}\sum_{j\in [n]\setminus i} E_{ij} $ is the maximum degree of the graph, and the last inequality follows from \Cref{lem: pth moment bound}. Now we will bound the second term of \Cref{law of total variance}, i.e.,  $\var(\E[T|\Z_1])$. Recall that by \cref{expectation conditioned on w}, we have $\E[T | \Z_{1}]=\sum_{(i,j) \in \calE} (p_{i j}-F(\hat{ w}_{i}-\hat{w}_{j} ) )^{2}$. Therefore, we upper bound $\var (\E [T \mid \Z_{1} ] )$ as
$$
\begin{aligned}
 \var (\E [T \mid \Z_{1} ] ) & =\var \bigg( \sum_{(i,j) \in \calE} (p_{ij} - F (\hat{w}_{i}- \hat{w}_{j} ) )^{2} \bigg)  = \var(\|P- \Fhat\|^2_\F)\\
& = \E[\|P-\Fhat\|_{\F}^{4} ]- \E [\|P-\Fhat\|_{\F}^{2} ]^{2}\\
& \leq \E[(\|P - \sF\|_{\F}+\|\sF - \Fhat\|_{\F} )^{4}]  - \E [ (\|P - \sF\|_{\F}-\|\sF - \Fhat\|_{\F} )^{2} ]^2,
\end{aligned}
$$
where the last inequality follows from the triangle inequality in the first term and the reverse triangle inequality on the second term. Under hypothesis $H_0$, the above expression simplifies trivially as
\begin{align} \label{varH0 bound2}
 \var_{H_{0}}(\E[T \mid \Z_1]) \leq  c_4 \bigg(\frac{n  }{k  }\bigg)^2,
 \end{align}
where $\var_{H_{l}}(\cdot)$ denotes the variance under hypothesis $l$ for $l \in \{0,1\}$. Now, we turn our attention to bounding $\var_{H_{1}}(\E [T \mid \Z_{1} ])$. This bound can be established through a relatively mechanical process described as follows 
\begin{align}
\nonumber \var_{H_{1}}(\E [T | \Z_{1} ] )&  \leq \|P - \sF\|_\F^{4}+4\|P - \sF\|_{\F}^{3} \E[\|\sF - \Fhat\|_\F] + 6\|P - \sF\|_{\F}^{2} \E [\|\sF - \Fhat\|_{\F}^{2} ] \\ 
\nonumber & \ \ \ \ \ \ \ \ \ \ \ \ +4\|P - \sF\|_{\F} \E[\|\sF - \Fhat\|_{\F}^{3} ] +\E [\|\sF - \Fhat\|_\F^{4} ] \\ 
\nonumber & \ \ \ \ \ \ \ \  \ \ \ - (\|P - \sF\|_{\F}^{2}+\E [\|\sF - \Fhat \|_{\F}^{2} ]  -2\|P - \sF\|_{\F} \E [\|\sF - \Fhat\|_{\F} ] )^2\\
\nonumber & = 8 \|P - \sF\|_{\F}^{3} \E [\|\sF - \Fhat\|_{\F} ]  +  4 \|P - \sF\|^2_{\F} (\E [\|\sF - \Fhat\|_{\F}^{2}] - \E [\|\sF - \Fhat\|_{\F}]^{2}  )  \\
\nonumber & \ \ \ \ \ \ \ \ \ \ + 4 \|P - \sF\|_{\F} (\E [\|\sF - \Fhat\|_{\F}^{3} ]   + \E[\|\sF - \Fhat\|_{\F}^{2} ] \E [\|\sF - \Fhat\|_{\F} ] ) \\ &\ \ \ \ \ \ \ \ \ \ \ + \E [\|\sF - \Fhat\|_{\F}^{4} ]-\E [\|\sF - \Fhat\|_{\F}^{2} ]^2 \nonumber\\
& \leq 8c_1 \|P - \sF\|_{\F}^{3} \sqrt{\frac{n  }{k  } } + 4 c_2 \|P - \sF\|_{\F}^{2} \frac{n  }{k  } \nonumber \\
& \ \ \ \ \ \ \ \ \ \  \ \ \  \ \ \ \ \   4(c_3 + c_2 c_1)\|P - \sF\|_{\F}  \left(\frac{n  }{k  }\right)^{3/2}  + c_4\left(\frac{n  }{k  }\right)^{2} \label{variance bound 2}
\end{align}

Thus, by combining \cref{variance bound 1,varH0 bound2} and \cref{variance bound 2} we obtain the following bounds on $\var_{H_0}(T)$ and $\var_{H_1}(T)$ based on \cref{law of total variance}
\begin{align}
\var_{H_{0}}(\E [T] ) & \leq \frac{n\dmax}{8 k(k-1)}   + c_{1}^2 \frac{n  }{k^2  } + c_{4} \bigg( \frac{n  }{k  } \bigg)^2 \label{variance bound H0}\\
\nonumber \var_{H_{1}}(\E [T] ) & \leq \frac{n\dmax}{8 k(k-1)}+\frac{1}{k} \bigg(\|P - \sF\|_{\F}+c_{1} \sqrt{\frac{n  }{k  }} \bigg)^{2} +8 c_{1}\|P - \sF\|_{\F}^{3} \sqrt{\frac{n  }{k  }} \\
&  \ \ \ \ + 4 c_2 \|P - \sF\|_{\F}^{2} \frac{n  }{k  }  + 4 \tilde{c}_{3}\|P - \sF\|_{\F} \bigg({\frac{n  }{k  }} \bigg)^{3/2}   + 
c_{4} \bigg(\frac{n  }{k  }\bigg)^2. \label{variance bound H1}
\end{align}
We define the decision rule as follows: select hypothesis $H_{1}$ if the test statistic $T$ exceeds the threshold:
\begin{equation}
  \text{ Select } H_1 \text{ if} : T> \tilde{\gamma} \frac{ n  }{k  } + c_2\frac{n  }{k  } , \label{selection rule}
\end{equation}
where $\tilde{\gamma}$ is a suitably chosen constant (selected below). Consequently, we can employ the one-sided Chebyshev's inequality to bound the probability of error under hypothesis $H_0$, yielding:
$$
\begin{aligned}
\mathbb{P}_{H_{0}} \left(T>\tilde{\gamma} \frac{n  }{k  } +c_2 \frac{n  }{k  }  \right) & = \P_{H_{0}} \left(T-\E_{H_{0}}[T]>\tilde{\gamma} \frac{n  }{k  }  +c_2 \frac{n  }{k  }  - \E_{H_{0}}[T]\right ) \\
& \leq \P_{H_{0}} (T-\E_{H_{0}}[T]>\tilde{\gamma} \frac{n  }{k  }  ) \\ 
& \leq \frac{\var_{H_0}(T)}{ \var_{H_0}(T) + \tilde{\gamma}^2 (\frac{n  }{k  } )^2 } \\ 
& \leq \frac{ \frac{n\dmax}{4 k^2} +c_{1}^2\frac{n  }{k^2  } + c_4 (\frac{n  }{k  } )^{2} }{ \frac{n\dmax}{4 k^2} +c_{1}^2\frac{n  }{k^2  } + c_4 (\frac{n  }{k  } )^{2}  + \tilde{\gamma}^{2} (\frac{n  }{k  } )^{2}} \\ 
& = \frac{\frac{ \dmax }{4n }  + c_1^2\frac{ 1 }{n } +     c_4    }{ \frac{\dmax }{4n }  + c_1^2\frac{ 1 }{n } +  c_4 + \tilde{\gamma}^{2}  } \leq \frac{1}{4}, 
\end{aligned}
$$
where the last bound holds by the fact that $\dmax \leq n$, and for an appropriate constant $\tilde{\gamma} \geq \max\{4\sqrt{c_4},4, 4c_1/\sqrt{n}\}$. 

Now, we will find an upper bound on the probability of error under hypothesis $H_1$. Observe that an error is made under $H_1$ if the value of the test statistic $T \leq  \tilde{\gamma} \frac{n  }{k  } + c_2\frac{n  }{k  }.$ 
$$
\begin{aligned}
\mathbb{P}_{H_{1}} \bigg(T \leq & \tilde{\gamma} \frac{n  }{k  } +c_2 \frac{n  }{k  } \bigg) \\
&  = \P_{H_{1}}\bigg(T-\E_{H_{1}}[T] \leq \tilde{\gamma} \frac{n  }{k  } +c_{2} \frac{n  }{k  }  - \E_{H_{1}}[T]\bigg) \\
& \stackrel{\zeta_1}{\leq} \P \bigg(T-\E_{H_{1}}[T] \leq \tilde{\gamma} \frac{n  }{k  }+c_2 \frac{n  }{k  } +  2 c_1 \sqrt{\frac{n  }{k  }}\|P - \sF\|_{\F}- \|P - \sF\|^2_\F\bigg) \\ 
& \stackrel{\zeta_2}{\leq} \frac{\var_{H_1}(T)}{ \var_{H_1}(T) + (D^2 - \Delta)^2 } \stackrel{\zeta_3}{\leq} \frac{1}{4},
\end{aligned}
$$
where $\zeta_1$ follows from \cref{lower bound on T}, $\zeta_2$ follows by one-sided Chebyshev inequality and defining $D = \|P - \sF\|_\F$ and $\Delta = \tilde{\gamma} \frac{n  }{k  }+ c_{2} \frac{n  }{k  } + 2c_1 \sqrt{\frac{n  }{k  }}D$. The step $\zeta_3$ holds if $4 \var_{H_1}(T) \leq (D^2-\Delta)^{2}$
or equivalently if $D^2 \geq 2\sqrt{\var_{H_1}(T)} + \Delta$. Using the sub-additivity of $\sqrt{\cdot}$ operator, the following condition necessitates that for this to be true: 
$$
\begin{aligned}
 D^2 & \geq 2 \left( \frac{\sqrt{n \dmax} }{2 k} + \frac{1}{\sqrt{k}}\bigg(D + c_1 \sqrt{\frac{n  }{k  }}\bigg) + 2\sqrt{2c_1} D^{3/2} \bigg( \frac{n  }{k  }\right)^{\frac{1}{4}}  + 2\sqrt{c_2} D \sqrt{\frac{n  }{k  }}  \\ 
& \ \ \ \ \ \ \ + 2\sqrt{\tilde{c}_3}\sqrt{D}\left( \frac{n  }{k  }\right)^{ \frac{3}{4} } + c_4\frac{n  }{k  } \bigg) + \tilde{\gamma} \frac{n  }{k  }+ c_2\frac{n  }{k  }  + 2c_1 \sqrt{\frac{n  }{k  }}D .
\end{aligned}
$$
Substituting $D = a_0 \sqrt{\frac{n  }{k  }}$ in the above expression, for some $a_0$, we obtain:
$$
\begin{aligned}
& a_0^2 \geq \frac{\sqrt{\dmax}} {\sqrt{n}} + 2(a_0 + c_1)\sqrt{\frac{1 }{ n  }} + 4\sqrt{2c_1} a_0^{3/2}  + 4\sqrt{c_2} a_0 + 4\sqrt{2\tilde{c}_3 a_0} +2c_4 +  \tilde{\gamma} + c_{2}  + 2a_0c_1.
\end{aligned}
$$
Now, we can conclude that the above expression is true for some large enough constant $a_0$ independent of $n$ and $k$, thus establishing that for $D = a_0 \sqrt{\frac{n  }{k  }}$ and $a_0$ large enough our decision rule achieves a type \MyRomannum{1} and type \MyRomannum{2} sum error of at most $1/2$. 
Utilizing \cref{prop: Distance to closest GTM model} we obtain that $ \inf_{w \in \constraintset} \|P - \sF(w)\|_\F = \Theta( \|P -\sF\|_\F )$. Combining this fact along with the definition of critical threshold (cf. \Cref{eq: crit thresh}), we have the following bound on $\varepsilon_{\mathsf{c}}$:
$$
 \varepsilon_{\mathsf{c}}  \leq O\left( \sqrt{\frac{ 1 }{ n k  }} \right). 
$$
This completes the proof. \hfill \qedsymbol

Notably, from the above result, we have that $\varepsilon_{\mathsf{c}} \rightarrow 0 $ as $n$ or $k$ goes to infinity. Therefore, for any fixed $n$ and $\epsilon >0$, our decision rule is guaranteed to achieve a non-trivial minimax risk (strictly less than $1$) for any pairwise comparison model $P$ in the class $\mathcal{M}_0 $ or $\mathcal{M}_1(\epsilon)$ as long as the number of observed samples for each pair (i.e. $k$) are sufficiently large. Moreover, there do exist pairwise comparison models $\{p_{ij}: (i,j) \in \calE \}$ whose (normalized) separation is constant with $n$. Consequently, for such models, we can argue that for any fixed $k$ and $n$ large enough, our decision rule will achieve a non-trivial minimax risk. One such example of a pairwise comparison model represented by its pairwise comparison matrix (on a complete graph) is  
$$
P = \bigg(\frac{1}{2} + \eta\bigg) (\1\1^\T - I), \ \text{ for any } \eta \in \bigg(0, \frac{1}{2} \bigg).
$$ 
It is easy to verify that for this comparison model, we must have $ \inf_{w \in \constraintset} \frac{1}{n} \|P - \sF(w)\|_\F \geq \eta$. This is because any matrix $\sF(w)$ must satisfy the constraint $(\sF)_{ij} + (\sF)_{ji} = 1$ for every $i \neq j$, which immediately leads to the lower bound of $\eta$ on the separation distance.

\subsection{Proof of \Cref{lem: pth moment bound}} \label{Proof of lem: pth moment bound}
Observe that by definition of $\sF$ and $\Fhat$, we have
\begin{align}
 \sum_{(i, j) \in \calE} (F(w^{*}_{i}-w^{*}_{j})  -F(\hat{w}_{i}-\hat{w}_{j}))^{2}  &  \leq \beta^{2} \sum_{(i, j) \in \calE}(| (w_{i}^{*}-w^{*}_{j} )- (\hat{w}_{i}-\hat{w}_{j}) | )^2 \nonumber\\
& \leq 4 \beta^{2}  \|\hat{w}-w^{*}\|_{L}^{2}. \label{the only relevant eq}
\end{align}
Taking the power \(p / 2\) on both sides and then taking the expectation, we obtain
\begin{align*}
\mathbb{E}[\|\sF-\Fhat\|_{\F}^{p}] \leq 2^{p} \beta^{p}  \mathbb{E}[\|\hat{w}-w^{*}\|_{L}^{p}] \leq c_p \left(\frac{  n}{ k  }\right)^{p / 2},
\end{align*}
where the last inequality follows by plugging in the bounds on $p$th moment from \cref{thm: Error Bounds Under Model Mismatch} and absorbing the constants $\alpha, \beta$ in $c_p$. The tail bound follows directly from \Cref{the only relevant eq}.
 \hfill \qedsymbol

\section{Proof of \Cref{thm:Lower Bound}} \label{Proof of Lower Bound}
Without loss of generality, we assume that the graph $\G$ is Eulerian. If not, we can reduce the problem to an Eulerian graph by considering the largest Eulerian spanning sub-graph $\Gt$ of $\G$ so that every vertex of $\Gt$ has even degree, which exists by our assumption that $\G$ is super-Eulerian. 

Under the null hypothesis, we assume that the pairwise comparison model $P$ corresponds to equal utilities for all items, i.e., \(p_{ij} = \frac{1}{2}, \forall \ (i,j) \in \calE\) and let $\P_0$ denote the probability measure corresponding to this pairwise comparison model $P$. Under the alternative hypothesis, we will set our pairwise comparison model $R$ to be a perturbed version of $P$ (sharing the same observation graph $\G$). Specifically, every perturbation will have the following property:
\begin{align} 
\forall (i,j) \in \calE, \ r_{ij} = \frac{1}{2} + \eta b_{ij}, \text{ where } \eta \in \big[0,\frac{1}{2} - \delta\big], \nonumber \ b_{ij}  \in & \{-1,1\},\   b_{ij} + b_{ji} =0,\  \forall i \in [n], \\ 
 & \text { and } \sum_{j:(i,j) \in \calE} b_{ij}=0, \forall i \in [n],\label{pertubation characteristics}
\end{align}
where $b_{ij}$ represents the signs of the perturbation by parameter $\eta$. Note that we set $b_{ij} = 0 $ for $(i,j) \notin \calE.$ Let any such sequence of perturbations $b_{ij}$ is represented by a matrix $B \in \{-1,0,1\}^{n \times n}$.

As we delve deeper into the perturbation structure, we will carefully select a subset of perturbations $\mathcal{B}$ satisfying the constraints in \Cref{pertubation characteristics}, as well as additional constraints to be specified later 
\begin{align} \label{constraintsbij}
\mathcal{B} \subseteq\bigg\{b_{ij} \in\{-1,1\} \text{ for } (i,j) \in \calE : b_{ij} + b_{ji} = 0, \sum_{j:(i,j) \in \calE} b_{ij}=0, \forall i \in  [n]\bigg\}.  
\end{align}
Based on this selection, under the alternative hypothesis, let the pairwise comparison model $R$ be generated from a mixture distribution such that \(R = P+ \eta B \) and $B \sim \text{Unif}(\mathcal{B})$, i.e., $R$ is generated by adding the perturbation sequence selected uniformly at random from $\mathcal{B}$.  Let $\P_{\mathcal{B}}$ denote the measure corresponding to the overall mixture distribution.

As we examine the perturbation structure, we make our first observation: for any perturbation $B$ satisfying \Cref{pertubation characteristics}, the corresponding pairwise comparison model $R$ belongs to the class $\calM_1(\epsilon)$, for some $\epsilon$ as a function of $\eta$. Specifically, we will show that the perturbation $B$ guarantees a minimum separation distance of $\epsilon$ from the class of $\calM_0$.

\textbf{Bounding separation:} Our first observation is that any such perturbation $R = P + \eta B$ has a sufficiently large (and more importantly, tractable) separation distance. In order to lower bound this separation distance, we will utilize \cref{prop: Distance to closest GTM model}. But first, we need to find the optimal $\GTM$ weights $w^*$ (as in \cref{MLweightspairwise}). This is addressed in the following lemma.  
\begin{lemma}[Optimal Weights for Perturbed Matrix] \label{lem: Optimal weights for perturbed}
 For the $\GTM$ model and for any $B \in \mathcal{B}$ defined as in \cref{constraintsbij}, the perturbed pairwise comparison matrix $P+\eta B$ has a unique optimal $\GTM$ weights given by $w^* = \0$ (all zeros vector).
\end{lemma}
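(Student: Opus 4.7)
The plan is to verify the first-order optimality condition $\nabla l^*(0) = 0$ and then invoke \cref{prop: Existence and Uniqueness of MLE} to promote this critical point to the unique minimizer over $\constraintset$. First I would check feasibility: the point $w = 0$ trivially satisfies $w^\T \1 = 0$ and $\|w\|_\infty \leq b$, so $0 \in \constraintset$. Next I would verify that the perturbed model $R = P + \eta B$ satisfies \cref{pijbounded}: because $\eta \in [0, \tfrac{1}{2} - \delta]$ and $b_{ij} \in \{-1,0,1\}$, each entry $r_{ij} = \tfrac{1}{2} + \eta b_{ij}$ lies in $[\delta, 1-\delta]$. Hence \cref{prop: Existence and Uniqueness of MLE} applies and guarantees a unique minimizer $w^* \in \constraintset$ of $l^*(\cdot)$ evaluated at $\{r_{ij}\}$.

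Next I would compute $\nabla l^*(0)$. Because the noise variables in a GTM are i.i.d., the difference $X_i - X_j$ is symmetric about zero, so $F(-t) = 1 - F(t)$ and in particular $F(0) = \tfrac{1}{2}$. A direct differentiation of
\begin{equation*}
l^*(w) = -\sum_{(i,j) \in \calE} r_{ij} \log F(w_i - w_j) + (1 - r_{ij}) \log(1 - F(w_i - w_j))
\end{equation*}
gives, at $w = 0$,
\begin{equation*}
\frac{\partial l^*}{\partial w_i}\bigg|_{w=0} = -2F'(0) \sum_{j:(i,j)\in \calE} (2 r_{ij} - 1) + 2F'(0) \sum_{j:(j,i)\in \calE} (2 r_{ji} - 1),
\end{equation*}
after using $F(0) = \tfrac{1}{2}$. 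Substituting $2 r_{ij} - 1 = 2\eta b_{ij}$ and using the edge symmetry of $\calE$ together with the skew-symmetry $b_{ji} = -b_{ij}$, this collapses to $-8 F'(0)\, \eta \sum_{j:(i,j)\in\calE} b_{ij}$, which vanishes by the zero row-sum constraint defining $\mathcal{B}$ in \cref{constraintsbij}.

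Finally I would conclude uniqueness: by the strong log-concavity of $F$ on $[-2b, 2b]$ in \cref{eq:strong_log_concavity}, the objective $l^*$ is strictly convex on $\constraintset$ modulo the null space of $L$, which is eliminated by the identifiability constraint $w^\T \1 = 0$. Since $0 \in \constraintset$ is a critical point and the minimizer is unique by \cref{prop: Existence and Uniqueness of MLE}, we conclude $w^* = 0$. There is no serious obstacle here: the only subtlety is a careful accounting of the two copies of each undirected edge in $\calE$ when writing the gradient, and the vanishing is driven precisely by the two structural constraints imposed on $\mathcal{B}$, namely $b_{ij} + b_{ji} = 0$ and $\sum_{j:(i,j)\in\calE} b_{ij} = 0$.
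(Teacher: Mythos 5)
Your proposal is correct and follows essentially the same route as the paper: verify $w = 0$ is feasible, compute $\nabla l^*(0)$ exploiting $F(0) = 1/2$, observe that it vanishes thanks to the constraints $b_{ij}+b_{ji}=0$ and $\sum_{j:(i,j)\in\calE} b_{ij}=0$, and then invoke convexity together with \cref{prop: Existence and Uniqueness of MLE} to conclude $w^*=0$. The only cosmetic difference is that the paper writes the gradient in the compact factored form $-2\sum_{j} (r_{ij}-F(w_i-w_j))\,F'(w_i-w_j)/[F(1-F)]$ after already having invoked the skew-symmetry $r_{ij}+r_{ji}=1$, whereas you track the $(i,j)$ and $(j,i)$ terms separately and combine them via $b_{ji}=-b_{ij}$; these are equivalent and both land on $-8\eta F'(0)\sum_{j:(i,j)\in\calE} b_{ij}=0$.
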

The proof is provided in \cref{proof of lem: Optimal weights for perturbed}. 
We now utilize \cref{prop: Distance to closest GTM model} to obtain the following lower bound on separation distance as  
\begin{align*}
\inf_{w \in \constraintset} \left\|P+\eta B - \sF(w) \right\|_{\F} &\geq 
c_1 \left\|P+\eta B - \sF(\0) \right\|_{\F} \\ 
&  = c_1 \eta \sqrt{|\calE|} 
\end{align*}
where $c_1$ is the lower bound constant in  \cref{prop: Distance to closest GTM model} and the last equality follows sinde $(P)_{ij} = (\sF(0))_{ij} = 1/2 $ for all $(i,j) \in \calE$. Therefore, we have $n \epsilon \geq c_1 \eta \sqrt{|\calE|}$ by definition of $\calM_1(\epsilon)$.

Having established a lower bound on the separation distance for each of the perturbations, our next step is to carefully select a special subset $\mathcal{B}$ of perturbations that allows us to approximate the ``degrees of freedom'' in the structure of our perturbation set, while also taking into account the constraints imposed by the graph topology.

To this end, we leverage the assumption that our observation graph $\G$ is Eulerian, meaning every node has an even degree. This property enables us to decompose $\G$ into a collection of edge-disjoint cycles, denoted by $\mathcal{C}$. In addition, we introduce a comparison incidence graph $\G_I$, which represents the comparison structure as an undirected bipartite graph.
This graph has $n$ item nodes on one side and $|\calE|/2$ nodes on the other side, each representing a pairwise comparison $(i,j) \in \calE$ for $j>i$. The edges in $\G_I$ connect items to their respective comparison nodes. Since every node in $\G$ has an even degree, this ensures that the incidence graph $\G_I$ is Eulerian, and therefore $\G_I$ also has a cycle decomposition denoted by $\C_I$. Notably, each cycle in $\G_I$ is of even length and we can establish a one-to-one correspondence between the cycles in $\mathcal{C}$ and $\C_I$. Now, we orient the edges in the undirected comparison incidence graph $\G_I$ based on the values $b_{ij}$ in the perturbation $B$. Specifically, we will orient the edges in $\G_I$ as follows: if $b_{ij} = 1$, the edge will point from the item node to the comparison node for pair $(i, j)$, and if $b_{ij} = -1$, the edge will have the opposite direction. The constraints in \cref{constraintsbij} ensure that each node in $\G_I$ has equal in-degree and out-degree. 

To specify the construction of $\mathcal{B}$, we consider any fixed cycle decomposition $\C_I$ (since it may not be unique). Let the number of cycles in the cycle decomposition be denoted by $|\C_I|$. Let $\sigma_i \in \C_I$ represents the $i$th cycle in $\C_I$ and $|\sigma_i|$ denotes the length of this $i$th cycle. Observe that we can independently orient the edges of any cycle $\sigma_i \in \C_I$ in either clockwise or counterclockwise direction, yielding $2^{|\C_I|}$ distinct Eulerian orientations for $\mathcal{G}_I$. We then construct the structured collection of perturbations $\mathcal{B}$ by associating each perturbation with one of the $2^{|\C_I|}$ distinct Eulerian orientations of the cycle decomposition $\C_I$. This establishes a one-to-one correspondence between valid perturbations in $\mathcal{B}$ and distinct Eulerian orientations of $\C_I$. 
Thus, in summary we define $\mathcal{B}$ correspoding to decomposition $\C_I$ as
\[
\begin{aligned}
\mathcal{B} \triangleq  \bigg\{b_{ij} \in\{-1,1\} :  b_{ij} + b_{ji}=0,&  \forall (i,j) \in \calE, \sum_{j:(i,j) \in \calE} b_{ij}=0, \forall i \in [n], 
\\ & \ \ \ \ \ \ \ \ \ \ \ \ \ \  \left|b_{l}-b_{l+1}\right|=2, \forall l \in \sigma_{i}, \forall \sigma_{i} \in \C_I\bigg\},
\end{aligned}
\]
where, \(l\) is used for indexing sequential edges of the cycle \(\sigma_{i}\).

\textbf{Bounding risk:} Now, we will utilize the Ingster-Suslina method to compute lower bound on \(\mathcal{R}(\G, \epsilon)\) (cf. \Cref{eqn: m-risk}). 
The standard testing inequality by Le Cam \cite{IngsterSuslina2003} states that
\begin{align} \label{Lecam}
\mathcal{R}(\G, \epsilon) \geq 1 - \left\|\mathbb{P}_{0}-\mathbb{P}_{\mathcal{B}}\right\|_{\mathsf{TV}}  \geq  1 - \sqrt{ \chi^2(\mathbb{P}_{\mathcal{B}} || \P_0)} .
\end{align}

We calculate the chi-squared divergence $\chi^2(\P_0||\P_\mathcal{B})$ by expressing it as an expectation with respect to two independent pairwise models corresponding to permutations $B$ and $B^\prime$ drawn independently and uniformly at random from $\mathcal{B}$ as  
\begin{align}
    \chi^2(\mathbb{P}_{\mathcal{B}}||\P_0) &= \E_{B, B^\prime \sim \text{Unif}(\mathcal{B})} \left[ \int \frac{d\P_{B} d\P_{B^\prime}}{d \P_0} \right] . \nonumber
\end{align}
We will now leverage the tensorization property of $1 +\chi^2(P||Q)$, which enables us to decompose the chi-squared divergence between product distributions into a product of individual divergences. Specifically, for distributions $P_1, Q_1, \dots, P_n, Q_n$, we have  
$$1 +\chi^2\left( \prod_{i=1}^n P_i|| \prod_{i=1}^n Q_i\right) = \prod_{i=1}^n (1 +\chi^2( P_i||  Q_i)). $$ 
Consequently, the chi-squared divergence $\chi^2(\P_\mathcal{B}|| \P_0)$ simplifies as
\begin{align}
    & 1+  \chi^2(\P_\mathcal{B}||\P_0) = \nonumber\\
    & \E_{B, B^\prime \sim \text{Unif}(\mathcal{B})} \bigg[  \prod_{(i,j) \in \calE} \bigg(\sum_{m =0 }^k  \frac{ {k \choose m } (\frac{1}{2} + \eta b_{ij})^{m} (\frac{1}{2} - \eta b_{ij})^{k - m} {k \choose m } (\frac{1}{2}+ \eta b_{ij}^\prime)^{m} (\frac{1}{2}- \eta b^\prime_{ij})^{k - m} }{ {k \choose m } \left(\frac{1}{2}\right)^k } \bigg) \bigg] \nonumber \\ 
    & = \E_{B, B^\prime \sim \text{Unif}(\mathcal{B})} \bigg[  \prod_{(i,j) \in \calE} \bigg(\sum_{m =0 }^k  \frac{ {k \choose m } (\frac{1}{2}+ \eta b_{ij})^{m} (\frac{1}{2} - \eta b_{ij})^{k - m} (\frac{1}{2}+ \eta b_{ij}^\prime)^{m} (\frac{1}{2} - \eta b^\prime_{ij})^{k - m} }{  \left(\frac{1}{2}\right)^k } \bigg)  \bigg]. \label{version 1}
\end{align}
We direct our attention to the $(i,j)$th term of the product in \Cref{version 1}, for $(i,j) \in \calE$ and denote it as $h(b_{ij}, b^\prime_{ij})$
\begin{equation}\label{def of h}
h(b_{ij}, b^\prime_{ij}) = \sum_{m =0 }^k  \frac{ {k \choose m } (\frac{1}{2}+ \eta b_{ij})^{m} (\frac{1}{2} - \eta b_{ij})^{k - m} (\frac{1}{2}+ \eta b_{ij}^\prime)^{m} (\frac{1}{2} - \eta b^\prime_{ij})^{k - m} }{  \left(\frac{1}{2}\right)^k }.
\end{equation}
Now since we have $b_{ij}, b^\prime_{ij} \in \{-1, 1\}$, therefore whenever $b_{ij}$ and $b_{ij}^\prime$ agree, by \Cref{def of h} we have $h(1,1) = h(-1,-1)$. And moreover, we can calculate $h(1, 1)$ as  
\begin{equation*}
\begin{aligned}
    h\left(1, 1\right) &= 2^k \sum_{m=0}^k {k \choose m} \left(\frac{1}{2}+ \eta\right)^{2m} \left(\frac{1}{2}- \eta\right)^{2k-2m}    \\
    & = 2^k \sum_{m=0}^k {k \choose m} \left(\frac{1}{4}+ \eta^2 + \eta\right)^{m} \left(\frac{1}{4} + \eta^2 - \eta\right)^{k-m}    \\
    & = \left(1 + 4\eta^2\right)^k \sum_{m=0}^k {k \choose m} \left(\frac{1}{2}+ \frac{\eta}{\frac{1}{2}+ 2\eta^2}\right)^{m} \left(\frac{1}{2}- \frac{\eta}{\frac{1}{2}+ 2\eta^2}\right)^{k-m} \\ 
    & = (1 + 4\eta^2)^k.  
\end{aligned}
\end{equation*}

Additionally, by \cref{def of h} we also have $h(1, - 1 )=h(-1, 1)$ and it simplifies to
\begin{equation*}
\begin{aligned}
    h\left(1, -1\right) &= 2^k \sum_{m=0}^k {k \choose m} \left(\frac{1}{2}+ \eta\right)^{2k} \left(\frac{1}{2}- \eta\right)^{2k}   = (1 - 4\eta^2)^k. 
\end{aligned}
\end{equation*}
For any two perturbations $B, B^\prime \sim \text{Unif}(\mathcal{B})$, let random variables $A_1$ denotes the number of agreements between $B, B^\prime $ respectively, i.e., number of $(i,j) \in \calE_+$ where $b_{ij} = b_{ij}^\prime$ in randomly drawn permutation $B$ and $B^\prime$. And similarly let $A_2$ denotes the number of disagreements between $B, B^\prime$ i.e., number of $(i,j) \in \calE_+$ where $b_{ij} = -b_{ij}^\prime$. Consequently, we obtain

\begin{align}
      \nonumber1+ \chi^2(\P_\mathcal{B}|| \P_0)  &= \E_{B, B^\prime \sim \text{Unif}(\mathcal{B})} \bigg[  h\left(1,1\right)^{2A_1} h\left(1,-1\right)^{2A_2}  \bigg] \\
     \nonumber & = \E_{B, B^\prime \sim \text{Unif}(\mathcal{B})} \bigg[ (1+4\eta^2)^{2kA_1}(1-4\eta^2)^{2kA_2} \bigg]\\
      & \leq \E_{B, B^\prime \sim \text{Unif}(\mathcal{B})} \bigg[ \exp(8\eta^2 k (A_1-A_2) ) \bigg]. \label{chi square simplified} 
\end{align}

In addition, we define vectors $a, a' \in \{-1,1\}^{|\C_I|}$ to represent the orientations of the $|\C_I|$ cycles in $\G_I$ induced by $B$. The subsequent calculation will now be used to complete the proof:
$$
\begin{aligned}
\chi^2\left(\mathbb{P}_\mathcal{B}|| \mathbb{P}_0\right)+1 & \stackrel{\zeta_1}{\leq} \frac{1}{2^{2|\C_I|}} \sum_{B, B^{\prime}} \exp \left(8\eta^2 k (A_1 -A_2) \right)  \stackrel{\zeta_2}{=} \frac{1}{2^{2|\C_I|}} \sum_{a, a^{\prime}} \exp \left(8\eta^2 k \sum_{\sigma_i \in \C_I}\left|\sigma_i\right| a_i a_i^{\prime}\right)  \\ 
& =\mathbb{E}\left[\prod_{\sigma_i \in \C_I} \exp \left(8\eta^2 k\left|\sigma_i\right| a_i a_i^{\prime}\right)\right]  \stackrel{\zeta_3}{=} \prod_{\sigma_i \in \C_I} \mathbb{E}\left[\exp \left(8\eta^2 k\left|\sigma_i\right| a_i a_i^{\prime}\right)\right] \\
& =\prod_{\sigma_i \in \C_I}\left(\frac{1}{2} \exp \left(8\eta^2 k\left|\sigma_i\right|\right)+\frac{1}{2} \exp \left(-8\eta^2 k\left|\sigma_i\right|\right)\right) \\
& \leq \prod_{\sigma_i \in \C_I}\left(\exp \left(32\eta^4 k^2 \left(\left|\sigma_i\right|\right)^2\right)\right)  =\exp \left( 32\eta^4 k^2\sum_{\sigma_i \in \C_I} \left|\sigma_i\right|^2\right),
\end{aligned}
$$
where $\zeta_1$ follows from \cref{chi square simplified} and the fact that there are $2^{|\C_I|}$ perturbations which are sampled uniformly from $\mathcal{B}$, $\zeta_2$ follows from definition of $a_i$ and the fact that number of agreements/disagreements can be represented in terms of the agreements/disagreements of the cycle orientations $a_i, a^{\prime}_i$. $\zeta_3$ follows from the fact that the orientations of the cycles are independent of one another. Finally, utilizing the fact that $c_1\eta \sqrt{|\calE|} \leq n \epsilon$ and by combining the resulting bound along with \cref{Lecam} and the fact that cycle lengths in $\G_I$ are twice the size in $\G$ completes the proof. \hfill \qedsymbol

\subsection{Proof of \cref{prop:Lower Bound for graphs}}\label{Proof of prop:Lower Bound for graphs}
\textbf{Part 1:} For a complete graph, the comparison incidence graph $\G_I$ has $n$ item nodes and $\frac{n(n-1)}{2}$ comparison nodes. When $n$ is odd, all nodes have an even degree equal to $n-1$; therefore, $\G$ is Eulerian. Notably, $n$ can take the forms $n = 6m + 1$, $n = 6m+ 3$, and $n = 6m + 5$, where $m \in \N$. As established by \cite{kirkman}, for $n = 6m + 1$ and $n = 6m + 3$, $\G$ can always be decomposed into cycles of length $3$. Meanwhile, for $n = 6m + 5$, $\G$ can be decomposed into a cycle of length $4$ and remaining cycles of length $3$ \cite{feder2012packing}. Therefore, we have $|\calE|^2 = O(n^4)$ and $\sum_{\sigma \in \C} |\sigma|^2 = O(n^2)$, giving $\varepsilon_{\mathsf{c}}^2 = \Omega(1/nk)$

\textbf{Part 2:} Consider a $d$-regular graph with constant even degree $d$. The associated comparison incidence graph has $n$ item nodes and $nd/2$ comparison nodes. By applying (\citealp{Seshadri2020}, Lemma 9), we can decompose the edge set of the comparison incidence graph into cycles of size at most $\lfloor 2 \log_2 (n) \rfloor $, with at most  $\min\{ 2 n + n d, 4 n\} = 4n$  edges remaining. Since the graph is Eulerian, removing cycles does not affect this property. Therefore, the remaining $4n$ edges can be further decomposed into cycles of length at most $2n$ (since cycles can have a maximum length of $2n$, and this reflects a worst-case scenario). This yields $\sum_{\sigma \in \C} |\sigma|^2 = O(n^2)$, which in turn implies $\varepsilon_{\mathsf{c}}^2 = \Omega(1/n^2k)$.

\textbf{Part 3:} For graphs comprising a single cycle, it is easy to verify that the number of cycles is $1$ and the cycle has a length of $n$.

\textbf{Part 4:} For a toroidal grid of size $\sqrt{n} \times \sqrt{n}$, we can generate a cycle decompostion of $\G$ consisting of $\sqrt{n}$ horizontal edges and $\sqrt{n}$ vertical edges. Clearly, each of these edges has a length of $\sqrt{n}$. Therefore, $\sum_{\sigma_i \in \C} |\sigma_i|^2 = 2n\sqrt{n}. $ And since it is a toroidal grid we have $|\calE| = 2 n$. Plugging in these values we obtain $\varepsilon_{\mathsf{c}}^2 = \Omega(1/n^{7/4} k)$. \hfill \qedsymbol

\subsection{Proof of \cref{lem: Optimal weights for perturbed}} \label{proof of lem: Optimal weights for perturbed}
To find the optimal weights $w^*$, our objective is to solve the following optimization problem with parameter $b$ :
\begin{equation*}
 l^*(w) = \min_{w \in \mathcal{W}_b } - \sum_{(i, j) \in \calE} r_{i j} \log (F\left(w_{i}-w_{j}\right))+\left(1-r_{i j}\right) \log \left(1-F\left(w_{i} - w_{j}\right)\right).
  \end{equation*}  
Our initial observation is that, due to the skew-symmetrization of the model $r_{ij} +r_{ji} = 1,$ we can express the gradient of $l^*(w)$ as:
\begin{equation*}
  (\nb l^*(w))_i = -2 \sum_{j: (i,j) \in \calE} ( r_{ij} - F(w_i -w_j) ) \times \frac{F^\prime(w_i -w_j)}{F(w_i -w_j)(1 - F(w_i - w_j)) }.
\end{equation*}
Furthermore, the gradient is zero at $w = \0$. To see this, note that for all $i \in [n],$ we have:
\begin{equation*}
(\nb l^*(w))_i|_{w = \0} = -2 \sum_{j: (i,j) \in \calE} ( \frac{1}{2} + \eta b_{ij} - F(0) ) \times \frac{F^\prime(0)}{F(0)(1 - F(0)) } =  -8\eta F^\prime(0) \sum_{j: (i,j) \in \calE}  b_{ij}  = 0,
\end{equation*}
where the last step is followed by our construction of the perturbation sequence in \Cref{pertubation characteristics}. Considering that the gradient is zero at $w= \0$ and the optimization objective is convex over $\mathcal{W}_b$ (in fact strongly convex over $\constraintset$), coupled with the uniqueness of the optimal $\GTM$ weights as indicated by \Cref{prop: Existence and Uniqueness of MLE}, we conclude that $w^* = \0$ is indeed the optimal and unique solution for any $b \geq 0 $. \hfill \qed

\section{Proofs of Time-Uniform Bounds on Probabilities of Type \MyRomannum{1} and \MyRomannum{2} Errors}
In this appendix, we will establish bounds on type \MyRomannum{1} and \MyRomannum{2} error probabilities as described in \cref{thm: Type1 upper bound}. First, we will introduce essential notation to facilitate our analysis and present our proof of \cref{prop: Reverse Martingale}. Then, we will establish a few auxiliary lemmata such as which are needed to derive the bounds on type \MyRomannum{1} and type \MyRomannum{2} errors. Finally, combining these results, we will proof of \cref{thm: Type1 upper bound} in \cref{Proof of thm: Type1 upper bound} and a few corollaries based on these results in \Cref{corollary appendix}.

\textbf{Additional notation:} We introduce $Y_{ij}^l$ for $l \in \N$ and $(i,j) \in \calE$ to denote the observed comparisons that are used for estimating $Z_{ij} = \sum_{ l=1}^{k} Z_{ij}^{k_1+l}$, i.e., we let $Y_{ij}^l =  Z_{ij}^{k_1+l} $ for $l \in [k]$. Also, define the statistic $\bar{Y}^{k}_{ij} \triangleq \sum_{m=1}^k Y^m_{ij}$. Moreover, define $\1_n$ as an all-ones vector of length $n$ and $I_n$ as the identity matrix of size $n \times n$. 

\subsection{Proof of \cref{prop: Reverse Martingale}}\label{Proof of Reverse Martingale}
We will focus on the following sequence $\{ \tilde{T}_{ij}^k,  k \in \N \setminus\{ 1\} \}$ defined as 
$$
\tilde{T}_{ij}^k \triangleq \frac{\bar{Y}^{k}_{ij}\left(\bar{Y}^{k}_{ij}-1\right)}{k(k-1)} + b_{ij}^{2}-2 b_{ij} \frac{\bar{Y}^{k}_{ij}}{k}.
$$ 
Note that with $b_{ij} = F(\hat{w}_i - \hat{w}_j)$, the term $\tilde{T}_{ij}^k$ reduces to the $(i,j)$th term of the test statistic $T^{k_1, k}$ (based on the notation defined above) and we will now show that it is indeed a reverse martingale. To do this, we will demonstrate that both the terms $\frac{\bar{Y}^{k}_{ij}\left(\bar{Y}^{k}_{ij}-1\right)}{k(k-1)}$ and $\frac{\bar{Y}^{k}_{ij}}{k}$ are indeed reverse martingales. First, we focus on the former term. Observe that we can write the product $\bar{Y}^{k}_{ij}\left(\bar{Y}^{k}_{ij}-1\right)$ as  
$$
\frac{\bar{Y}^{k}_{ij}\left(\bar{Y}^{k}_{ij} -1\right)}{k(k-1)}=\frac{(\sum_{m=1}^k Y_{ij}^{m})^2-\sum_{m=1}^k Y_{ij}^{m}}{k(k-1)}= \frac{1}{k(k-1)}\sum_{l,m =1: l \neq m}^k Y_{ij}^l Y_{ij}^m,
$$
where the last equality follows because $ \sum_{m=1}^k Y^m_{ij}\left(Y^m_{ij}-1\right)=0$ as $Y^m_{ij} \in \{0,1\}$. Also, observe that $\mathbb{E}\left[Y^{m}_{ij} Y^{l}_{ij} \mid \mathcal{F}_{k+1}\right]=\mathbb{E}\left[Y^{m}_{ij} Y^{r}_{ij} \mid \mathcal{F}_{k+1}\right]$ for $l \neq m \neq r$ and where $\mathcal{F}_{k}$ is the canonical reverse filtration defined as the sigma algebra generated by $\mathcal{F}_{k}= \bigotimes_{(i,j) \in \mathcal{E}} \sigma\left(\frac{\bar{Y}^k_{ij}}{k}, Y^{k+1}_{ij}, Y^{k+2}_{ij}, \ldots \right).$ This is because for any set $\mathcal{A} \in \mathcal{F}_{k+1}$ and $l,m,r \in [k]$ and $l \neq m \neq r,$ we have
$$
\mathbb{E}\left[Y_{ij}^m Y^l_{ij} \I_{\mathcal{A}}\right]=\mathbb{E}\left[Y_{ij}^m Y_{ij}^r \I_{\mathcal{A}}\right]
\text{ and } \mathbb{E}\left[Y_{ij}^m \I_{\mathcal{A}}\right]=\mathbb{E}\left[Y^{l}_{ij} \I_{\mathcal{A}}\right].
$$
Utilizing the above relations, we can show that $\frac{\bar{Y}^{k}_{ij}\left(\bar{Y}^{k}_{ij}-1\right)}{k(k-1)}$ is indeed a reverse-martingale as:
$$
\begin{aligned}
\mathbb{E}\left[\frac{\bar{Y}^{k}_{ij}\left(\bar{Y}^{k}_{ij} -1\right)}{k(k-1)} \mid \mathcal{F}_{k+1}\right]  & = \mathbb{E}\left[\frac{\sum_{l,m =1: l \neq m}^k Y_{ij}^l Y_{ij}^m}{k(k-1)} \mid \mathcal{F}_{k+1}\right] =  \frac{\sum_{l,m =1: l \neq m}^{k+1} \mathbb{E}\left[Y_{ij}^l Y_{ij}^m \mid \mathcal{F}_{k+1}\right] }{k(k-1)} \\
& = \mathbb{E}\left[\frac{\sum_{l, m =1: l\neq m}^{k+1}  Y_{ij}^l Y_{ij}^m}{k(k+1)} \mid \mathcal{F}_{k+1}\right]  = \mathbb{E}\left[\frac{\bar{Y}^{k+1}_{ij}\left(\bar{Y}^{k+1}_{ij}-1\right)}{k(k+1)} | \mathcal{F}_{k+1}\right] \\
& = \frac{\bar{Y}^{k+1}_{ij}\left(\bar{Y}^{k+1}_{ij}-1\right)}{k(k+1)}, 
\end{aligned}
$$
where the last equality follows since $\bar{Y}^{k+1}_{ij}$ is measurable with respect to $\mathcal{F}_{k+1}$. Similarly, we can also show that $\frac{\bar{Y}^{k}_{ij}}{k}$ is also a reverse martingale as: 
$$
\begin{aligned}
\mathbb{E}\left[\frac{\bar{Y}^{k}_{ij}}{k}| \mathcal{F}_{k+1}\right] &= \frac{\sum_{m =1}^k  \mathbb{E}\left[Y_{ij}^m \mid \mathcal{F}_{k+1}\right] }{k} =  \frac{\sum_{m = 1}^k \mathbb{E}\left[Y^{m}_{ij}\mid \mathcal{F}_{k+1}  \right] }{k+1} \\
& = \mathbb{E}\left[\frac{\bar{Y}^{k+1}_{ij}}{k+1}| \mathcal{F}_{k+1}\right] =  \frac{\bar{Y}^{k+1}_{ij}}{k}.
\end{aligned}
$$
Finally, the proposition follows by the linearity of conditional expectation and substituting $b_{ij} = F(\hat{w}_i - \hat{w}_j)$ as:
$$
\begin{aligned}
\mathbb{E}\left[ {T}^{k_1, k} \mid \mathcal{F}_{k+1}\right] & = \mathbb{E}\left[ \sum_{(i,j) \in \calE} \tilde{T}_{ij}^k \mid \mathcal{F}_{k+1}\right] \\ 
& = \sum_{(i,j) \in \calE}  \mathbb{E}\left[ \frac{\bar{Y}^{k}_{ij}\left(\bar{Y}^{k}_{ij}-1\right)}{k(k-1)} \mid \mathcal{F}_{k+1}\right]+b_{ij}^{2}-2 b_{ij} \mathbb{E}\left[\frac{ \bar{Y}_{ij}^k \mid \mathcal{F}_{k+1}}{k}\right] \\
& = \sum_{(i,j) \in \calE}\frac{\bar{Y}^{k+1}_{ij}\left(\bar{Y}^{k+1}_{ij}-1\right)}{k(k+1)}  +b_{ij}^{2}- 2 b_{ij}\frac{ \bar{Y}^{k+1}_{ij}}{k+1} = \sum_{(i,j) \in \calE} \tilde{T}_{ij}^{k+1} = T^{k_1, k+1} .
\end{aligned}
$$
This completes the proof.
\hfill\qedsymbol

\subsection{Intermediate Lemmata} \label{Proof of lem: Conditional bounds on type 1 and type 2 errors} 
In order to derive the proof of \cref{thm: Type1 upper bound}, we will first prove the following intermediate lemma that gives bounds on type \MyRomannum{1} and type \MyRomannum{2} errors where the threshold is a function of the estimated weights $\hat{w}$. Additionally, the lemma relies on a variant of the Hanson-Wright inequality that is time-uniform (see \cref{thm: Time Uniform Hanson-Wright inequality}) and specialized to our specific setting.

\begin{lemma}[Conditional Bounds on Type \MyRomannum{1} and \MyRomannum{2} Error Probabilities] \label{lem: Conditional bounds on type 1 and type 2 errors} For any $\alpha \in (0,1]$ and for $\hat{w}$ estimated from the first $k_1$ pairwise comparison for each pair in $\calE$, there exist a constant $c$ such that for $\nu \in (0, 1/e)$, the following bounds hold under hypothesis $H_0$ and $H_1$, respectively:
\begin{align} 
  & \P_{H_0}\bigg(\exists k \geq 2,  T^{k_1, k}  \geq \|\sF -\Fhat\|^2_\F  +   c\frac{ \sqrt{|\calE|}}{k} \ell_{k,\nu} + 4 \frac{\|\sF - \Fhat\|_\F}{\sqrt{k}} \sqrt{ \ell_{k,\nu} } \bigg) \leq  \nu ,\nonumber \\
    &  \P_{H_1}\bigg(\exists k \geq 2, T^{k_1, k}  \nonumber- \|P - \sF \|_\F^2  \geq  \|\sF - \Fhat\|_\F^2 -2\|\sF - \Fhat\|_\F \|P - \sF \|_\F -     \\
  & \ \ \ \ \ \ \ \ \ \ \  \ \ \ \ \ \ \ \ \ \ \ \ \ \ \ \ \ \ \ \ \ \ \ \ \ \  \ \ \ \ \ \ \ \ \   \ \ \   - c \frac{\sqrt{|\calE|} }{k}  \ell_{k,\nu}  - 4 \frac{\|P - \sF \|_\F + \| \sF - \Fhat\|_\F}{\sqrt{k}} \sqrt{\ell_{k,\nu} }    \bigg) \leq \nu, \nonumber
\end{align}
where $\ell_{k,\nu} = \log\big(3.5 \log_2(k)^2 /\nu\big)$.
\end{lemma}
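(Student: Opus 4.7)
The plan is to condition on $\mathcal{Z}_1$ (so that $\what$, hence $\Fhat$, is fixed) and view $T^{k_1,k}$ as a quadratic-plus-linear statistic in the centered Bernoulli variables $\tilde{Y}^m_{ij} \triangleq Y^m_{ij} - p_{ij}$ that constitute $\mathcal{Z}_2$. Using the identity $Z_{ij}(Z_{ij}-1) = \sum_{l \neq m} Y^l_{ij} Y^m_{ij}$ already exploited in \cref{Proof of Reverse Martingale}, one finds $\E[T^{k_1,k} \mid \mathcal{Z}_1] = \sum_{(i,j)\in\calE}(p_{ij} - F(\what_i-\what_j))^2 = \|P - \Fhat\|_\F^2$, and the centered deviation splits as $T^{k_1,k} - \|P - \Fhat\|_\F^2 = Q_k + L_k$ with
\begin{align*}
Q_k &\triangleq \sum_{(i,j) \in \calE} \frac{1}{k(k-1)} \sum_{l \neq m} \tilde{Y}^l_{ij} \tilde{Y}^m_{ij}, \\
L_k &\triangleq \sum_{(i,j) \in \calE} \frac{2(p_{ij} - F(\what_i - \what_j))}{k} \sum_{m=1}^k \tilde{Y}^m_{ij}.
\end{align*}
The quadratic part is $\tilde{Y}^\T M_k \tilde{Y}$ on $\R^{k|\calE|}$, where $M_k$ is block-diagonal with blocks $(J-I)/(k(k-1))$; a direct calculation gives $\|M_k\|_\F = \sqrt{|\calE|/(k(k-1))}$ and $\|M_k\|_{2} = 1/k$.

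\textbf{Concentration of the two pieces.} Next, I would apply the time-uniform Hanson-Wright inequality (the specialized Theorem Time Uniform Hanson-Wright whose existence is flagged in the excerpt) to $Q_k$, conditional on $\mathcal{Z}_1$. Taking budget $\nu/2$, this yields that uniformly in $k \geq 2$, with probability at least $1-\nu/2$,
\begin{equation*}
|Q_k| \leq c\bigl(\|M_k\|_\F \sqrt{\ell_{k,\nu}} + \|M_k\|_{2}\, \ell_{k,\nu}\bigr) \leq c'\,\frac{\sqrt{|\calE|}\,\ell_{k,\nu}}{k}.
\end{equation*}
For $L_k$, \cref{prop: Reverse Martingale} and linearity imply that $\{L_k:k\geq 2\}$ is a mean-zero reverse martingale whose predictable variance is bounded by $4\|P-\Fhat\|_\F^2/k$. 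A peeling argument over dyadic blocks $k \in [2^j,2^{j+1})$ combined with a Freedman/Azuma-type concentration for reverse martingales (with per-block budget $\nu/(3.5 j^2)$, which gives the $\log_2(k)^2/\nu$ inside $\ell_{k,\nu}$) gives, uniformly in $k \geq 2$, with probability at least $1-\nu/2$,
\begin{equation*}
|L_k| \leq 4\|P - \Fhat\|_\F \sqrt{\ell_{k,\nu}/k}.
\end{equation*}
A union bound over these two events then controls $T^{k_1,k} - \|P - \Fhat\|_\F^2$ uniformly over $k \geq 2$.

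\textbf{Translation to the statements of the lemma.} It remains to rewrite the conditional mean $\|P - \Fhat\|_\F^2$ in the forms appearing in the lemma. Under $H_0$ we have $P = \sF$, so $\|P - \Fhat\|_\F = \|\sF - \Fhat\|_\F$ and the first display follows verbatim. Under $H_1$, expand
\begin{equation*}
\|P - \Fhat\|_\F^2 = \|P - \sF\|_\F^2 + \|\sF - \Fhat\|_\F^2 + 2\,\mathrm{tr}\bigl((P-\sF)^\T (\sF - \Fhat)\bigr),
\end{equation*}
bound the cross term below by $-2\|P-\sF\|_\F\,\|\sF-\Fhat\|_\F$ via Cauchy--Schwarz, and bound the linear-term prefactor above by $\|P - \Fhat\|_\F \leq \|P-\sF\|_\F + \|\sF-\Fhat\|_\F$. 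These two inequalities precisely reproduce the second display.

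\textbf{Main obstacle.} The crux is the time-uniform Hanson-Wright step for $Q_k$, since the standard Hanson-Wright inequality is a fixed-$k$ statement. The strategy is to leverage the fact that $Q_k$ is itself driven by the reverse martingale $\bar Y^k_{ij}/k$ (so $Q_k \to 0$ a.s. as $k \to \infty$) and to stitch the deviation over dyadic blocks, which is what introduces the $\log \log_2(k)$ factor hidden inside $\ell_{k,\nu}$; the constant $c$ in the lemma is then inherited from this specialized Hanson-Wright bound. A secondary point requiring care is ensuring the parameter $\alpha$ implicit in the peeling does not degrade the linear-term constant (which must be exactly $4$ rather than the naive $2$ arising from a single Hoeffding bound), which is handled by choosing the peeling grid so that the geometric-sum correction is absorbed.
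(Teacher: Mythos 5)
Your proposal is correct and follows essentially the same route as the paper's proof: condition on $\mathcal{Z}_1$, split the centered statistic into the block-diagonal quadratic form $Q_k = \sum_{(i,j)} (y_{ij}^k - p_{ij}\1_k)^\T A^{(k)} (y_{ij}^k - p_{ij}\1_k)$ (the paper's $\sum I^{1,k}_{ij}$ under $H_0$, $\sum I^{3,k}_{ij}$ under $H_1$) and the linear term $L_k$ (the paper's $-\sum I^{2,k}_{ij}$, resp.\ $\sum I^{4,k}_{ij}$), control $Q_k$ by the time-uniform Hanson--Wright bound and $L_k$ by a stitched/adaptive Hoeffding bound, then union-bound. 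The only cosmetic difference is that you center both cases on $\|P - \Fhat\|_\F^2$ and absorb the deterministic offset $\|P-\Fhat\|_\F^2 - \|P-\sF\|_\F^2$ afterward via the trace expansion and Cauchy--Schwarz, whereas the paper's $H_1$ decomposition carries this offset explicitly as the term $\sum I^{5,k}_{ij}$; the underlying algebra is identical. (Minor note: the lemma's displayed $H_1$ inequality has a sign typo in the paper — it should read $\leq$ rather than $\geq$, matching the lower-tail control that your argument and the paper's $I^{3},I^{4},I^{5}$ bounds actually produce and that is used in the subsequent proof of the type-\MyRomannum{2} bound.)
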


\begin{proof}
~\newline
\textbf{Part 1:} We will first prove the bound under hypothesis $H_0$. Based on the additional notation defined at the beginning of the Appendix, we can simplify the $(i,j)$th term ${T}_{i j}^{k_1,k} $ of the test statistic $T^{k_1,k}$ as:
\begin{align*}
{T}_{i j}^{k_1,k} &=\frac{\bar{Y}_{i j}^k(\bar{Y}_{i j}^k-1)}{k(k-1)}+F(\hat{w}_i-\hat{w}_j)^2-2 F(\hat{w}_i-\hat{w}_j) \frac{\bar{Y}_{i j}^k}{k} \\
& =\frac{\sum_{l, m=1 : l \neq m}^k Y_{i j}^m Y_{i j}^l}{k(k-1)}+F(\hat{w}_i-\hat{w}_j)^2-2 \frac{F(\hat{w}_i-\hat{w}_j)_{l=1}^m Y_{i j}^m}{k} \\
& \stackrel{\zeta_{1}}{=} (y_{i j}^{k})^{\T} A^{(k)} y_{i j}^{k}+\1_{k}^{\T} A^{(k)} \1_{k} F(w^*_{i}-w^*_{j})^{2}-2 F(w^*_{i}-w^*_{j}) \1_{k}^{\T} A^{(k)} y_{i j}^{k} + \\
& \ \ (F(\hat{w}_{i}-\hat{w}_{j})-F(w^*_{i}-w^*_{j}))^{2}-2(F(\hat{w}_{i}-\hat{w}_{j})-F(w^*_{i}-w^*_{j}))\bigg(\frac{\bar{Y}_{i j}^{k}}{k}-F(w^*_{i}-w^*_{j})\bigg) \\
& \stackrel{\zeta_2}{=} \underbrace{(y_{i j}^{k}-F(w^*_{i}-w^*_{j}) \1_{k})^{\T}  A^{(k)}(y_{i j}^{k}-F(w^*_{i}-w^*_{j}) \1_k)}_{I_{i j}^{1, k}} + (F(\hat{w}_{i}-\hat{w}_{j}) -F(w^*_{i}-w^*_{j}))^2 \\
& \ \ \ \ \ - \underbrace{  2(F(\hat{w}_{i}-\hat{w}_{j})-F(w^*_{i}-w^*_{j})) \bigg(\frac{\bar{Y}_{i j}^{k}}{k}-F(w^*_{i}-w^*_{j} ) \bigg) }_{I_{ij}^{2,k}},
\end{align*}
where in $\zeta_{1}$ we have $A^{(k)} \triangleq \frac{\1_{k} \1_{k}^{\T}-I_k }{k(k-1)}$ and $y_{i j}^{k} \in \mathbb{R}^{k}$ is a vector such that $y_{i j}^{k}=[Y_{i j}^{1}, \ldots , Y_{i j}^{k}]$ and in $\zeta_2$ the term $I_{i j}^{1, k}$ follows by observing that 
\begin{align} 
\left(y_{i j}^{k}-F(w^*_i - w^*_j)\1_k\right)^{\T} & A^{(k)} \left(y_{i j}^{k}-F(w^*_i - w^*_j)\1_k\right) \nonumber =\\
& (y_{i j}^{k})^{\T} A^{(k)} y_{i j}^{k} + \1_{k}^{\T} A^{(k)} \1_k  F(w^*_i - w^*_j)^{2} - 2 p_{i j} \1_k^\T A^{(k)} y_{i j}^{k}.\label{important martingale term}
\end{align}
Now, we will upper bound the quadratic variation term $\sum_{(i, j) \in \calE} I_{i j}^{1, k} $.  
For this we will utilize \cref{thm: Time Uniform Hanson-Wright inequality} to obtain the tail bounds for any $\nu \in (0, 1/e)$ to obtain (for some constant $c$):
$$
\mathbb{P}\bigg(\exists k \geq 2: \sum_{(i, j) \in \calE} I_{i j}^{1, k}> c \frac{\sqrt{|\calE|} }{k} \ell_{k,\nu}  \bigg) \leq  \frac{\nu}{2}.  
$$
It is straightforward to show that $\sum_{(i, j) \in \calE} I_{ij}^{2,k}$ is $(4 \|\sF - \Fhat\|^2_\F/k)$-sub-gaussian. Therefore, by utilizing a time-uniform version of Hoeffding inequality (\citealp{reversesupermartingale}, Corollary 8) for the user-defined $h(k) = (\pi k)^2/6$ (also used for the stitching argument in the proof of \cref{thm: Time Uniform Hanson-Wright inequality}), we obtain for any $\nu \in (0,1)$
$$
\begin{aligned}
\mathbb{P}\bigg(\exists k \geq 2: \sum_{(i, j) \in \calE} I_{ij}^{2,k}  >  2 \|\sF - \Fhat\|_\F  \sqrt{2 \frac{ \log( h(\log_2(k))) + \log(2/\nu)  }{k/2}  } \bigg) \leq \frac{\nu}{2}.  
\end{aligned}
$$
Combining the two tail bounds and a simple calculation completes the proof for type \MyRomannum{1} error.

\textbf{Part 2:}
Observe that under hypothesis $H_1$ we have
\begin{align}
\nonumber {T}_{i j}^k - (p_{ij} - F(w^*_i - & w^*_j))^2   = \frac{\bar{Y}_{i j}^k(\bar{Y}_{i j}^k-1)}{k(k-1)} - p_{ij}^2 + F(\hat{w}_i-\hat{w}_j)^2 - F(w^*_i-w^*_j)^2  \\ 
\nonumber  & \ \ \ \ \ \ \ \ \ \ \ \   + 2 \bigg( F(w^*_i-w^*_j) p_{ij}-  F(\hat{w}_i-\hat{w}_j) \frac{\bar{Y}_{i j}^k}{k} \bigg)  \\
\nonumber & \stackrel{\zeta_{1}}{=} (y_{i j}^{k})^{\T} A^{(k)} y_{i j}^{k} - \1_{k}^{\T} A^{(k)} \1_k p_{ij}^2 + F(\hat{w}_i-\hat{w}_j)^2 - F(w^*_i-w^*_j)^2  \\
& \ \ \ + 2 p_{ij}  (F(w^*_i-w^*_j) - F(\hat{w}_i-\hat{w}_j) )   + 2 \bigg(p_{ij} - \frac{\bar{Y}_{i j}^k}{k} \bigg) F(\hat{w}_i-\hat{w}_j), \label{simplification of tij}
\end{align}
where in $\zeta_{1}$ we have $A^{k} \triangleq \frac{\1_{k} \1_{k}^{\T}-I_k }{k(k-1)}$ and $y_{i j}^{k} \in \mathbb{R}^{k}$ is a vector $y_{i j}^{k}=[Y_{i j}^{1}, \ldots , Y_{i j}^{k}]$ as before. Now, observe that the term 
$$
\begin{aligned}
(y_{i j}^{k})^{\T} A^{(k)} y_{i j}^{k} - \1_{k}^{\T} A^{(k)} & \1_k p_{i j}^{2}  =\left(y_{i j}^{k}-p_{ij}\1_k\right)^{\T} A^{(k)} \left(y_{i j}^{k}-p_{ij}\1_k\right) \\ 
& \  \ \ \ \ \ \ \ \ \ \ \ \ \ \ \ \ \ \ \ \ \ \ \ \  +2 p_{i j}\left(y_{i j}^{k}- p_{ij}\1_{k}\right)^\T A^{(k)}\1_k \\
& = \left(y_{i j}^{k}-p_{ij}\1_k\right)^\T A^{(k)} \left(y_{i j}^{k}-p_{ij}\1_k\right) +2 p_{i j}\left(\frac{\bar{Y}^{k}_{ij}}{k} - p_{ij}\right) .
\end{aligned}
$$
Substituting the above bound in \cref{simplification of tij}, we obtain
$$
\begin{aligned}
 {T}_{i j}^{k_1,k}  - (p_{ij} - F(w^*_i - & w^*_j))^2  =  \underbrace{\left(y_{i j}^{k}-p_{ij}\1_k\right)^\T A^{(k)} \left(y_{i j}^{k}-p_{ij}\1_k\right) }_{I_{i j}^{3, k} } \\ 
& \ \ \ \ \ \ \ \ \ \ \ \ \ \ \ \ +  \underbrace{ 2(p_{i j} - F(\hat{w}_i-\hat{w}_j))\left(\frac{\bar{Y}^{k}_{ij}}{k} - p_{ij}\right) }_{ I_{i j}^{4, k}}\\
& \ \ \ \  + \underbrace{ (F(\hat{w}_i-\hat{w}_j) - F(w^*_i-w^*_j))( F(\hat{w}_i-\hat{w}_j) + F(w^*_i-w^*_j)- 2p_{ij}) }_{I_{i j}^{5, k}}.
\end{aligned}
$$
Now the term $\sum_{(i,j) \in \calE} I_{i j}^{3, k} $ is bounded by utilizing \cref{thm: Time Uniform Hanson-Wright inequality} to obtain the tail bounds for some constant $c$
$$
\mathbb{P}\bigg(\exists k \geq 2: \sum_{(i, j) \in \calE} I_{i j}^{3, k}< - c \frac{\sqrt{|\calE|} }{k} \ell_{k,\nu}    \bigg) \leq  \frac{\nu}{2}.
$$
And the term $\sum_{(i,j) \in \calE} I_{i j}^{4, k} $ is bounded by utilizing adaptive Hoeffding's inequality (\citealp{reversesupermartingale}, Corollary 8) to obtain the tail bounds
$$
\mathbb{P}\bigg(\exists k \geq 2: \sum_{(i, j) \in \calE} I_{i j}^{4, k}< - 4\|P - \Fhat\|_\F \sqrt{ \frac{ \ell_{k,\nu}  }{k}  }      \bigg) \leq  \frac{\nu}{2}. 
$$
An application of the triangle inequality to the above equation yields 
$$
\mathbb{P}\bigg(\exists k \geq 2: \sum_{(i, j) \in \calE} I_{i j}^{4, k}< - 4( \|P - \sF\|_\F +\| \sF -  \Fhat\|_\F  )\sqrt{ \frac{ \ell_{k,\nu}  }{k} }   \bigg) \leq  \frac{\nu}{2}.
$$
Finally, the term $ \sum_{(i,j) \in \calE} I_{i j}^{5, k}$  is bounded using Cauchy-Schwarz inequality as:
$$
\begin{aligned}
 \sum_{(i,j) \in \calE} I_{i j}^{5, k} & = \sum_{(i,j) \in \calE}  (F(\hat{w}_i-\hat{w}_j) - F(w^*_i-w^*_j))^2  \\ 
& \ \ \ \ \ \   + \sum_{(i,j) \in \calE}   2( F(\hat{w}_i-\hat{w}_j) - F(w^*_i-w^*_j)) ( F(w^*_i-w^*_j) - p_{ij})\\
  & \geq  \|\sF - \Fhat\|^2_\F - 2\|\sF - \Fhat \|_\F \|\sF - P\|_\F .
\end{aligned}
$$
Combining the three bounds for $I_{i j}^{3, k}, I_{i j}^{4, k}, I_{i j}^{5, k}$ completes the proof for type \MyRomannum{2} error.
\end{proof}

\begin{lemma}[Time-Uniform Quadratic Bound] \label{thm: Time Uniform Hanson-Wright inequality}
For any element $v$ in a finite set $\calV$, consider a sequence of independent random variables $x_{v}^{(1)},x_{v}^{(2)},x_{v}^{(3)},\dots$ such that $x^{(l)}_{v} \sim \Ber(p_{v})$ for $l \in \N$ and some $p_v \in (0,1)$. Define the random vector \(\bar{x}_v^{(k)}=\left(x_{v}^{(1)},x_{v}^{(2)},\dots, x_{v}^{(k)}\right) \in \R^{k}\) and let \(\{A^{(k)} \in \R^{k \times k} : k \in \N\}\) be a sequence of matrices such that $A^{(k)} = (\1_k \1_k ^\T -  I_k)/(k(k-1))$. Then, there exists a constant $c>0$ such that for all \(\nu \in (0,1/e) \), we have 
$$
\mathbb{P}\left( \exists k \geq 2, \sum_{v \in \calV} (\bar{x}_v^{(k)} - p_v \1_k)^{\T} A^{(k)} (\bar{x}_v^{(k)} - p_v \1_k)   > c \frac{\sqrt{|\calV|}}{k}  \log\big(3.5 \log^2_2(k) /\nu\big) \right) \leq \nu.
$$
\end{lemma}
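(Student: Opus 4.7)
The plan is to (i) rewrite the quadratic form as a mean-zero rescaled U-statistic of a centered bounded sequence and expose its reverse-martingale structure, (ii) obtain a fixed-$k$ sub-Gamma concentration via the Hanson-Wright inequality, and (iii) upgrade this to a time-uniform bound through a geometric-epoch stitching tailored to reverse supermartingales.

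First, set $z_v^{(l)} \triangleq x_v^{(l)} - p_v \in [-1,1]$ and $\bar{z}_v^{(k)} \triangleq \bar{x}_v^{(k)} - p_v\1_k$. Since $(A^{(k)})_{ll} = 0$ and $(A^{(k)})_{lm} = 1/(k(k-1))$ for $l \neq m$, a direct calculation gives
$(\bar{z}_v^{(k)})^\T A^{(k)} \bar{z}_v^{(k)} = \frac{1}{k(k-1)} \sum_{l \neq m} z_v^{(l)} z_v^{(m)}$, which is (twice) the standard U-statistic estimator of $(\E z_v^{(l)})^2 = 0$. Consequently $S_k \triangleq \sum_{v \in \calV} (\bar{z}_v^{(k)})^\T A^{(k)} \bar{z}_v^{(k)}$ satisfies $\E S_k = 0$, and an argument identical to the proof of \cref{prop: Reverse Martingale} (where $\sum_{l \neq m} Y_{ij}^l Y_{ij}^m/(k(k-1))$ is shown to be a reverse martingale) shows that $\{S_k\}_{k \geq 2}$ is a reverse martingale with respect to $\mathcal{F}_k \triangleq \bigotimes_{v \in \calV} \sigma(\bar{x}_v^{(k)}, x_v^{(k+1)}, x_v^{(k+2)}, \dots)$.

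Second, for any fixed $k \geq 2$, write $S_k = z^\T (I_{|\calV|} \otimes A^{(k)}) z$ with $z \in \R^{|\calV| k}$ having independent bounded centered coordinates. A spectral calculation gives $\|A^{(k)}\|_F = 1/\sqrt{k(k-1)}$ and $\|A^{(k)}\|_{\mathrm{op}} = 1/k$ (the nonzero eigenvalues of $\1_k\1_k^\T - I_k$ being $k-1$ and $-1$), so $\|I \otimes A^{(k)}\|_F = \sqrt{|\calV|/(k(k-1))}$ and $\|I \otimes A^{(k)}\|_{\mathrm{op}} = 1/k$. The Hanson-Wright inequality then yields, after inverting and absorbing constants, a sub-Gamma tail of the form
\begin{equation*}
\P\bigl(S_k > C\sqrt{|\calV|}\,\ell/k\bigr) \leq e^{-\ell}
\end{equation*}
for every $\ell \geq 1$ and some universal constant $C > 0$, with variance proxy $|\calV|/(k(k-1))$ and scale proxy $1/k$. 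Third, to convert this fixed-$k$ statement into the desired time-uniform bound without losing the $\sqrt{|\calV|}/k$ rate, I would invoke the stitching/peeler strategy of \cite{TimeuniformCB,reversesupermartingale}. Partition $\{k \geq 2\}$ into geometric epochs $\mathcal{I}_j = [2^j, 2^{j+1})$ for $j \geq 1$. The MGF estimates underlying Hanson-Wright produce, within each epoch, an exponential reverse-supermartingale of the form $\exp(\lambda S_k - \psi(\lambda, k))$ adapted to the reverse filtration $\mathcal{F}_k$; a reverse Ville/Doob maximal inequality then bounds $\sup_{k \in \mathcal{I}_j} S_k$ by a single epoch-level tail. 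A union bound over $j$ with failure budget $\nu/(c j^2)$ (so that $\sum_j 1/j^2$ is summable) yields the composite threshold $\ell_{k,\nu} = \log(3.5 \log_2(k)^2/\nu)$ asserted by the lemma, with the $\log_2^2 k$ factor reflecting precisely this summability penalty.

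The main obstacle is the last step. A naive union bound of the fixed-$k$ Hanson-Wright tail over all $k \geq 2$ would destroy the $\sqrt{|\calV|}/k$ scaling entirely. The resolution is to fully exploit the reverse-martingale structure of $S_k$ through an exponential reverse-supermartingale argument, so that an entire geometric epoch is controlled by one Ville-type bound and the logarithmic overhead accumulates only as $\log \log_2 k$ rather than $\log k$. Care is also needed to check that the sub-Gamma MGF bound used to construct the supermartingale is compatible with the reverse filtration (i.e., that the Hanson-Wright MGF estimate holds uniformly as $k$ varies within an epoch after conditioning on the reverse $\sigma$-algebra); this is where the independence-across-$v$ assumption and the specific structure of $A^{(k)}$ are essential.
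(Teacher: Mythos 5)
Your roadmap matches the paper's proof: identify the reverse-martingale structure of $S_k = \sum_{v} s_v^{(k)}$, get a fixed-$k$ sub-Gamma tail from the Hanson-Wright inequality with $\|A_\calV^{(k)}\|_\F^2 = |\calV|/(k(k-1))$ and $\|A_\calV^{(k)}\|_2 = 1/k$, and upgrade to a time-uniform bound by Ville's inequality plus geometric stitching with budget $\nu/h(j)$ for a summable $h$. The one place where you and the paper diverge is precisely the step you flag as delicate. You propose building a compensated exponential reverse-supermartingale $\exp(\lambda S_k - \psi(\lambda,k))$ and worry that the Hanson-Wright MGF estimate must hold \emph{conditionally} on the reverse $\sigma$-algebra, uniformly across $k$ within an epoch. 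The paper sidesteps this entirely: since $S_k$ is a reverse martingale and $x \mapsto e^{\lambda x}$ is convex, $\exp(\lambda S_k)$ is already a nonnegative reverse \emph{sub}martingale — no compensator needed. Technically the paper first passes to the richer exchangeable reverse filtration $\tilde{\mathcal{F}}_k^v$ so as to invoke (\citealp{reversesupermartingale}, Theorem 4) for the convex-image reverse-submartingale property, and then applies Ville's inequality for nonnegative reverse submartingales (\citealp{reversesupermartingale}, Theorem 2), which gives the epoch-level maximal bound
\begin{equation*}
\P\left(\exists k \geq k_0 : S_k \geq u\right) \leq e^{-\lambda u}\,\E\left[\exp\left(\lambda S_{k_0}\right)\right].
\end{equation*}
The Hanson-Wright MGF bound is therefore used exactly once per epoch, at the fixed deterministic index $k_0 = 2^l$, unconditionally — so the compatibility concern you raised never materializes. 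Adopting this reverse-submartingale-plus-Ville structure in place of the compensated supermartingale would close the gap in your sketch; otherwise the argument is the same.
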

\begin{proof}
Denote $s_v^{(k)} = (\bar{x}_v^{(k)} - p_v \1_k)^\T A^{(k)} (\bar{x}_v^{(k)} - p_v \1_k)$. Recall that we had shown that $\{s_v^{(k)} : k \in \N \setminus \{1\} \}$ forms a reverse-martingale with respect to canonical reverse filtration $\{\sigma\big(\sum_{m=1}^{k} x_v^{(m)}, x_v^{(k+1)},x_v^{(k+2)}, \dots\big) : k \in \N \setminus \{1\} \}$, i.e., for  $k \geq 2$, we have
$$
  \ \E\big[(\bar{x}_v^{(k)} - p_v \1_k)^\T A^{(k)} (\bar{x}_v^{(k)} - p_v \1_k)|\mathcal{F}_{k+1}\big] = (\bar{x}_v^{(k+1)} - p_v \1_{k+1})^\T A^{(k+1)} (\bar{x}_v^{(k+1)} - p_v \1_{k+1}).
$$ 
This fact follows from an expansion of the corresponding terms (similar to \Cref{important martingale term}), and then the proof follows as in \Cref{Proof of Reverse Martingale}. Now, for any $v \in \calV$, we define a richer class of filtration known as exchangeable filtration $\{\tilde{\mathcal{F}}_k^v: k \in \N\setminus \{1\}\}$ \cite{durrett2019probability}, which denotes the $\sigma$-algebra generated by all real-valued Borel-measurable functions of $x_{v}^{(1)},x_{v}^{(2)},x_{v}^{(3)},\dots$ which are permutation-symmetric in the first $k$ arguments. It follows directly that $s_v^{(k)}$ is also a reverse-martingale with respect to $\tilde{\mathcal{F}}_k^v$. Therefore, by (\citealp{reversesupermartingale}, Theorem 4), we have that the mapping $x \to \exp(\lambda x)$ for $\lambda \in {(0,\infty)}$, when applied to $s_v^{(k)}$, yields a reverse-submartingale with respect to filtration $\tilde{\mathcal{F}}_k^v$. Define the product $\sigma$-algebra $\tilde{\mathcal{F}}_k = \bigotimes_{v \in \calV} \tilde{\mathcal{F}}_k^v$. Now, for any $k_0 \geq 2$ and $k_0 \in \N$, we have that
\begin{align*}
 \mathbb{P}\left(\exists k \geq k_0: \sum_{v\in \calV} s_v^{(k)} \geq u\right) & =  \mathbb{P}\left(\exists k \geq k_0: e^{\lambda \sum_{v\in \calV} s_v^{(k)} } \geq e^{\lambda u}\right) \\ 
 & \leq  \frac{ \mathbb{E}\left[ \exp{ \big( \lambda \sum_{v\in \calV} s_v^{(k_0)} \big) } \right] }{e^{\lambda u} } .
\end{align*}
where the last step follows from Ville's inequality for nonnegative reverse submartingales (\citealp{reversesupermartingale}, Theorem 2). Also note that $s_v^{(k)} \leq 1$ for all $k$ with probability $1$, therefore $\mathbb{E}[ e^{\lambda  s_v^{(k)} }]$ always exists. Now note that
$$ 
\begin{aligned}
\sum_{v \in \calV} (\bar{x}_v^{(k)} - p_v \1_k)^\T A^{(k)} (\bar{x}_v^{(k)} -p_v\1_k)&=\sum_{v \in \calV} \sum_{i,j \in [k]: i\neq j } \frac{ (x_v^{(i)} - p_v ) ({x}_v^{(j)} - p_v ) }{k(k-1)} \\
& = (\bar{x}_\calV^{(k)})^\T A_\calV^{(k)} (\bar{x}_\calV^{(k)}),
\end{aligned}
 $$ 
 where $(\bar{x}_\calV^{(k)})^\T = [( \bar{x}_{v_1}^{(k)} -\1_k p_{v_1})^\T, \dots, (\bar{x}_{v_{|\calV|} }^{(k)} - \1_k p_{v_{|\calV|}})^\T ]$ is a vector in $\R^{k|\calV|}$ formed by concatenating the vectors $\bar{x}_{v_i}^{(k)}$ for all $v_i \in \calV$ and $i \in [|\calV|]$. And  $A_\calV^{(k)} \in \R^{|\calV|k \times |\calV|k }$ formed by stacking the matrices $A_v^{(k)}$ as a diagonal block structure. Now by (\citealp{rudelson2013hanson}, Theorem 1.1), we have that there exists constants $c, c^\prime$ such that we have
 $$
\mathbb{E}\left[ e^{\lambda \sum_{v\in \calV} s_v^{(k_0)} } \right] \leq \exp( - c \lambda^2 \|A_{\calV}^{(k_0)}\|_{\F}^2) \text{ for } \lambda \leq c^\prime/\|A_{\calV}^{(k_0)}\|_2 ,
 $$
 where $\|\cdot\|_2$ denotes the spectral norm. Therefore, we have that 
 $$
 \mathbb{P}\left(\exists k \geq k_0: \sum_{v\in \calV} s_v^{(k)} \geq u\right)  \leq \exp( -\lambda u + c \lambda^2 \|A_{\calV}^{(k_0)}\|_{\F}^2 )  \text{ for } \lambda \leq c^\prime/\|A_{\calV}^{(k_0)}\|_2 .
 $$
 Now, by optimizing over $\lambda$ and for some constant $\bar{c}$, we can conclude that  
\begin{equation} \label{usual HansonWright}
 \mathbb{P}\left(\exists k \geq k_0: \sum_{v\in \calV} s_v^{(k)} \geq u\right)  \leq \exp\left( - \bar{c} \min\left\{ \frac{u^2}{\|A_{\calV}^{(k_0)}\|_{\F}^2 }, \frac{u}{\|A_{\calV}^{(k_0)}\|_{2} }   \right\} \right)  .
\end{equation}
 By the block structure of $A_{\calV}^{(k_0)}$, we have the following
 $$ 
 \|A_{\calV}^{(k_0)}\|_{\F}^2 = \frac{|\calV|}{k_0(k_0 -1)}, \ \text{ and } \  \|A_{\calV}^{(k_0)}\|_{2} = \max_{v \in \calV} \| A_{v}^{(k_0)}\|_2 =  \frac{1}{k_0}.
 $$
 Substituting, the above values in \cref{usual HansonWright} we obtain that for all $\nu \in (0, 1/e)$, we have the following result for some constant $\tilde{c}$:
 $$
\mathbb{P}\left(\exists k \geq k_0: \sum_{v\in \calV} s_v^{(k)} \geq  \tilde{c} \frac{\sqrt{|\calV|} \log (1/\nu ) }{ k_0} \right) \leq \nu 
 $$
 The rest of the proof follows by a stitching argument \cite{NeuripsAdaptiveConcentration} and is provided below for completeness. For any $\nu \in (0,1/e)$, define a function $h (k ) = \frac{(\pi k)^2}{6}$. Now, observe that 
$$
\begin{aligned}
\mathbb{P}\bigg(\exists k \geq 2: \sum_{v \in \calV} s_v^{(k)}  \geq  \tilde{c} \frac{\sqrt{|\calV|} }{k} \log & \big(\frac{ h(\log_2 k)}{\nu} \big)  \bigg)  \\ 
& \leq  \sum_{l=1}^\infty  \mathbb{P}\left( \exists k \geq 2^l: \sum_{v \in \calV} s_v^{(k)} \geq \tilde{c} \frac{\sqrt{|\calV|} }{ 2^l} \log \big( \frac{h(l)}{\nu} \big) \right )\\
& \leq \sum_{l=1}^{\infty} \frac{\nu}{h(l)} = \nu.
\end{aligned}
$$
Finally, the statement in the lemma follows by a simple calculation. 
\end{proof}

\subsection{Proof of \cref{thm: Type1 upper bound}} \label{Proof of thm: Type1 upper bound}
Fix $t \geq 1$, and recall that from \cref{lem: pth moment bound}, for some constant $c_0$ we have that with probability at most $e^{-t}$, $\|\sF -\Fhat\|_\F \geq \sqrt{c_0 t n / k_1 }$. This implies that with probability at least $1 - e^{-t}$, we have
$$
\begin{aligned}
 \|\sF -\Fhat\|^2_\F  +   c\frac{ \sqrt{|\calE|}}{k} \ell_{k,\nu}  +4  \frac{\|\sF - \Fhat\|_\F}{\sqrt{k}} \sqrt{ \ell_{k,\nu} } & \leq \frac{c_0 t n }{ k_1  }  + c \frac{|\calE|^{\frac{1}{2}}}{k} \ell_{k,\nu}   + 4 \sqrt{ \frac{c_0 t n  \ell_{k,\nu} }{k k_1 }} .
\end{aligned}
$$
Therefore, by using a basic union-bound argument to the bound under null hypothesis in \cref{lem: Conditional bounds on type 1 and type 2 errors}, we have 
$$
\begin{aligned}
\P_{H_0} \Bigg(\exists k \geq 2, T^{k_1, k} \geq  \frac{c_0 t n }{ k_1  }  + c \frac{|\calE|^{\frac{1}{2}}}{k} \ell_{k,\nu}  + 4 \sqrt{ \frac{c_0 t n  \ell_{k,\nu} }{k k_1 }}  \Bigg)  \leq  \nu \!+\! e^{-t} .
\end{aligned}
$$
The bound on type \MyRomannum{2} error follows by a similar argument. First, using basic algebra, and using $D$ to denote $\|P - \sF\|_\F$, we obtain from the bound under $H_1$ in  \cref{lem: Conditional bounds on type 1 and type 2 errors} that:
$$
\begin{aligned}
\P_{H_1}\bigg(\exists k \geq 2, T^{k_1, k}  \geq (D -  \|\sF - \Fhat\|_\F)^2  - c & \frac{|\calE|^{\frac{1}{2} } }{k}  \ell_{k,\nu} - 4 \frac{(D + \| \sF - \Fhat\|_\F)}{\sqrt{k}} \sqrt{ \ell_{k,\nu}  }     \bigg) \leq \nu .
\end{aligned}
$$
Now utilizing the fact that the $D \geq \sqrt{c_0 t n / k_1 }$ and the same union bound technique as above, we obtain that 
$$
\begin{aligned}
  \P_{H_1} \Bigg(\exists k \geq 2 , \, T^{k_1, k} - \bigg(D  -   \sqrt{ \frac{c_0 t n }{ k_1 } }\bigg)^2   & \leq - \frac{ c {|\calE|}^{\frac{1}{2}} \ell_{k,\nu} }{k} \\
\ \ \ \ \ \ \ \ \ \ \ \ \ \ \ \ \ \  & - 4\bigg(\frac{D}{\sqrt{k}}  +  \sqrt{\frac{c_0 t n  }{ k_1 k } } \bigg) \sqrt{\ell_{k,\nu}}  \Bigg) \leq \nu + e^{-t}.
\end{aligned}
$$

Since the above bounds for any pairwise comparison matrix $P$ satisfying \cref{pijbounded} and since $n\epsilon = \Theta(D)$ by \cref{prop: Distance to closest GTM model}, hence we can taking supremum with respect to pairwise comparison $P$ in class $\calM_0, \calM_1(\sqrt{\tilde{c}_0 t  /( n k_1  )} )$ on type \MyRomannum{1} and type \MyRomannum{2} error bounds respectively, and thus completing the proof of the theorem. \hfill \qedsymbol

\section{Simplified Expressions for Type \MyRomannum{1} and \MyRomannum{2} Error Probability Bounds} \label{corollary appendix}

We obtain the following corollaries by plugging in the parameter values in \cref{thm: Type1 upper bound} for a complete graph and a single cycle on $n$ nodes.
\begin{corollary}[Type \MyRomannum{1} and \MyRomannum{2} Error Probability Bounds for Complete Graph] \label{cor: Upper Bound for complete graph}
 For the setting in \cref{thm: Type1 upper bound} assume that we have a complete graph on $n$ nodes, then there exists (different) constants $c_1, c_2,c_3,c_4, c_5>0$ such that for $\epsilon \geq c_3/\sqrt{nk_1}$, such that for all $\nu \in (0,1/e)$ and $t\geq 1$, we have
$$
\begin{aligned}
& \mathbb{P}_{H_{0}}\left(\exists k \geq 2, T^{k_{1}, k} \geq \frac{c_1 t n}{k_{1}}+ \frac{c_{2} n \ell_{k, \nu} }{k}  + c_3\sqrt{\frac{nt\ell_{k, \nu}  }{ k k_{1}}}\right) \leq \nu + e^{-t},\\
& \mathbb{P}_{H_{1}}\left(\exists k \geq 2, T^{k_{1}, k} \geq\left(D- c_4 \sqrt{\frac{ t n}{k_{1}}}\right)^{2}  - \frac{c_{2} n \ell_{k, \nu} }{k} - \left(4\frac{D}{\sqrt{k}} + c_5\sqrt{\frac{t n}{k k_{1}}}\right) \sqrt{\ell_{k, \nu}  } \right) \leq \nu + e^{-t}.
\end{aligned}
$$
\end{corollary}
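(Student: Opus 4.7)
\textbf{Proof proposal for Corollary \ref{cor: Upper Bound for complete graph}.} The plan is to apply Theorem \ref{thm: Type1 upper bound} directly and simplify the graph-dependent quantity $|\calE|^{1/2}$ for the complete graph case. Recall that in our setup $\calE$ consists of the directed edges of the observation graph, and the symmetry assumption means that for a complete graph on $n$ vertices we have $|\calE| = n(n-1)$. Hence $|\calE|^{1/2} = \sqrt{n(n-1)} \leq n$, which is the only place where the graph topology enters the theorem's bounds.

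First, I would set the constants: invoke Theorem \ref{thm: Type1 upper bound} to obtain constants $c_5, c_6, c_7, c_8 > 0$ such that the stated bounds hold for all $t \geq 1$, $\nu \in (0, 1/e)$, $k_1 \in \N$, and $\epsilon \geq c_5 \sqrt{t/(n k_1)}$. The lower bound requirement $\epsilon \geq c_3/\sqrt{n k_1}$ in the corollary is then immediate (absorbing $\sqrt{t} \geq 1$ into the constant, or alternatively keeping a $\sqrt{t}$ factor in the hypothesis). The main substitution to make is replacing the term $c_7 |\calE|^{1/2} \ell_{k,\nu}/k$ with $c_7 n \ell_{k,\nu}/k$, which is valid because $|\calE|^{1/2} \leq n$ and the bound only becomes larger (hence the tail probability only smaller).

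Next, I would handle the Type II bound analogously: the only $|\calE|$-dependent term there is again $c_7 |\calE|^{1/2} \ell_{k,\nu}/k$, which is replaced by the upper bound $c_7 n \ell_{k,\nu}/k$; note that since this term appears with a negative sign inside the probability statement (it is subtracted from $T^{k_1, k}$), enlarging it makes the event being bounded \emph{larger}, which is the wrong direction. To address this, I would instead use the matching two-sided bound $|\calE|^{1/2} \leq n$ together with the fact that the original event on the left-hand side implies the event with $n$ in place of $|\calE|^{1/2}$ only when the replacement is in the correct direction. More carefully, since $|\calE|^{1/2} \leq n$, the threshold inside $\P_{H_1}$ with $n$ in place of $|\calE|^{1/2}$ is smaller (more negative), so the event is actually \emph{smaller}, hence the probability is bounded by the original $\nu + e^{-t}$. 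Finally I would consolidate constants by relabeling ($c_6, c_7, c_8$ become $c_1, c_2, c_3, c_4, c_5$) and verify that no spurious $n$-dependence has been introduced elsewhere.

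The ``hard part'' here is essentially bookkeeping rather than a new analytic step: one must be careful about the sign direction when replacing $|\calE|^{1/2}$ with its upper bound $n$, especially in the Type II statement where the replacement appears on the side of the inequality that controls the event. Once this is handled cleanly, the corollary follows with no further estimates, since the remaining terms in Theorem \ref{thm: Type1 upper bound} (those involving $tn/k_1$ and $\sqrt{nt\ell_{k,\nu}/(k_1 k)}$) carry over verbatim and only require relabeling of constants.
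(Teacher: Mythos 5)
Your proposal is correct and matches the paper's own (implicit) proof: the paper simply states that the corollary is obtained ``by plugging in the parameter values'' from \cref{thm: Type1 upper bound}, i.e., substituting $|\calE| = n(n-1)$ for the complete graph so that $|\calE|^{1/2} \le n$, exactly as you do. Your initial remark that enlarging $|\calE|^{1/2}$ to $n$ in the type II bound ``is the wrong direction'' is a momentary slip (the subtracted term becoming larger makes the threshold more negative, hence the event smaller), but you immediately self-correct and arrive at the right containment; the relabeling of constants and the observation that the $\sqrt{t}$ factor in the $\epsilon$-condition should either be retained or absorbed are also handled appropriately.
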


\begin{corollary}[Type \MyRomannum{1} and \MyRomannum{2} Error Probability Bounds for Single Cycle Graph] \label{cor: Upper Bound for single cycle}
 For the setting in \cref{thm: Type1 upper bound}, assume that we have a single cycle graph on $n$ nodes, then there exists (different) constants $c_1, c_2,c_3,c_4, c_5>0$ such that for $\epsilon \geq c_3/ \sqrt{n k_1}$, such that for all $\nu \in (0,1/e)$ and $t\geq 1$, we have
$$
\begin{aligned}
& \mathbb{P}_{H_{0}}\bigg(\exists k \geq 2, T^{k_{1}, k} \geq \frac{c_1 t n }{k_{1}}+ \frac{c_{2} \sqrt{n} \ell_{k, \nu} }{k}  + c_3 \sqrt{\frac{n  t\ell_{k, \nu}  }{ k k_{1}}}\bigg) \leq \nu + e^{-t}, \\
& \mathbb{P}_{H_{1}}\bigg(\exists k \geq 2, T^{k_{1}, k} \geq\bigg(D- c_4 \sqrt{\frac{ t n  }{k_{1}}}\bigg)^{2} \! - \frac{c_{2} \sqrt{n} \ell_{k, \nu} }{k}\! - \bigg(4\frac{D}{\sqrt{k}} + c_5\sqrt{\frac{t n }{k k_{1}}}\bigg) \sqrt{\ell_{k, \nu}  } \bigg) \leq \nu + e^{-t}.
\end{aligned}
$$
\end{corollary}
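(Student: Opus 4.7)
\textbf{Proof proposal for \cref{cor: Upper Bound for complete graph,cor: Upper Bound for single cycle}.}
The plan is to simply specialize the general time-uniform bounds in \cref{thm: Type1 upper bound} by substituting the cardinality $|\calE|$ for each graph topology and absorbing numerical factors into fresh constants. Recall that the paper treats $\calE$ as the symmetric edge set (each unordered edge contributes two ordered pairs), so for the complete graph on $n$ nodes we have $|\calE| = n(n-1)$ and hence $|\calE|^{1/2} = \Theta(n)$, while for the single cycle on $n$ nodes we have $|\calE| = 2n$ and hence $|\calE|^{1/2} = \Theta(\sqrt{n})$.

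First, I would write down the bounds of \cref{thm: Type1 upper bound} verbatim, then replace the factor $|\calE|^{1/2}$ by the value above for the given topology. The condition $\epsilon \geq c_5 \sqrt{t}/\sqrt{nk_1}$ in \cref{thm: Type1 upper bound} is preserved by simply renaming $c_5$ to $c_3$. Each of the four constants $c_6, c_7, c_8$ appearing in the theorem is independent of $n$ and $k$, so after substitution we can collect them (together with the constant $2$ arising from $|\calE| = 2n$ in the cycle case and with the $n(n-1) \le n^2$ bound in the complete graph case) into new absolute constants $c_1,\ldots,c_5$. No other change is needed: the $t n / k_1$ term, the $\sqrt{n t \ell_{k,\nu}/(k_1 k)}$ term, and the $D$-dependent terms are already in the claimed form.

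Step-by-step, the proof would therefore consist of: (i) invoke \cref{thm: Type1 upper bound} with the given graph $\G$; (ii) compute $|\calE|$ for the specific topology; (iii) substitute and rename constants in both the type I and type II bounds; and (iv) verify that the renamed bounds agree termwise with the corollary statements. There is no genuine obstacle here—this is a mechanical specialization of the master theorem. The only thing worth being careful about is the convention on $|\calE|$ (ordered vs.\ unordered edges); under the paper's convention both graphs yield $|\calE|^{1/2}$ matching the $n$ and $\sqrt{n}$ scalings appearing in the two corollaries, which confirms internal consistency.
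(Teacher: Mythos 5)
Your proposal is correct and matches the paper's own approach: the paper states the corollaries are obtained simply ``by plugging in the parameter values in \cref{thm: Type1 upper bound}'' for the given topology, and the single-cycle case indeed reduces to substituting $|\calE| = 2n$ (so $|\calE|^{1/2} = \Theta(\sqrt{n})$) and renaming constants. Your accompanying check that the $D$-dependent and $tn/k_1$ terms already match termwise is the only verification needed.
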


\section{Experimental Details and Additional Experiments} \label{Exp Appendix}
In this appendix, we begin by providing additional details of the experiments corresponding to \cref{sec: Experiments} in \cref{sec:Additional Details for Experiments}. Furthermore, we perform several additional experiments in \cref{sec: Additional Experiments} to support and extend our main findings. These include: (i) performance comparisons with the spectral method of \cite{MakurSinghBTL} for the BTL model (see \cref{sec: Comparisons to Baseline Methods}), (ii) analysis of type \MyRomannum{1} and \MyRomannum{2} errors for different thresholding strategies (see \cref{sec:Comparison of Empirical and Analytic Thresholds}) and empirical validation of our threshold estimation approach under different settings (e.g., clustered versus randomly sampled skill scores), and (iii) experiments on real-world datasets (see \cref{sec:Additional Experiments on Real‐World Dataset}). Finally, we illustrate a method of computing confidence intervals in \cref{sec:Confidence Intervals Under Null Hypothesis} based on the expression in \Cref{prop: Confidence Intervals}.

\subsection{Additional Details for Experiments} \label{sec:Additional Details for Experiments}
Below we provide additional details on the experimental setup and methodology for the experiments in  \Cref{sec: Experiments}.

\textbf{Error bars and estimation of $\what$:} To estimate the 95th quantile of the test statistics $T$, we used $400$ samples. The error bars were based on the two-sided distribution-free conservative estimates presented in \cite{CI}. Specifically, for the 95th quantile, the upper 96\% confidence interval was computed as the 97th quantile of the computed tests $T$, and the lower confidence interval was computed as the 92.5\% quantile. Moreover, in all of our experiments, the estimation of $\hat{w}$ was performed using a standard gradient descent algorithm with a learning rate of $0.01$ and for a maximum of $3000$ iterations, or until the norm of the gradient was less than $10^{-5}$. 

\textbf{Testing on LMSYS dataset:} In our experiment on the LMSYS dataset, we used a maximum of $200$ samples per pair and discarded pairs with fewer than $30$ observed comparisons to reduce the imbalance in the data across pairs. The observation graph was a complete graph except for the larger values of $n$, which had a few edges missing. 

\textbf{Computational overhead of empirical quantile approach:} While computing the threshold using the empirical quantile approach can be expensive due to repeated simulations, we remark that the procedure is highly parallelizable and is fast even with a na\"{i}ve implementation (without any parallelization); indeed, each of our experiments can be done within $5$ minutes on a normal CPU for $n$ as large as $60$. Additionally, to speed up computing $\hat{w}$ on the simulated dataset, one can initialize the iterates at the optimal values that generated the simulated dataset and then apply a few iterations of gradient descent. Moreover, when all $k_{ij}$ are equal to $k$ (or roughly the same), our simulation results in \cref{fig:download} suggest that $0.75n/k$ and $0.4n/k$ are good approximations of the thresholds for complete graphs and 2D grids, respectively. On the other hand, \cref{sec:Confidence Intervals Under Null Hypothesis} provides a faster asymptotic alternative based on \cref{prop: Confidence Intervals}.

\subsection{Additional Experiments} \label{sec: Additional Experiments}
\subsubsection{Comparisons to Baseline Methods} \label{sec: Comparisons to Baseline Methods}
We now conduct experiments to compare the performance of our test with the test proposed in \cite{MakurSinghBTL} by fixing the choice function $F$ to be the logistic function (i.e., BTL model). We assume that the observation graph is complete and define the pairwise comparison matrix $P$ under the null and alternative hypotheses as the one used to prove the lower bound in \cref{pertubation characteristics} and \cref{constraintsbij}. We set $\eta = 0.06$ and $k_{ij} = 10$ for all $i \neq j$. We evaluate the empirical type \MyRomannum{1} and \MyRomannum{2} error probabilities, averaged over $400$ trials, for three testing methods: our proposed likelihood‐based statistic with sample splitting (Max. Likelihood), the same statistic without any sample splitting (Max. Likelihood2), and the spectral approach from \cite{MakurSinghBTL} (Spectral). The threshold for all three methods is determined using the empirical quantile approach as detailed in \cref{sec: Experiments}. Our results in \cref{Rebuttal Fig1} indicate that the proposed method performs comparably to the spectral approach, while Max. Likelihood2 achieves even lower type \MyRomannum{1} and \MyRomannum{2} sum error rates. Various parameters, such as the number of simulated datasets and construction of error bars, are the same as those used in \cref{sec: Experiments}.  

\subsubsection{Comparison of Empirical and Analytical Thresholds} \label{sec:Comparison of Empirical and Analytic Thresholds}
To evaluate the reliability of our empirical-quantile-based threshold, we compare the empirical-quantile-based threshold with the ``optimal'' theoretical threshold in a well-crafted setting. Specifically, we consider the same setting as in \cref{sec: Comparisons to Baseline Methods} and set the theoretical threshold as $\eta^2 n(n-1)/2$ across a range of values of $n$. Specifically, we vary $\eta \in [0.08, 0.16]$ and $n \in \{15, 25, 35, 45\}$, while fixing $k_{ij} = 10$ for all $i \neq j$. For each configuration, we compute type \MyRomannum{1} and \MyRomannum{2} error probabilities using the empirical-quantile-based threshold (cf. \cref{sec: Experiments}) and the theoretical thresholds averaged over $400$ trials. As shown in \cref{Rebuttal Fig2}, the empirical threshold achieves type \MyRomannum{1} error rates that closely track the nominal $0.05$ level and achieves nearly identical type \MyRomannum{1} and \MyRomannum{2} sum-error rates as that of the analytical threshold. This suggests that the empirical threshold, despite being derived from random samples of skill scores, provides performance comparable to that of the theoretically optimal threshold.

\begin{figure}[H]
  \centering
  \begin{minipage}{0.48\textwidth}
    \centering
    \includegraphics[width=\textwidth]{./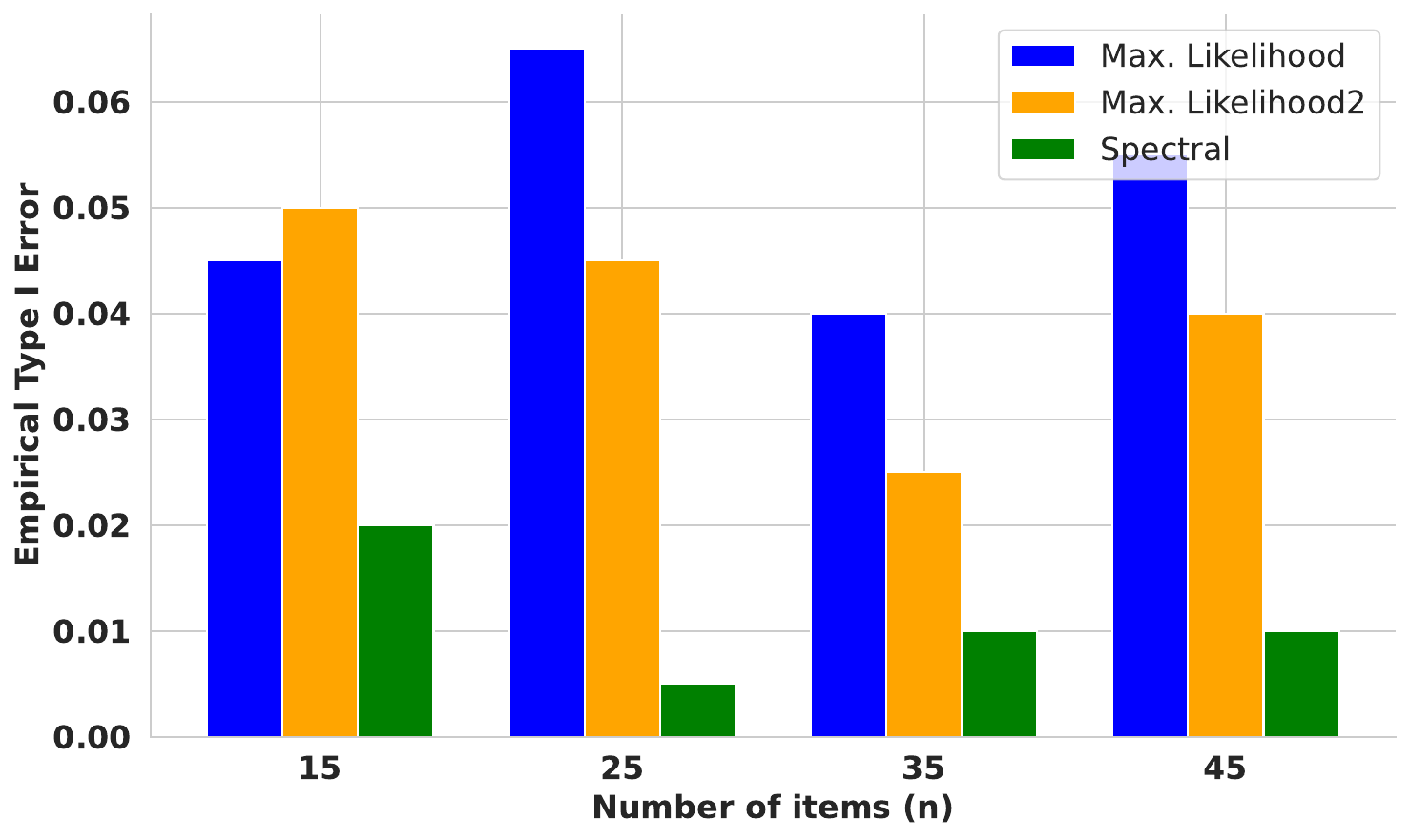}
  \end{minipage}
  \hfill
  \begin{minipage}{0.48\textwidth}
    \centering
    \includegraphics[width=\textwidth]{./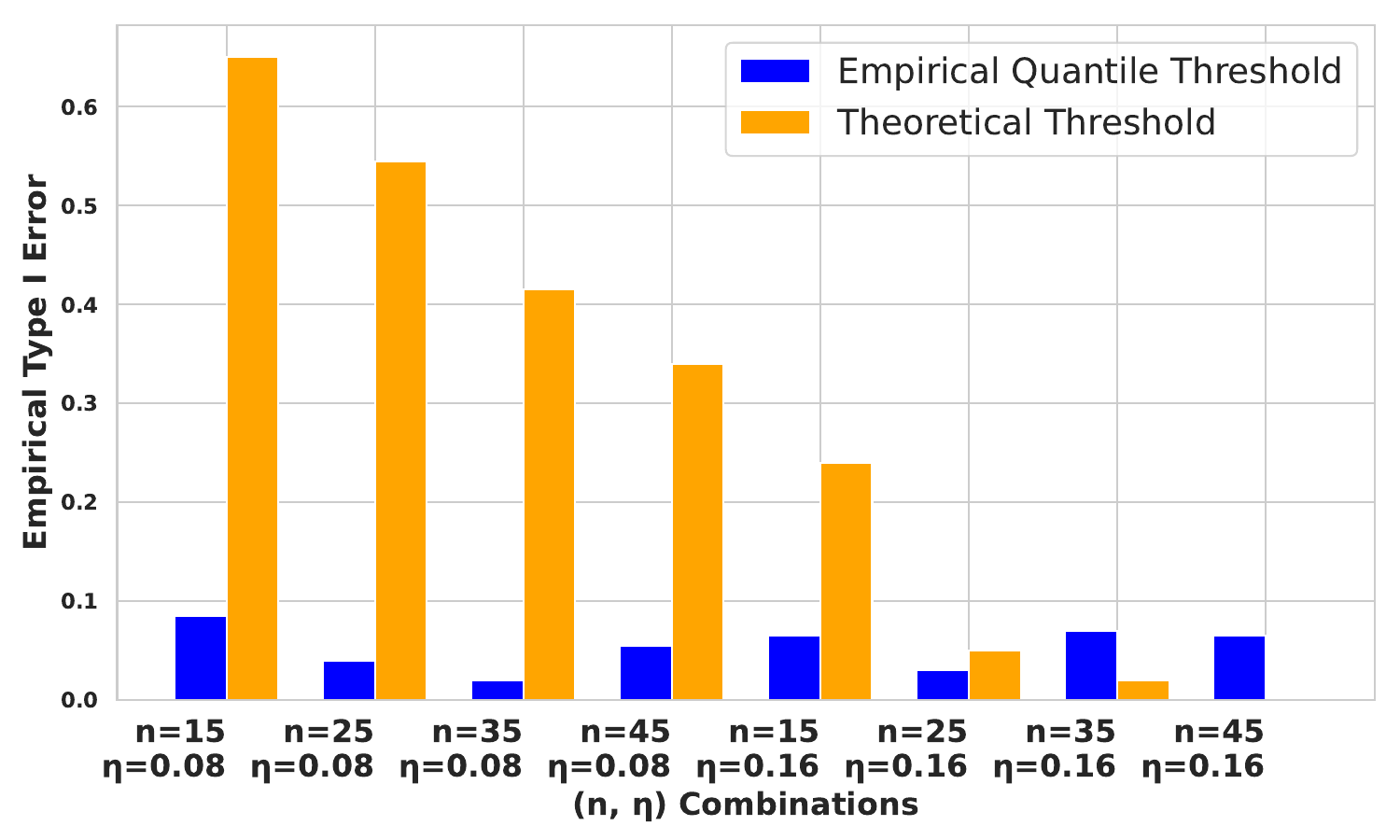}
  \end{minipage}
  \vspace{2mm}
   
  \begin{minipage}{0.48\textwidth}
    \centering
    \includegraphics[width=\textwidth]{./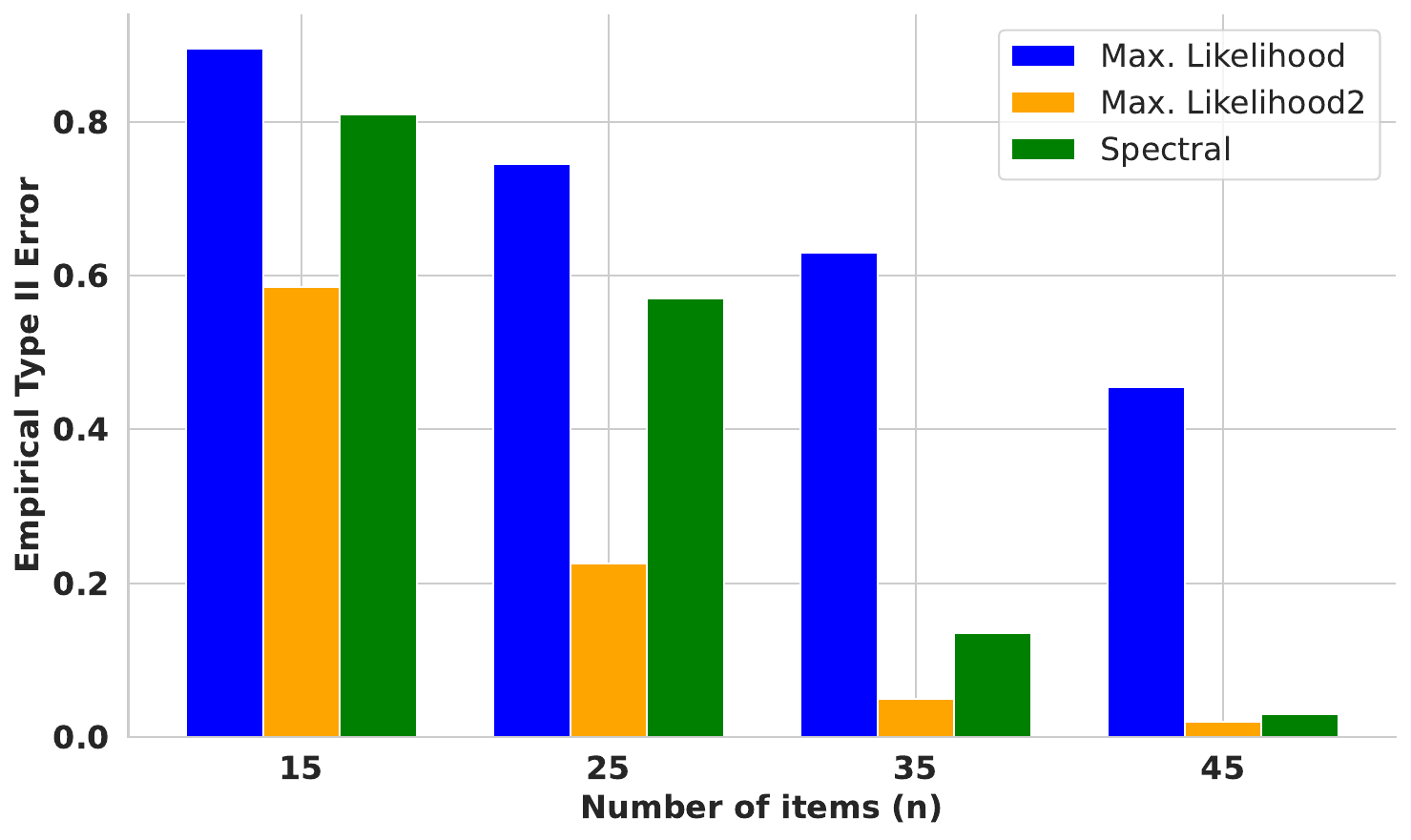}
  \end{minipage}
  \hfill
  \begin{minipage}{0.48\textwidth}
    \centering
    \includegraphics[width=\textwidth]{./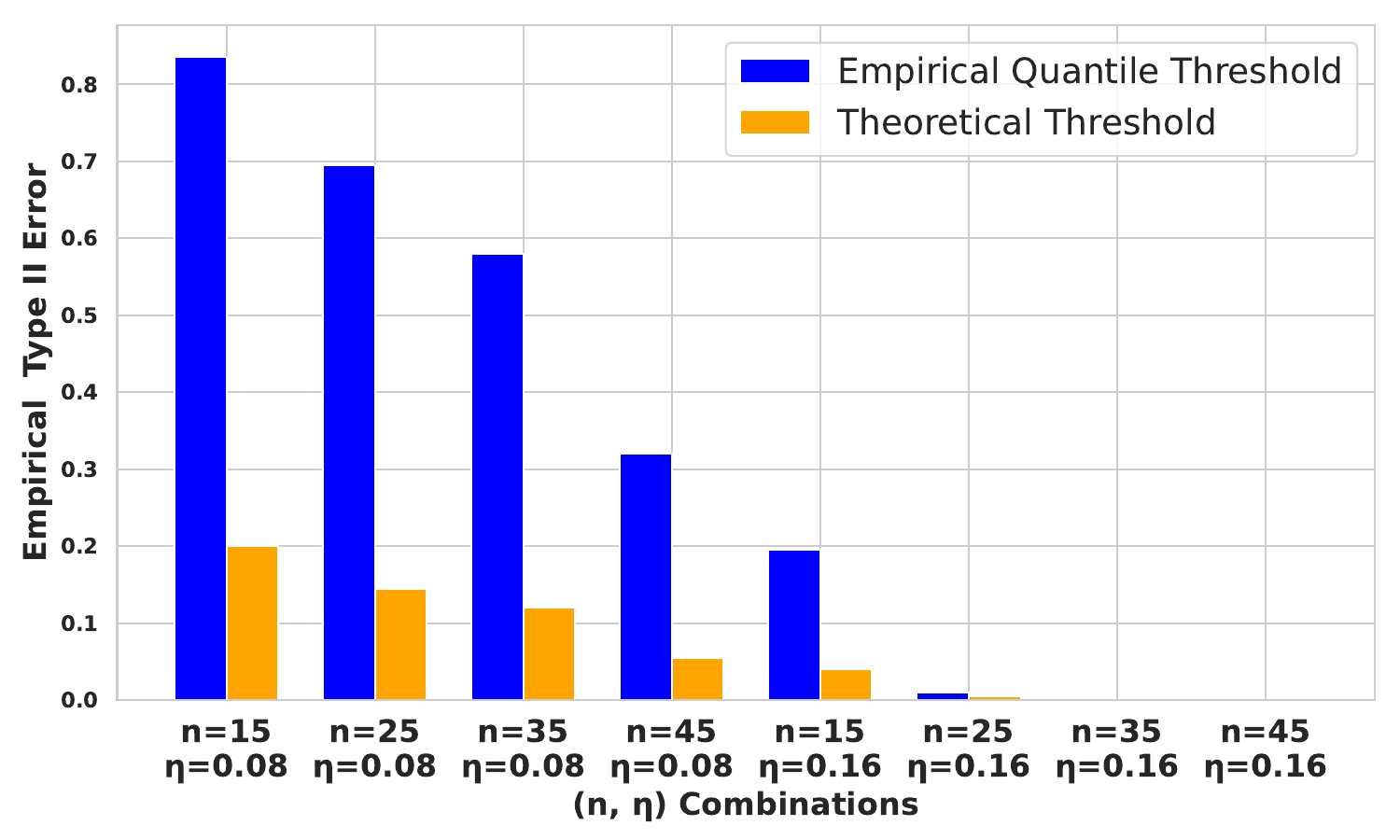}
  \end{minipage}
  \vspace{2mm}
  
  \begin{minipage}{0.48\textwidth}
    \centering
    \includegraphics[width=\textwidth]{./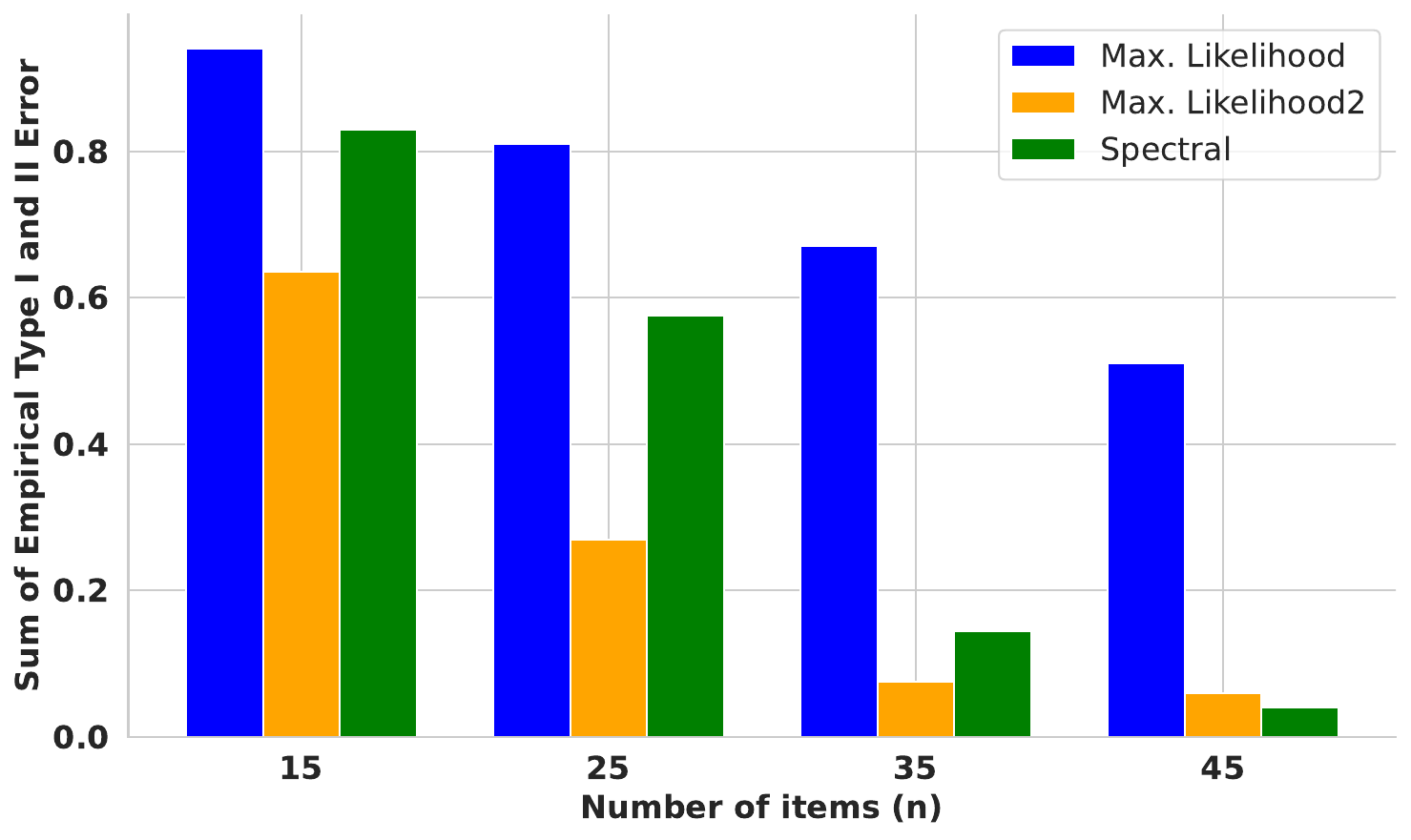}
    \caption{Performance comparison of spectral method in \cite{MakurSinghBTL,MakurSingh2024BTLJournal} and our proposed method with (Max. Likelihood) and without partitioning (Max. Likelihood2) of dataset.}
    \label{Rebuttal Fig1}
  \end{minipage}
  \hfill
  \begin{minipage}{0.48\textwidth}
    \centering
    \includegraphics[width=\textwidth]{./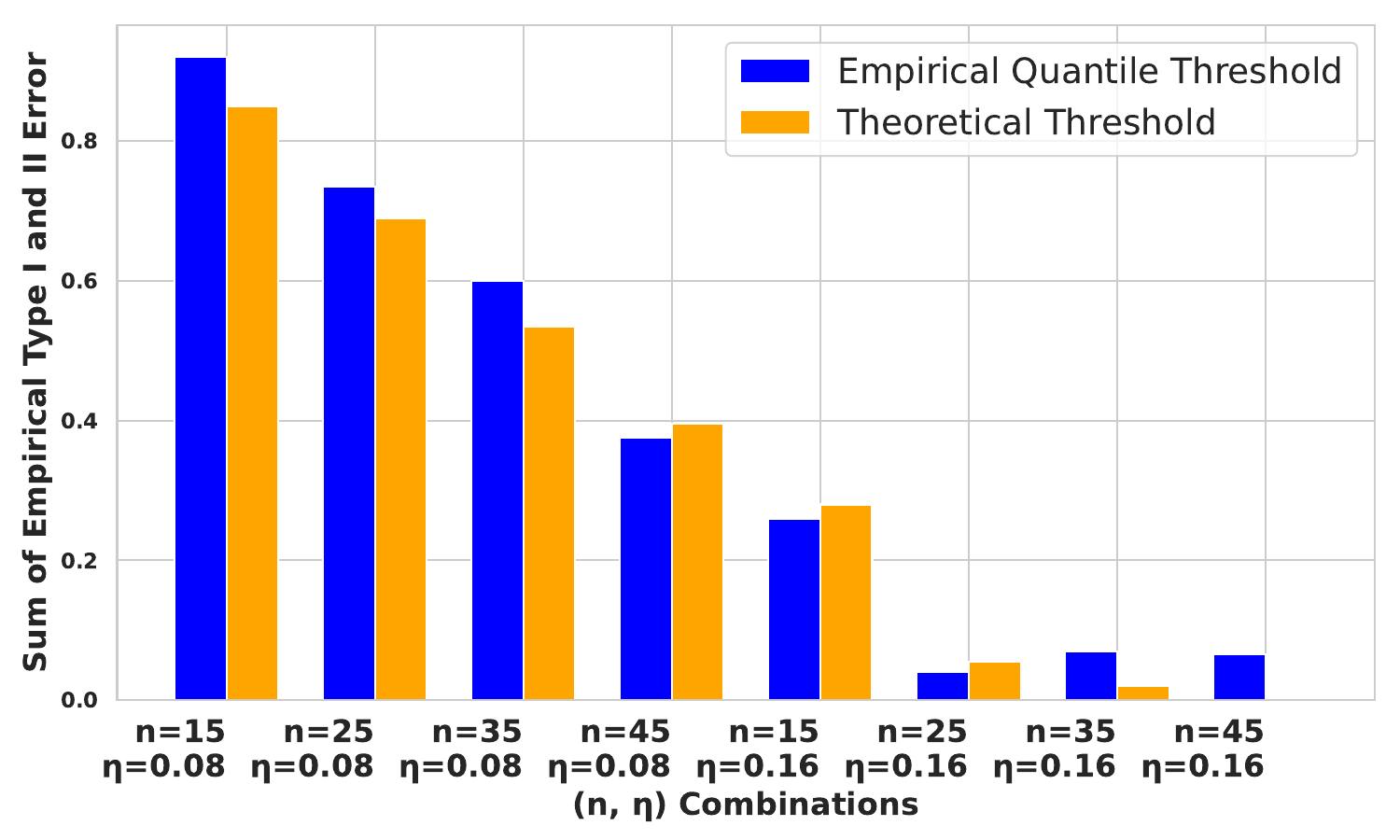}
    \caption{Empirical type \MyRomannum{1} and \MyRomannum{2} error rates along with their sum for the Empirical Quantile method compared to a well-crafted threshold.}
    \label{Rebuttal Fig2}
  \end{minipage}
\end{figure}

To address potential coverage concerns arising from the use of uniformly sampled skill scores in our empirical quantile thresholding approach, we conduct an additional experiment to test the sensitivity of the threshold. In our main setup, scores are sampled uniformly from the interval $[-b, b]$, which may not fully reflect more structured or clustered configurations. To evaluate the impact of this assumption, we construct a clustered distribution by assigning half of the skill scores uniformly at random in the range $[-0.7, -0.4]$ and setting the remaining half to be their exact negatives, thereby introducing a symmetric bimodal structure. As shown in \cref{fig:cluster}, the empirical thresholds derived from the uniform and clustered settings are nearly identical.

\subsubsection{Additional Experiments on Real‐World Dataset} \label{sec:Additional Experiments on Real‐World Dataset}
In addition to our LMSYS dataset analysis, we apply our testing procedure to historical NBA match outcomes using the publicly available dataset from Kaggle \cite{nba_kaggle}. We focus on games played until the $2022$ season and restrict attention to the $12$ teams with the highest number of games played since $2002$. Each data point corresponds to a comparison between a home and an away team, and we evaluate our hypothesis test on cumulative game data aggregated over a rolling window of $t$ recent years. As shown in \cref{fig:nba}, for small values of $t$ (approximately the first $10$ seasons), the data is consistent with the BTL and Thurstone models. However, as $t$ increases, the hypothesis is increasingly rejected, suggesting that a single latent skill score per team fails to adequately represent team strength over long horizons of time.
\begin{figure}
    \centering
    \begin{minipage}{0.48\textwidth}
        \centering
        \includegraphics[width=\textwidth]{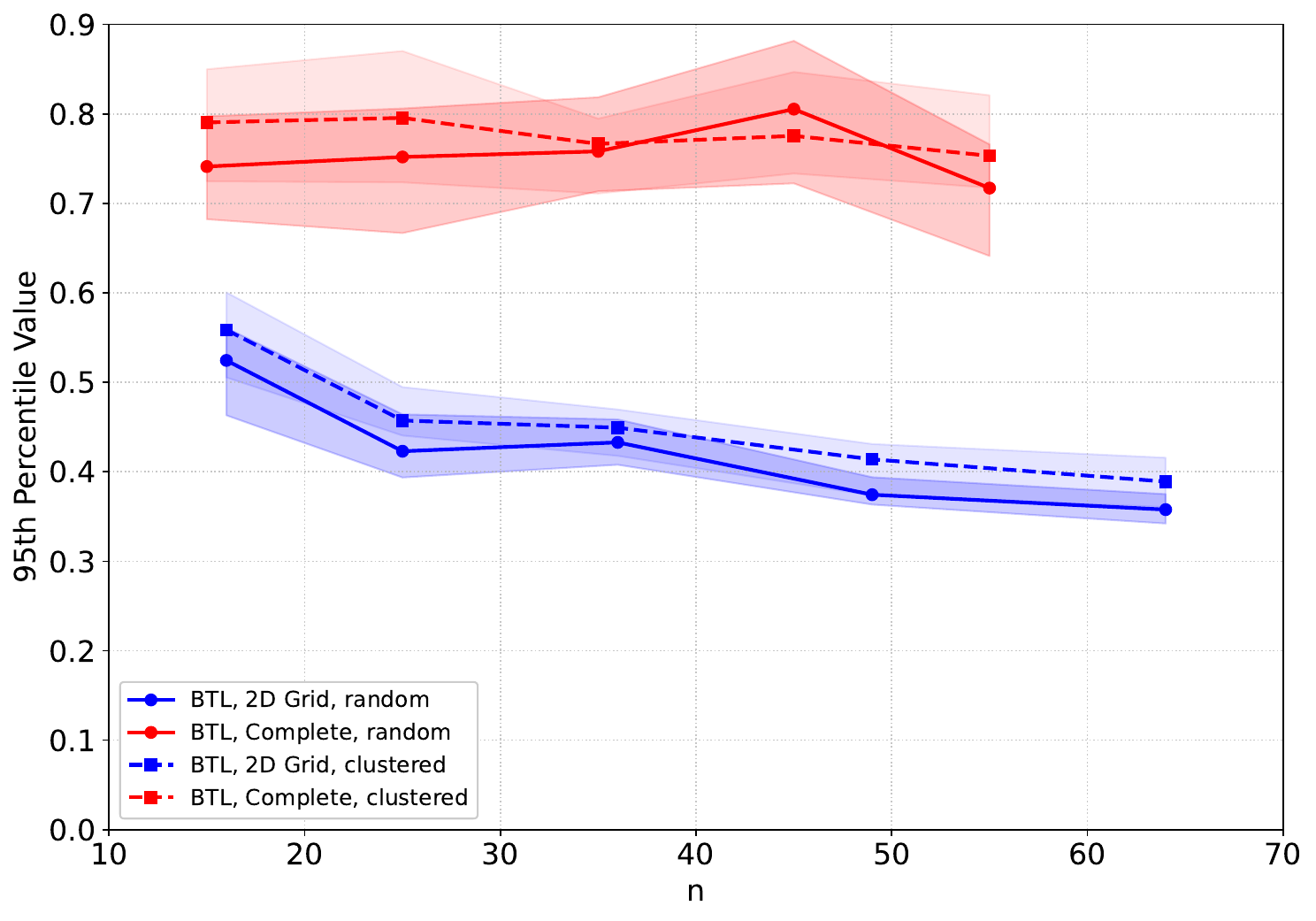}
        
        \caption{Threshold computation using random versus clustered skill scores.}
        \label{fig:cluster}
    \end{minipage}
    \hfill
    \begin{minipage}{0.50\textwidth}
        \centering
        \vspace{3mm}
        \includegraphics[width=\textwidth]{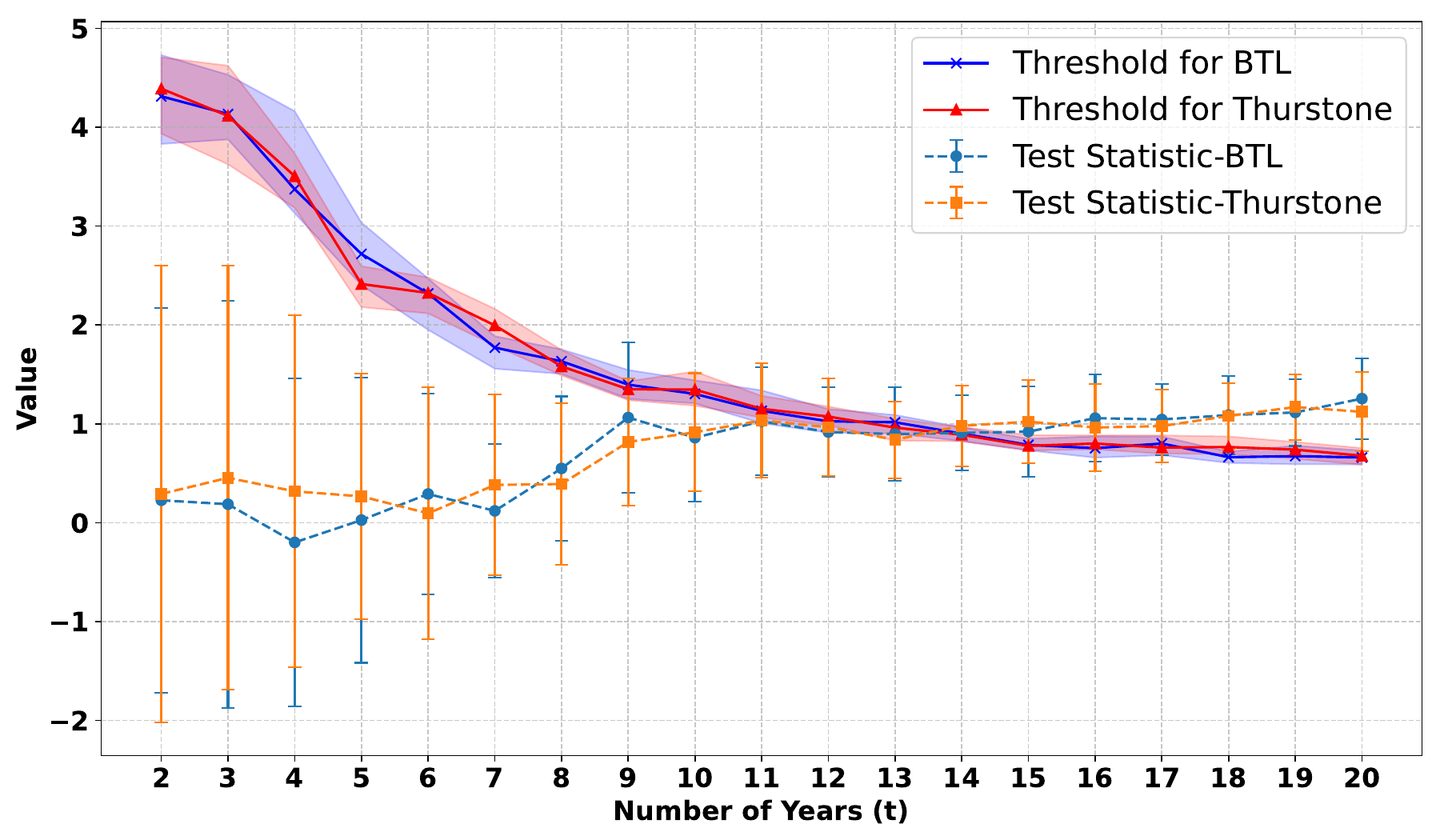}
        
        \caption{Scaled test statistic $n \cdot T$ for BTL and Thurstone models computed on NBA dataset on cumulative data of $t$ recent years and thresholds computed using empirical-quantile-based approach.}
        \label{fig:nba}
    \end{minipage}
\end{figure}

\subsection{Confidence Intervals under Null Hypothesis} \label{sec:Confidence Intervals Under Null Hypothesis}
In this subsection, we will discuss a method to approximately calculate the confidence intervals under the null hypothesis, with a focus on the BTL model. While our discussion is specific to the BTL model, it can be easily generalized to other Thurstone models. Specifically, our goal is to approximately calculate the constants in \cref{prop: Confidence Intervals} and as well as approximate the distribution of $\|\Fhat - \sF \|_\F$. For the former, we will estimate the constants by conducting some simulation, while for the latter, we will be utilizing Gaussian approximation based on the asymptotic normality of $\what - \ws$, which was proved for the BTL model (\citealp{UncertaintyQuantificationBTL}, Proposition 4.1). Similar results have been established for the general Thurstone models as in \cite{han2024ThurstoneNormality}. 

\textbf{Estimating constant $c_7$ in \cref{prop: Confidence Intervals}:} To estimate the constant, we plot several trajectories of the normalized stochastic process $ \frac{\sqrt{k}}{|\calV|}\sum_{v \in \calV} (\bar{x}_v^{(k)} - p_v \1_k)^{\T} A^{(k)} (\bar{x}_v^{(k)} - p_v \1_k)$  with $\bar{x}_v^{(k)}$ generated as in \cref{thm: Time Uniform Hanson-Wright inequality} and the $p_v$ selected uniformly at random from $(0,1)$. The values of $|\calV|$ varied from $10$ to $100$ with gaps of $10$. \cref{fig:exp31} plots the various trajectories of the (normalized) stochastic process as a function of $k$ and also plots the $95$th quantile for the stochastic process for all $k \in [100]$. The figure suggests that $c_7 \approx 0.45$ is a good approximation to the value of $c_7$ (in the fixed sample setting).

\begin{figure}[ht]
  \centering
  \begin{minipage}{0.44\textwidth}
    \centering
    \includegraphics[width=\textwidth]{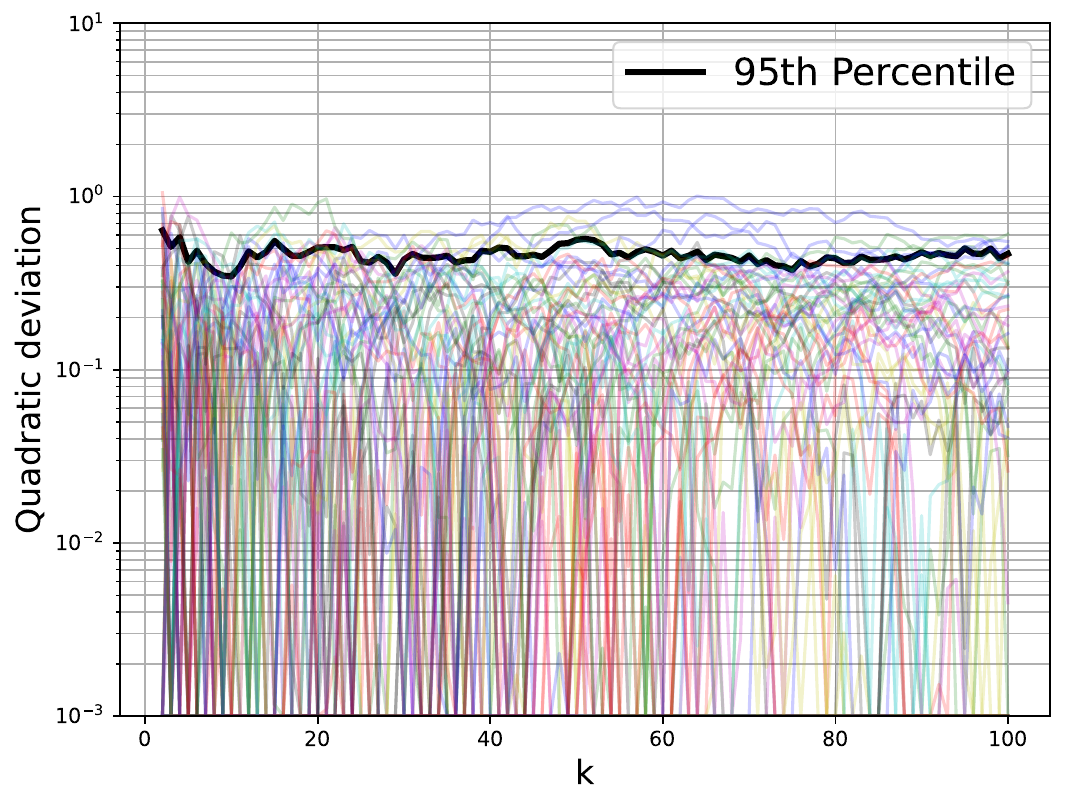}
    \caption{Plot of various trajectories of stochastic process in \cref{thm: Time Uniform Hanson-Wright inequality}.}
     \label{fig:exp31}
  \end{minipage}
  \hspace{5.5mm}
  \begin{minipage}{0.45\textwidth}
    \centering
    \includegraphics[width=\textwidth]{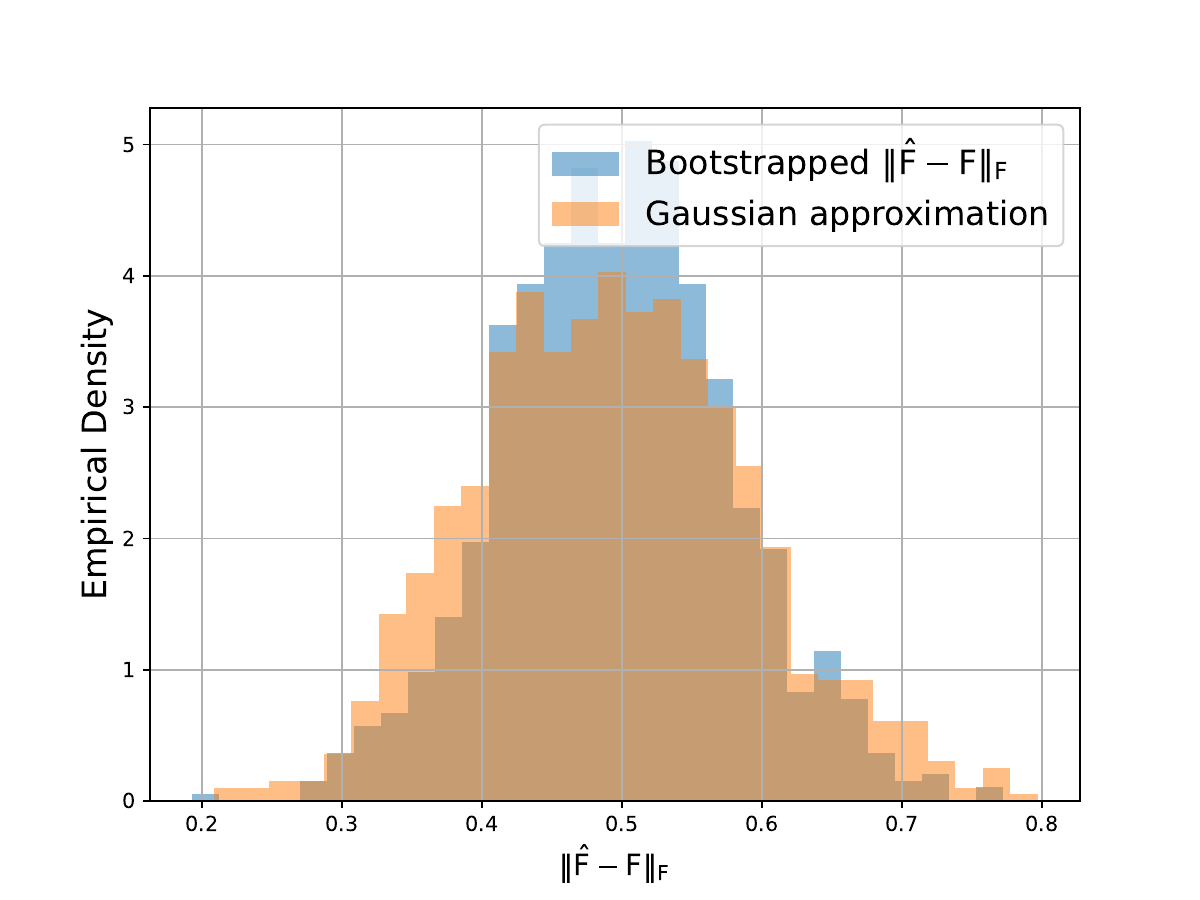}
    \caption{Histogram of $\|\sF-\Fhat \|_\F^2$ based on bootstrapping versus asymptotic approximation.}
\label{fig:exp32}  
  \end{minipage}
    \hspace{2.5mm}
  \begin{minipage}{0.45\textwidth}
    \centering
    \includegraphics[width=\textwidth]{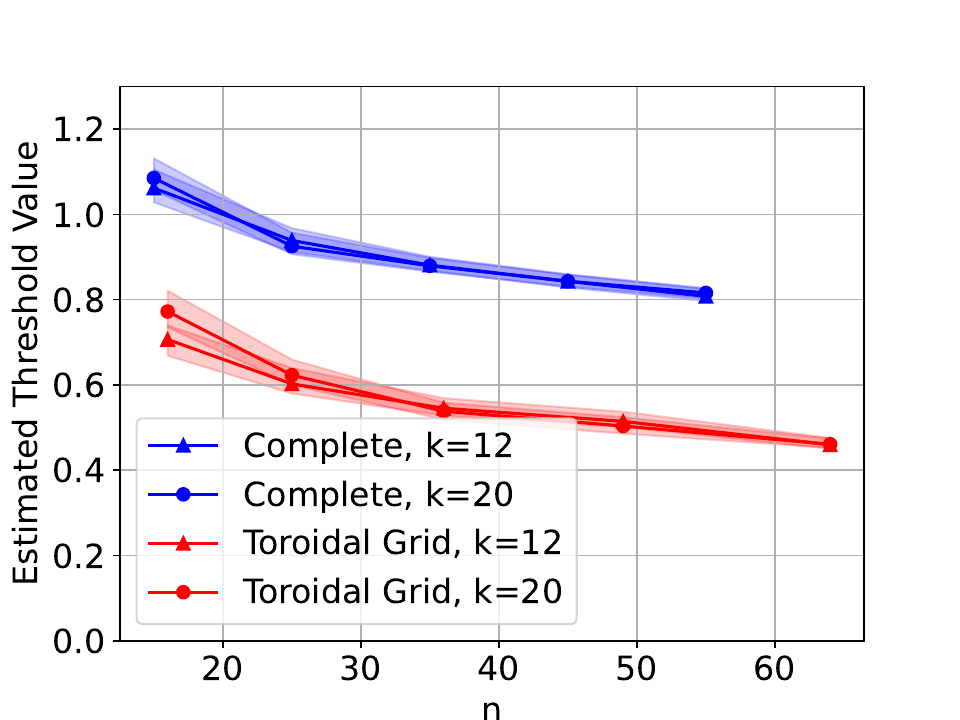}
    \caption{Estimated threshold based on \cref{prop: Confidence Intervals}.}
\label{fig:exp33}  
  \end{minipage}
\end{figure}

\textbf{Estimating the quantile of $\|\sF - \Fhat\|_\F$:} In order to estimate $\|\sF - \Fhat\|_\F$, we will utilize the asymptotic normality of vector $\Delta =\hat{w}-w^{*}$. Let $\what$ be computed as in \cref{what definition}, then under mild regularity conditions, it was shown in (\citealp{UncertaintyQuantificationBTL}, Proposition 4.1) that 
\begin{equation*}
\left(\rho_{1}(\hat{{w}})\left(\hat{{w}}_{{1}}-{w}_{{1}}^{{*}}\right), \ldots, \rho_{n}(\hat{w})\left(\hat{w}_{n}-\hat{w}_{n}^{*}\right)\right) \stackrel{d}{\rightarrow} \mathcal{N} \left(0, I_{k}\right) ,
\end{equation*}
where $ \rho_{i}(\hat{w})=\sqrt{k  \sum_{j:(i, j) \in \calE} F^{\prime}\left(\hat{w_{i}}-\hat{w}_{j}\right)}$ and $\stackrel{d}{\rightarrow}$ denotes the convergence in distribution. We will utilize this asymptotic normality result to approximate the distribution of $\|\Fhat - \sF\|_\F$ using the delta method as:
$$
\begin{aligned}
\|\sF-\Fhat\|_{\F}^{2} & =\sum_{(i, j) \in \calE}\left(F\left(w_{i}^{*}-w_{j}^{*}\right) - F\left(\hat{w}_{i}-\hat{w}_{j}\right)\right)^{2} \\
& \approx \sum_{(i, j) \in \calE}F^{\prime}\left(\hat{w}_{i}-\hat{w}_{j}\right)^2 \left(\left(w_{i}^{*}-\hat{w}_{i}\right)-\left(w_{j}^{*} - \hat{w}_{j}\right)\right)^{2} \\
& =\sum_{(i, j)} F^{\prime}\left(\hat{w}_{i}-\hat{w}_{j}\right)^{2}\left(\Delta_{i}-\Delta j\right)^{2}  =2 \Delta ^{\T} L_{F}(\hat{w}) \Delta,
\end{aligned}
$$
where we define $L_F(w)$ to be the following matrix 
$$
\left(L_{F}(w)\right)_{i j}=\left\{\begin{array}{cc}
-F^{\prime}\left(w_{i}-w_{j}\right)^{2} & \text{ if } (i, j) \in \calE \\
\sum_{j:(i, j) \in \calE} F^{\prime}\left(w_{i}-w_{j}\right)^{2} & \text{ if } i=j \\
0 & \text { otherwise. }
\end{array}\right.
$$
Since $\Delta$ is asymptotically normal (as $k \rightarrow \infty$), therefore we approximate the distribution of $\|\sF-\Fhat\|_{\F}^{2}$ with distribution of $2{\Delta}^{\T}(\hat{w}) L_{F}(\hat{w}) {\Delta}(\hat{w})$ where $ \Delta_{i}(\hat{w}) \sim \mathcal{N}\left(0, \frac{1}{k \sum_{j:(i,j) \in \calE} F^{\prime}\left(\hat{w}_{i}-\hat{w}_{j}\right)}\right). $ In \cref{fig:exp32}, we plot the empirical distribution of $\left\|F(\hat{w})-F\left(w^{*}\right)\right\|_{F}$ calculated by randomizing over the choice of partitioning of $\Z$ into $\Z_{1}$ and $\Z_{2}$. We also plot its asymptotic approximation, i.e., the empirical distribution of $2{\Delta}^{\T}(\hat{w}) L_{F}(\hat{w}) {\Delta}(\hat{w})$. Clearly, as can be seen in \cref{fig:exp32}, our asymptotic approximation does indeed well approximate the empirical distribution even for a small number of samples. Finally, based on the the 95th percentile of the empirical distribution of $2{\Delta}^{\T}(\hat{w}) L_{F}(\hat{w}) {\Delta}(\hat{w})$, we compute the expression for $c_7 = 0.45$ and plot the estimated confidence intervals in \cref{fig:exp33}. It is worth noting that the estimated threshold values for complete graphs converge towards the theoretical value of $0.8$, while those for toroidal grids approach $0.4$. These findings are consistent with the threshold values computed via the empirical quantile method presented in \cref{fig:download1} for the respective graph topologies. Together, these observations suggest that while the exact distribution of $T$ is difficult to characterize, its tail can be asymptotically approximated using a quadratic function of Gaussian random variables via \cref{prop: Confidence Intervals}.

\section{Testing for $\GTM$ Models in TV and Spectral Norms}\label{appendix:norms}
In \cref{sec: Formal Model and Setup}, we framed the hypothesis testing problem using the Frobenius norm due to its analytical tractability, particularly in the context of maximum-likelihood-type estimators. There is also significant precedent for employing quadratic distance measures in classical statistical testing\textemdash for instance, the $\chi^2$-test essentially utilizes a squared weighted $\ell_2$-distance. However, in some applications, alternative notions of separation may be more desirable. One such example is the TV distance, which has appealing properties, such as Le Cam's relation and the data processing inequality (cf. \cite{makur2019,MakurZheng2020}).

Recall that in \cref{H1 class}, we defined the separation distance from the class of $\GTM$ models using the Frobenius norm. Alternatively, for any pairwise comparison matrix $P \in [0,1]^{n \times n}$, we can define the TV separation distance from the model class $\GTM$ as
\begin{equation*}
\TV(P, \mathcal{T}_F) \triangleq \inf_{w \in \mathcal{W}_b} \frac{1}{|\mathcal{E}|} \sum_{(i,j) \in \mathcal{E}} \left|p_{ij} - F(w_i - w_j)\right|.
\end{equation*}
Using $\TV(P, \mathcal{T}_F) \geq \epsilon$ in \cref{H1 class} leads to an equivalent hypothesis testing problem, but with the separation measured in TV distance. Utilizing standard norm-equivalence inequalities, such as $\|x\|_1 \leq \sqrt{n} \|x\|_2$ for $x \in \R^n$, we can relate this to the Frobenius norm as
\begin{equation*}
\TV(P, \mathcal{T}_F) \leq \inf_{w \in \mathcal{W}_b} \frac{1}{\sqrt{|\mathcal{E}|}} \|P - F(w)\|_\F.
\end{equation*}
Therefore, if the TV separation satisfies $\TV(P, \mathcal{T}_F) \geq \epsilon$, it follows that
\begin{equation} \label{lower bound eq norm equiv}
\inf_{w \in \mathcal{W}_b} \frac{1}{n} \|P - F(w)\|_\F \geq \frac{\epsilon}{n} \sqrt{|\mathcal{E}|}.
\end{equation}
Recall from the minimax formulation in \cref{Hypothesis Test} that our test distinguishes between the null and alternative hypotheses with minimax risk at most $1/2$, provided the Frobenius separation exceeds $c/\sqrt{nk}$ (see \cref{thm: Upper Bound on Critical Threshold}). Utilizing the inequality \cref{lower bound eq norm equiv}, this implies that our test retains minimax risk at most $1/2$ whenever
\begin{equation*}
\frac{\epsilon}{n} \sqrt{|\mathcal{E}|} \geq \frac{c}{\sqrt{nk}},
\end{equation*}
which implies that the TV separation distance $\TV(P, \GTM) \geq c\sqrt{n/(|\mathcal{E}|k)}$ is sufficient for testing (yielding an upper bound on critical threshold for TV distance). Notably, for complete graphs, this reduces to the critical threshold of $ O(1/\sqrt{nk})$, which also matches the lower bound established by \cite{Seshadri2020} for the same notion of TV separation distance.

Finally, similar arguments allow our upper bounds to be extended to the spectral norm, using the inequality $\|A\|_2 \leq \|A\|_\F$ for any matrix $A$. Consequently, our test also guarantees a small minimax risk under spectral norm separation. However, obtaining corresponding lower bounds under the spectral norm remains an open problem.
\end{document}